\documentclass[11pt,a4paper]{article}

\usepackage[utf8]{inputenc}
\usepackage[T1]{fontenc}
\usepackage{microtype}
\usepackage{graphicx}
\usepackage{subcaption}
\usepackage{booktabs} 
\usepackage[margin=1in]{geometry}

\usepackage{hyperref}
\hypersetup{
  colorlinks=true,
  linkcolor=blue,
  citecolor=blue,
  urlcolor=blue
}

\usepackage{amsmath}
\usepackage{amssymb}
\usepackage{mathtools}
\usepackage{amsthm}

\usepackage{bbm}
\def\mb{\ensuremath\mathbb}
\def\mc{\ensuremath\mathcal}

\DeclareMathOperator{\Cov}{\textnormal{Cov}}

\usepackage{algorithm}
\usepackage{algorithmic}

\usepackage[capitalize,noabbrev]{cleveref}

\newcommand{\Idd}{\textnormal{I}_d}

\theoremstyle{plain}
\newtheorem{theorem}{Theorem}[section]

\newtheorem{lemma}[theorem]{Lemma}
\newtheorem{corollary}[theorem]{Corollary}
\theoremstyle{definition}

\newtheorem{assumption}[theorem]{Assumption}
\theoremstyle{remark}
\newtheorem{remark}[theorem]{Remark}

\usepackage[textsize=tiny]{todonotes}

\title{Total Variation Guarantees for Sampling with Stochastic Localization}

\author{
  Jakob Kellermann\\[6pt]
  \textit{Weierstrass Institute Berlin} \\
  [4pt]
  Correspondence: \texttt{jakob.kellermann@wias-berlin.de}
}

\date{March 31, 2026}

\begin{document}

\maketitle

\begin{abstract}
  Motivated by the success of score-based generative models, a number of diffusion-based algorithms have recently been proposed for the problem of sampling from a probability measure whose unnormalized density can be accessed. Among them, Grenioux et al.\ introduced SLIPS \cite{SLIPS}, a sampling algorithm based on Stochastic Localization. While SLIPS exhibits strong empirical performance, no rigorous convergence analysis has previously been provided. In this work, we close this gap by establishing the first guarantee for SLIPS in total variation distance. Under minimal assumptions on the target, our bound implies that the number of steps required to achieve an $\varepsilon$-guarantee scales linearly with the dimension, up to logarithmic factors.
  The analysis leverages techniques from the theory of score-based generative models and further provides theoretical insights into the empirically observed optimal choice of discretization points.
\end{abstract}

\section{Introduction}
In this work, we consider the problem of sampling from a probability measure $\pi$ whose unnormalized density is accessible, namely $\pi(x)\propto e^{-f(x)}$ for some $f:\mb{R}^d\to \mb{R}$ that we can evaluate. 
This problem is of interest to many scientific domains, among them Bayesian statistics \cite{robert2004monte}, \cite{kroese2011handbook}, statistical physics \cite{krauth2006statistical}, molecular dynamics \cite{stoltz2010free}, and generative modeling \cite{turner2019metropolis}, \cite{grenioux2023balanced}. 
In this work, we establish theoretical guarantees for a novel sampling algorithm, \textit{Stochastic Localization via Iterative Posterior Sampling} (SLIPS), which was introduced recently by Grenioux, Noble, Durmus, and Gabri\'e in \cite{SLIPS}.
\vspace{2mm}\\A number of algorithmic approaches have been developed for the sampling problem, among them prominently the Markov chain Monte Carlo (MCMC) method. We recall that in MCMC, one aims to sample from $\pi$ by simulating a Markov chain whose invariant measure is $\pi$, at least approximately, for long enough time. The approximate sample is then given by the final iterate of the chain. While these approaches provably perform well for distributions fulfilling certain assumptions, for example log-concavity \cite{chewi2024logconcave}, they have poor performance for a large number of practically relevant classes of distributions. One highly relevant example is the class of probability measures whose density functions have several local maximizers that are well separated in space. One refers to the maximizers as \textit{modes} and, hence, to the distributions as \textit{multi-modal}. Notably, this class includes Gaussian mixture models with well separated means \cite{mixture_models_book}. Local MCMC methods fail to sample from such distributions due to the phenomenon of mode collapse, e.g.\ within polynomial time the chain does not manage to explore the regions surrounding all modes but focuses only on a single one \cite{grenioux2025improving}.
In the past, annealing approaches have been developed that aim to tackle the problem of sampling from a target by reducing it to a collection of simpler sampling problems. These can then be solved either sequentially or in parallel, for example using MCMC \cite{replica_MC}, \cite{Geyer1991MCMCML}, \cite{sequential_MC}. Such algorithms can in principle sample from multi-modal distributions. 
Though, in high dimensions and in the presence of sufficiently strong mode separation, also algorithms of this type fail \cite{grenioux2025improving}.
\vspace{2mm}\\An alternative to Monte Carlo based sampling is Variational Inference (VI) \cite{book_variational_inference}. In VI, one considers a parametrized family of distributions $(\pi_{\theta})_{\theta\in I}$ from which one can sample efficiently. 
Here, $I$ is some index set, for example $I=\mb{R}^d$, and $(\pi_{\theta})_{\theta\in I}$ is commonly referred to as the variational family.
The goal is then to identify a parameter $\theta\in I$ for which $\pi_\theta\approx \pi$. To do so, one optimizes a variational objective between $\pi_{\theta}$ and $\pi$, often the KL divergence. 
Hence, one turns the sampling problem into an optimization problem. The main computational effort lies in learning $\theta$ with $\pi_\theta\approx \pi$. Once $\theta$ is learned, sampling comes with minimal computational effort, since we assume that $\pi_\theta$ is easy to sample. One refers to such methods as \textit{amortized sampling} approaches.
\vspace{2mm}\\There is a number of VI approaches that perform well in high dimensions, for example VI via normalizing flows \cite{normalizing_flows_var_inf} or variational auto encoders \cite{var_autoencoders}, \cite{kingma2014auto}.
Though, in the case of multi-modal targets, VI methods are known to suffer often from mode collapse, even when the variational family is able to capture multi-modal distributions. This is due to the KL optimization objective guiding towards uni-modal distributions \cite{var_inf_paper_1}, \cite{var_inf_paper_2}. Recently, Gabri\'e et al.\ provided some theoretical understanding of this phenomenon \cite{var_inf_mode_collapse}.
As for Monte Carlo sampling, there are approaches relying on annealing strategies for VI, see \cite{annealed_VI_1_wu2020stochastic}, \cite{annealed_VI_2_arbel2021annealed} and \cite{annealed_VI_3_doucet2022score}. These approaches improve on the performance of classical VI methods, but also struggle with substantial multi-modality and high dimensions, as annealed Monte Carlo samplers do.
\vspace{2mm}\\Different to the setting where one can access the unnormalized density function but has no samples a priori, in practice one often faces the opposite challenge: There is lots of data, theoretically modeled as samples from a probability measure, but one has no access to the density. The problem of generating new samples given the data is referred to as \textit{generative modeling}.
Recently, state of the art performance in generative modeling has been achieved with diffusion based models \cite{sohl-dickstein15},
\cite{song_ermon}, \cite{denoising_diffusion},
\cite{GM_song2021scorebased}. These models are referred to as score-based generative models (SGMs), due to the score of the diffusion's marginal distribution appearing in the drift of its SDE description. This score term is intractable and estimating it via a neural network using the data at hand is the main challenge of score-based generative modeling.
Under the assumption of access to a good score estimator, recently there have also been established rigorous theoretical guarantees for SGMs, giving insight into the strong performance observed in practice \cite{sgm_lee}, \cite{sgm_lee_2}, \cite{sgm_chen}, \cite{Sampling_is_as_easy_as_learning_the_score_chen2023sampling}, \cite{sgm_li}, \cite{benton2024nearly}, \cite{ConfortiDurmusSilveri2025}.
\vspace{2mm}\\Motivated by the success of SGMs, several new diffusion based sampling approaches have been proposed for the problem of sampling from an unnormalized density. 
These can be separated broadly in two classes, that of diffusion samplers based on Monte Carlo \cite{SLIPS}, \cite{huang2024reverse}, \cite{he2024zerothorder} and that based on Variational Inference, see e.g.\ \cite{path_integral_sampler}, \cite{denoising_diffusion_samplers_vargas2023}, \cite{optimal_control_berner2024}, \cite{richter2024improved}. The essential difference is that in the former case the drift term of the diffusion's SDE is estimated in each iteration via Monte Carlo methods, whereas in the latter case it is learned using Variational Inference techniques involving a neural network.  The SLIPS algorithm belongs to the former class.
When properly initialized, such algorithms have powerful performance in numerical experiments, outperforming virtually all competitors 
\cite{noble_lrds},
\cite{grenioux2025improving}.
Though, different to classical methods such as MCMC, they currently lack a thorough mathematical convergence analysis. 

\paragraph{Contributions.}
We apply the techniques developed for the study of SGMs to the Stochastic Localization based SLIPS sampling algorithm proposed in \cite{SLIPS}, giving novel convergence guarantees in total variation (TV) distance with linear dimension dependence under weak assumptions on the target. Further, based on our TV estimates, we provide theoretical insights into the optimal empirical performance of the log-SNR adapted choice of discretization.

\paragraph{Outline.} In Section \ref{section_SLIPS}, we give an introduction to Stochastic Localization, present the SLIPS algorithm as introduced in \cite{SLIPS}, and explain how it relates to the default SGM. Section \ref{section_TV} contains our main result, a TV guarantee for SLIPS. We also provide an overview on the proof structure. The results on the optimality of the log-SNR adapted discretization in the TV guarantee for SLIPS are the content of Section \ref{section_discretization}. 
We compare our guarantee derived for SLIPS with a guarantee for a conceptually similar diffusion-based sampler introduced in \cite{huang2024reverse} in Section \ref{section_slips_rdmc}.
Finally, we draw conclusions in Section \ref{section_conclusion}. The Appendix contains the proofs of our results and additional supplementary material.

\paragraph{Notation.} We write $\mc{P}(\mb{R}^d)$ for the space of Borel probability measures on $\mb{R}^d$. For $p\geq 1$ we 
write $\mc{P}_p(\mb{R}^d)$ for the subset of measures fulfilling $\int_{\mb{R}^d}\,\|x\|^p\,\pi(dx)<\infty$. For $x,m\in \mb{R}^d$ and $\sigma^2>0$, we denote by $\mc{N}(x,m,\sigma^2)$ the density of $\mc{N}(m,\sigma^2)$ evaluated in $x$, where we write here and in the following by an abuse of notation $\mc{N}(m,\sigma^2)$ instead of $\mc{N}(m,\sigma^2\cdot \textnormal{I}_d)$.

\section{SLIPS sampling algorithm}\label{section_SLIPS}
We discuss the concept of Stochastic Localization and present the SLIPS algorithm introduced in \cite{SLIPS}. We also discuss its relation to generative modeling.

\subsection{Stochastic Localization}\label{section_sub_SL}
Stochastic Localization is a concept that has its origin in geometric measure theory \cite{Eldan2013}, \cite{Chen2021KLS}, \cite{eldan_pathwise}. It became relevant to probability first as a tool for proving mixing time estimates for Markov chains \cite{SL_chen_eldan} and later as an approach to sampling itself \cite{montanari_SK_model_2022}, \cite{montanari2023samplingdiffusionsstochasticlocalization}, \cite{montanari2024posteriorsamplinghighdimension}. The core idea of Stochastic Localization for sampling as outlined in the blueprint \cite{montanari2023samplingdiffusionsstochasticlocalization} is to construct a process $Y=(Y_{t})_{t\in [0,\infty)}$ that becomes increasingly informative about an underlying random variable $X\sim \pi$, with perfect information in the limit. Further, one requires the information to be, heuristically speaking, stable across time. 
The choice of the time interval $[0,\infty)$ is done here for simplicity.
\vspace{2mm}\\More precisely, one asks the pair $X,Y$ to fulfill the following two properties.
\begin{enumerate}
    \item The posterior measure process $\big(\mb{P}(X\in \cdot \,|Y_t)\big)_{t\in [0,\infty)}$ converges almost surely weakly to $\delta_{X\in \cdot}$ in the limit of $t\to \infty$.
    \item For all $A\in \mc{B}(\mb{R}^d)$ the process $\big(\mb{P}(X\in A|Y_t)\big)_{t\in [0,\infty)}$ is a martingale.
\end{enumerate}
The first requirement corresponds to perfect asymptotic information, whereas the second corresponds to stable information.
\vspace{2mm}\\While often one has additional to the first property that some rescaling of $Y$ converges towards $X$, in general this is not necessary and $Y$ and $X$ might not even take values in the same space, see Section 4 in \cite{montanari2023samplingdiffusionsstochasticlocalization}.

\subsection{SLIPS: The standard case}\label{section_sub_slips_standard}
Let $X\sim \pi$, $B$ an independent Brownian motion, and $\sigma>0$. In SLIPS, Grenioux et al.\ propose using the \textit{Stochastic Localization observation process} defined through
\begin{align}
    (Y_t)_{t\geq 0} \coloneqq (tX+\sigma B_t)_{t\geq 0}.
\end{align}
This choice is motivated by the fact that $Y$ localizes on $X$, e.g.\ almost surely $\lim_{t\to \infty}Y_t/t= X$. Further, the pair $X,Y$ fulfills the properties demanded in Section \ref{section_sub_SL}. Provided it is feasible to simulate $Y$ up to some sufficiently large time $T>0$, an approximate sample from $\pi$ can be obtained by means of $Y_T/T$.
\vspace{2mm}\\In the following, for $t\geq 0$ we write $p_t\coloneqq \textnormal{Law}(Y_t)$ and note that for $y\in \mb{R}^d$ we have
\begin{align}\label{eq_p}
    p_t(y) = \int_{\mb{R}^d} \mc{N}(y,tx,t\sigma^2)\,\pi(x)\,dx.
\end{align}
Hence, both computing $p_t(y)$ exactly as well as only up to the normalization constant is infeasible.
Let us define for $(x,y)\in \mb{R}^d\times \mb{R}^d$ and $t>0$ the family of posterior densities of $X=x$ given $Y=y$ by
\begin{align}\label{eq_def_posterior}
    q_t(x|y)\coloneqq  \frac{\mc{N}(y,tx,t\sigma^2) \pi(x)}{p_t(y)}=\mb{P}(X=x|Y_t=y).
\end{align}
The associated posterior expectation, also referred to as the \textit{optimal denoiser} is defined by
\begin{align}\label{eq_posterior_expectation}
    u_t(y)\coloneqq\int_{\mb{R}^d}x\,q_t(dx|y)
    =\mb{E}[X|Y_t=y].
\end{align}
Provided $\pi \in \mc{P}_1(\mb{R}^d)$, the SL observation process $Y$ solves the SDE
\begin{align}\label{eq_SDE_Y}
    dY_t=u_t(Y_t)\,dt+\sigma\,d\bar{B}_t,
\end{align}
where $\bar{B}$ is a Brownian motion different from $B$, see Corollary 11 in \cite{SLIPS}. Provided $\pi\in \mc{P}_2(\mb{R}^d)$, the process $Y$ is also the unique solution to \eqref{eq_SDE_Y}, see Corollary 13 in \cite{SLIPS}.
Simulation of $Y$ can then be achieved by discretizing \eqref{eq_SDE_Y} using Euler Maruyama (EM). Consider discretization points $0\leq t_0<t_1<\dots<t_K$. Setting $\delta_k=t_{k+1}-t_k$, recall that the EM recursion reads
\begin{align}\label{eq_EM_recursion}
    \widetilde{Y}_{t_{k+1}} & = \widetilde{Y}_{t_k}+ \delta_k \cdot u_{t_k}(\widetilde{Y}_{t_k}) + \sigma \sqrt{\delta_k}\cdot\mc{G}_{k},
\end{align}
where $(\mc{G}_{k})_{0\leq k\leq K-1}$ is a family of i.i.d.\ standard Gaussians taken independently of the initialization $\widetilde{Y}_{t_0}\sim p_{t_0}$. 
\vspace{2mm}\\The caveat now lies in the fact that to run the recursion \eqref{eq_EM_recursion}, one needs to compute the drift $u_{t_k}(\widetilde{Y}_{t_k})$ in each step. In our setup, we thus have to compute the expectation \eqref{eq_posterior_expectation}, which is not available in closed form. Hence, we have to approximate it. 
\vspace{2mm}\\Grenioux et al.\ propose to do so by generating in the $k$-th step of the discretization $M$ samples $\{X^k_j\}_{1\leq j \leq M}$ from the posterior measure $q_{t_k}(\cdot|\widetilde{Y}_{t_k})$ by simulating a single Markov chain. They propose using the Metropolis adjusted Langevin algorithm (MALA) \cite{roberts_tweedie_langevin}. Their estimate of $u_{t_k}(\widetilde{Y}_{t_k})$ is the empirical average of the samples,
\begin{align}
    \hat{u}_{t_k}(\widetilde{Y}_{t_k}) = \frac{1}{M} \sum_{1\leq j \leq M} X^k_j.
\end{align}
To obtain a good estimate, it is thus essential that sampling $q_{t_k}(\cdot|\widetilde{Y}_{t_k})$ via MCMC is feasible. 
As is evident from \eqref{eq_def_posterior}, for small $t$ the contribution of $\pi$ to the posterior dominates, while for large $t$ the Gaussian contribution does. In particular, since we are primarily interested in settings where sampling from $\pi$ directly is infeasible, sampling the posterior is possible only for $t$ large enough. Therefore, one has to choose $t_0> 0$ sufficiently large. 
\vspace{2mm}\\This gives rise to the second challenge, sampling the initialization $p_{t_0}$. Since $Y_{t_0}=t_0 X+\sigma B_{t_0}$, the Gaussian contribution dominates $p_{t_0}$ only for $t_0$ small enough. Hence, only then we can expect that sampling $p_{t_0}$ via MCMC methods is feasible.
\vspace{2mm}\\We would like to apply the Unadjusted Langevin Algorithm (ULA) \cite{roberts_tweedie_langevin} to sample from $p_{t_0}$. To do so, we have to compute in each step the score of the distribution, namely $\nabla \log p_{t_0}$. As can be seen from \eqref{eq_p}, this quantity is in general not accessible. Though, the score and optimal denoiser fulfill a relation referred to as \textit{Tweedie's formula}, which reads
\begin{align}\label{eq_tweedie}
    u_t(y)=y/t + \sigma^2 \,\nabla \log p_t(y)
\end{align}
and follows by a basic computation, see Lemma 4 in \cite{SLIPS}.
Grenioux et al.\ propose estimating the score in each iteration of ULA by estimating the optimal denoiser, as is done in the discretization of \eqref{eq_SDE_Y}, and to turn this in a score estimate by exploiting \eqref{eq_tweedie}. Since in each step of applying ULA one has to apply MALA, the authors coin the resulting algorithm Langevin-within-Langevin initialization, which is stated as Algorithm \ref{alg:langevin_within_langevin}.
\vspace{2mm}\\Observe that now there is a trade-off in choosing $t_0$: Since estimation of the drift involves sampling from the posterior, it requires taking $t_0$ large enough, while sampling the initialization requires taking $t_0$ small enough. Grenioux et al.\ coin this trade-off the \textit{duality of log-concavity}. This is due to the fact that the mentioned dominance of the Gaussian contributions leads to the involved measures being log-concave, ensuring that they are feasible to sample. They prove, under a restrictive condition on $\pi$, that $t_0$ can be chosen indeed in such a way that all involved measures are log-concave, see Theorem 3 in \cite{SLIPS}. While the condition is restrictive, Grenioux et al.\ demonstrate numerically that SLIPS works well beyond the theoretically covered setup. 
\vspace{2mm}\\Still, the numerical experiments also demonstrate that the performance of SLIPS has a $v$-shape dependence on the choice of $t_0$, i.e.\ there is a sweet spot for it. Notably, the dependence is strong, so for too large or too small $t_0$ the algorithm fails to sample from $\pi$. Further, in practice, the higher the dimension is, the smaller one has to choose $t_0$. Importantly, if $t_0$ is chosen appropriately, the SLIPS algorithm performs very well on highly multi-modal sampling problems, outperforming most competitors \cite{SLIPS}, \cite{grenioux2025improving}. 
\vspace{2mm}\\In practice, for the posterior expectation estimation in the SDE discretization, a quite small number of MALA iterations is sufficient ($\ll 100$). The reason for that is twofold. First, the posterior sampling problem becomes increasingly simple for later iterations due to the Gaussian becoming more dominant, see \eqref{eq_def_posterior}. Second, the initialization of the MALA chain for the $k+1$-st posterior expectation estimation problem is chosen as the last iterate of the MALA chain used in the previous step. Heuristically, as the posterior measures of the $k$-th and the $k+1$-st step should look very similar, this can be thought of as a warm start.  

\subsection{SLIPS: General denoising schedules}\label{section_sub_slips_general} 
The key property to utilize $Y_t$ for sampling is that a.s.\
$\lim_{t\to \infty} Y_t/t =X$.
Hence, it is natural to consider replacing the $t$ scaling of $X$ in $Y$ by a more general \textit{denoising schedule} $\alpha:[0,T_{\textnormal{gen}})\to \mb{R}_+$, where $T_{\textnormal{gen}}\in (0,\infty]$. While for the exact definition we refer to Section 3.1 in \cite{SLIPS}, the key properties that one asks for are that $\alpha$ is increasing, fulfills $\alpha(0)=0$, and is chosen such that almost surely
\begin{align}
    \lim_{t\to T_{\textnormal{gen}}} Y^\alpha_t/\alpha(t)=X,
\end{align}
where $(Y_t^\alpha)_{0\leq t <T_{\textnormal{gen}}} \coloneqq (\alpha(t) X + \sigma B_t)_{0\leq t <T_{\textnormal{gen}}}$. 
To sample using $Y^\alpha$, one wishes to identify its SDE description. Given the SDE \eqref{eq_SDE_Y} for the standard case and since trivially 
\begin{align}
    dY^\alpha_t = \alpha'(t)\, X\,dt+ \sigma \,dB_t,    
\end{align}
it is a priori natural to expect that, provided $\pi\in \mc{P}_1(\mb{R}^d)$, one has
\begin{align}\label{eq_SDE_general}
    dY^\alpha_t = \alpha'(t)\cdot u_t^\alpha (Y_t^\alpha)\,dt+ \sigma \,d\bar{B}_t,
\end{align}
where $u_t^\alpha (Y_t^\alpha)\coloneqq \mb{E}[X|Y_t^\alpha]$ and $\bar{B}$ is a BM different to $B$. It turns out that this is in fact \textit{not} the case. 
Even though the process $Y^\alpha$ does solve an SDE, the drift term depends not only on $Y^\alpha_t$, but on the whole trajectory $(Y^\alpha_s)_{s\leq t}$,
so the process is not Markovian. 
\vspace{2mm}\\Still, due to results from \cite{brunick_shreve}, the SDE \eqref{eq_SDE_general} admits a solution $\widehat{Y}^\alpha$ whose marginals coincide with $Y^\alpha$, e.g.\ $\textnormal{Law}(\widehat{Y}_t^\alpha)=\textnormal{Law}(Y_t^\alpha)$ holds for all $0\leq t <T_{\textnormal{gen}}$.
Hence, the SDE \eqref{eq_SDE_general} can still be utilized for sampling and
simulation is then achieved identically to the standard case, as discussed in sections 3 and 4 of \cite{SLIPS}.
Since the derivation of the SDE descriptions of $Y^\alpha$ and $\widehat{Y}^\alpha$ are discussed only briefly in a recently  arXiv-updated version of \cite{SLIPS}, we discuss them in more detail in Appendix \ref{appendix_SDE_theory}.
\vspace{2mm}\\The remarkable observation made in \cite{SLIPS} is that the choice of denoising schedule has a negligible effect on the performance of SLIPS, provided the discretization is chosen adapted to it. To make a suitable choice,
Grenioux et al.\ use the signal-to-noise-ratio (SNR), which is defined as
\begin{align}
    \textnormal{SNR}^\alpha_t \coloneqq \frac{\mb{E}\big[\|\alpha(t)X-\mb{E}[\alpha(t)X]\|^2\big]}{\mb{E}[\|\sigma B_t\|^2]}
    \propto \frac{\alpha(t)^2}{t}.
\end{align}
Given $0<t_0<t_K$, they propose choosing the discretization $(t_k)_{0\leq k \leq K}$ between $t_0$ and $t_K$ that is uniquely defined by asking that for some $\Delta>0$
\begin{align}\label{eq_condition_SNR}
    \textnormal{log} \,\textnormal{SNR}^\alpha_{t_{k+1}}
    = \textnormal{log} \,\textnormal{SNR}^\alpha_{t_{k}}+\Delta,
\end{align}
which is referred to as the \textit{log-SNR adapted discretization}. 
Observe that this discretization yields a finer grid in time regions where the denoising procedure accelerates faster.
Notably, the log-SNR adapted discretization does not only lead to equivalent performance of different denoising schedules, for any given denoising schedule it is also the optimally performing discretization. Hence, it is chosen as the default discretization in SLIPS.
\vspace{2mm}\\Due to the experimentally observed equivalence, we restrict our theoretical analysis to the case of the standard SL observation process.
\vspace{2mm}\\Observe that, considering the standard denoising schedule $\alpha(t)=t$, the condition
\eqref{eq_condition_SNR}
translates into
\begin{align}
    t_{k+1} = t_{k} e^\Delta.
\end{align}
In particular, we thus have $\Delta = \frac{1}{K}\log \frac{t_K}{t_0}$ and $(t_k)_{0\leq k \leq K}=\big((\frac{t_K}{t_0})^{\frac{k}{K}}t_0\big)_{0\leq k \leq K}$.
Further, when comparing different denoising schedules, to ensure comparability, instead of giving the final integration time $T=t_K\in (t_0,T_{\textnormal{gen}})$ as input to SLIPS, one chooses a prespecified SNR level $\eta>0$. As we consider only SLIPS with standard denoising schedule $\alpha(t)=t$ in our analysis, we choose as input directly $T>0$.
\vspace{2mm}\\Regarding the choice of $\sigma$, to make the SNR independent of the target and to reduce the number of tuning parameters, Grenioux et al.\ propose to set
\begin{align}\label{eq_sigma}
    \sigma^2 \coloneqq \frac{R^2_{\pi}}{d},
\end{align}
where $R^2_{\pi}\coloneqq\mb{E}\big[\|X-\mb{E}[X]\|^2\big]$. Indeed, in that case $\textnormal{SNR}^\alpha_t =\alpha(t)^2/t$. If $R^2_{\pi}$ is not known, one can replace it with an estimate $\hat{R}^2_\pi$. For simplicity of exposition, we will assume in our analysis that we know $R^2_{\pi}$. Since \eqref{eq_sigma} makes sense only when $\pi \in \mc{P}_2(\mb{R}^d)$, we make the following assumption.
\begin{assumption}\label{assumption_moment}
    We have $\pi\in \mc{P}_2(\mb{R}^d)$.
\end{assumption}

The SLIPS algorithm with standard denoising schedule and log-SNR adapted discretization is stated as Algorithm \ref{alg:slips}.

\subsection{Stochastic Localization and SGM}
We now give some details on score-based generative models (SGMs) and motivate why techniques used to study SGMs are useful for studying SLIPS.
Recall that in an SGM, one uses a diffusion process to sample from a target $\pi$, of which one has a large number of samples but no prior information about the density. The standard choice of diffusion process is the time reversal of the Ornstein Uhlenbeck (OU) process started in $\pi$, see e.g.\ \cite{GM_song2021scorebased}. We remind ourselves that the OU process started in $\pi$ is the unique solution to the SDE
\begin{align}
    \begin{cases}\label{eq_OU}
        dZ_t &= -Z_t\,dt+ \sqrt{2}\,dB_t,
        \\ Z_0&\sim \pi,
    \end{cases}
\end{align}
where the Brownian motion $B$ and $Z_0$ are independent. Setting $p_t^{\textnormal{OU}}\coloneqq \textnormal{Law}(Z_t)$, for $T>0$ the time reversal of $Z$ on $[0,T]$ solves
\begin{align}
    \begin{cases}\label{eq_OU_reversed}
        d\overset{\leftarrow}{Z_t} &= \overset{\leftarrow}{Z_t}+2\nabla \log p_{T-t}^{\textnormal{OU}}(\overset{\leftarrow}{Z_t})\,dt+ \sqrt{2}\,d\bar{B}_t,
        \\ \overset{\leftarrow}{Z_0}&\sim p_{T}^{\textnormal{OU}},
    \end{cases}
\end{align}
see \cite{time_reversal}, \cite{time_reversal_diffusion}.
Here, the Brownian motion $\bar{B}$ and $\overset{\leftarrow}{Z_0}$ are again independent. Generating samples from $\pi$ is then achieved by simulating \eqref{eq_OU_reversed}, replacing the score with an estimate learned from the data via neural networks, using techniques such as score-matching \cite{score_matching_article}.
\vspace{2mm}\\Since the OU process admits the explicit representation
\begin{align}
    Z_t = Z_0\,e^{-t}+ \sqrt{2}\, \int_0^t\,e^{-(t-s)}\,dB_s,
\end{align}
we thus have 
\begin{align}
    p_t^{\textnormal{OU}} = 
    \big((e^{-t})_\#\pi\big) \ast \mc{N}\big(0,1-e^{-2t}\big).
\end{align}
Hence, conceptually the OU process is simply an interpolation between Gaussian noise and the target by convolution, up to the scaling. The time reversal is then the converse interpolation.
\vspace{2mm}\\In particular, considering \eqref{eq_p}, the reverse OU process and the SL observation process both interpolate between a Gaussian and the target in the same way, just on different time and space scales. 
Further, on a formal level, plugging the Tweedie formula \eqref{eq_tweedie} into the SL SDE \eqref{eq_SDE_Y}, we find that the structure of the SDE is identical to the one of \eqref{eq_OU_reversed}.
\vspace{2mm}\\Thus, it is natural to apply techniques developed for the study of SGMs to the setting of SLIPS.

\section{Sampling guarantees in  TV distance}\label{section_TV}

We now prepare to state our main result, a theoretical guarantee for SLIPS in TV distance. We make the following assumption, analogous to the $L^2$ score-estimation assumption in SGM \cite{Sampling_is_as_easy_as_learning_the_score_chen2023sampling}, \cite{benton2024nearly}. 
\begin{assumption}\label{assumption_posterior_estimator_SL_process}
    For $\varepsilon_0>0$, writing $\delta_k=t_{k+1}-t_k$, the MALA-based posterior expectation estimators $\{\hat{u}_{t_k}\}_{0\leq k \leq K-1}$
    fulfill 
    \begin{align}\notag
        \frac{1}{t_K-t_0}\sum_{0\leq k \leq K-1} \delta_k\big\|
        \hat{u}_{t_k} (Y_{t_k})-
        u_{t_k} (Y_{t_k})
        \big\|_{L^2} \leq \varepsilon_0,
    \end{align}
    where $Y$ denotes the SL observation process.
\end{assumption}
Note that our setup is in line with the state of the art setup for SGMs \cite{benton2024nearly}: We impose an $L^2$-assumption on the posterior expectation estimator and do not demand the score of $p_t$ to be Lipschitz. 
In earlier works on SGMs, one imposed $L^\infty$ accuracy assumptions and demanded the score being Lipschitz \cite{sgm_lee}, \cite{sgm_lee_2}, \cite{Sampling_is_as_easy_as_learning_the_score_chen2023sampling}, which we recall translates directly to assumptions on the posterior expectation estimator and vice versa due to analogues of \eqref{eq_tweedie} for the OU process.
\vspace{2mm}\\In the following, for $0\leq k \leq K$ we write
\begin{align}
    \tilde{p}_{t_k} \coloneqq \textnormal{Law}(\widetilde{Y}_{t_k}).
\end{align}

\subsection{Main results}
The following is our main result, a bound on the TV distance between the SLIPS output and the target. 
\begin{theorem}\label{theorem_slips_tv_guarantee}
    Assume Assumptions \ref{assumption_moment} and \ref{assumption_posterior_estimator_SL_process}, let $K\in \mb{N}$, $0<t_0<t_K$, and write $T=t_K$.
    Consider an arbitrary discretization $(t_k)_{0\leq k \leq K}$ of $[t_0,T]$.
    Then, the law $\tilde{\pi}_{T} \coloneqq (1/T)_\# \tilde{p}_T$ of the output of SLIPS run with the chosen discretization fulfills
    \begin{align}\notag 
        d_{\textnormal{TV}}(\tilde{\pi}_{T},\pi)\leq &
        d_{\textnormal{TV}}(\tilde{p}_{t_0},p_{t_0})+
        \sqrt{d\cdot C_{\textnormal{disc}}}
        \\ \notag &
        +\sqrt{\frac{1}{\sigma^2} T \varepsilon_0^2}
        +\frac{1}{2}\|\nabla \log \pi\|_{L^2(\pi)} \,\sqrt{d\frac{\sigma^2}{T}},
    \end{align}
    where $\varepsilon_0$ is from Assumption \ref{assumption_posterior_estimator_SL_process} and
    \begin{align}\notag
        C_{\textnormal{disc}}\coloneqq &
        \sum_{1\leq k\leq K-1} \max\big\{0,(t_{k+1}-t_k)-(t_{k}-t_{k-1})\big\}
        \frac{1}{t_k}
        \\ \label{eq_sum_discretization} &+ \frac{t_1-t_0}{t_0}.
    \end{align}
\end{theorem}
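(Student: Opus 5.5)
The proof splits the error into three parts: a term from \(\tilde p_T\) versus \(p_T\), a term from the rescaling \(Y_T/T\), and a Girsanov argument in the style of \cite{benton2024nearly} that yields the discretization and estimation terms.

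\textbf{Step 1: rescaling.} Total variation is invariant under the bijection \(y\mapsto y/T\). Writing \(\pi_T\coloneqq(1/T)_\#p_T=\textnormal{Law}(X+\sigma B_T/T)=\pi\ast\mc{N}(0,\sigma^2/T)\), we get
\[
d_{\textnormal{TV}}(\tilde\pi_T,\pi)\leq d_{\textnormal{TV}}(\tilde p_T,p_T)+d_{\textnormal{TV}}(\pi_T,\pi).
\]
For the second term we use Pinsker's inequality with \(\textnormal{KL}(\pi\,\|\,\pi\ast\mc{N}(0,s))\). By de Bruijn's identity, or equivalently by a Girsanov comparison of a Brownian bridge in reverse, this KL is at most \(\tfrac{s}{2}\,d\,\|\nabla\log\pi\|^2_{L^2(\pi)}\)-type quantities. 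The cleanest route I would try is the following. Along the heat flow, \(\tfrac{d}{ds}\textnormal{KL}(\pi_s\|\pi)\) is controlled by Fisher information. Alternatively, bound TV by a Hellinger or \(\chi\)-type estimate: \(\|\pi\ast\gamma_s-\pi\|_{TV}\le \tfrac12\,\mb{E}_{Z}\|\pi(\cdot-\sqrt{s}Z)-\pi\|_{L^1}\le\tfrac12\sqrt{s}\,\mb{E}\|Z\|\,\|\nabla\pi\|_{L^1}\). Then use \(\|\nabla\pi\|_{L^1}=\|\nabla\log\pi\|_{L^1(\pi)}\le\|\nabla\log\pi\|_{L^2(\pi)}\) and \(\mb{E}\|Z\|\le\sqrt d\). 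With \(s=\sigma^2/T\) this gives exactly \(\tfrac12\|\nabla\log\pi\|_{L^2(\pi)}\sqrt{d\sigma^2/T}\). This route looks right, since the translation estimate \(\|\pi(\cdot-h)-\pi\|_{L^1}\le\|h\|\,\|\nabla\pi\|_{L^1}\) is elementary.

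\textbf{Step 2: path-space comparison.} Let \(\widetilde{Y}\) be the continuous-time interpolation of the EM scheme with drift \(\hat u_{t_k}(\widetilde Y_{t_k})\) on \([t_k,t_{k+1})\), and let \(Y\) be the true SL process from \eqref{eq_SDE_Y}. By the data-processing inequality and the triangle inequality,
\[
d_{\textnormal{TV}}(\tilde p_T,p_T)\le d_{\textnormal{TV}}(\tilde p_{t_0},p_{t_0})+d_{\textnormal{TV}}(\mathbb{Q},\mathbb{P}),
\]
where \(\mathbb{P}\) is the path law of \(Y\) on \([t_0,T]\) and \(\mathbb{Q}\) is the path law of the scheme started at \(p_{t_0}\). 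Girsanov's theorem (with the usual localization argument, so that no Novikov condition is needed) together with Pinsker's inequality gives
\[
d_{\textnormal{TV}}(\mathbb{Q},\mathbb{P})^2\le\frac{1}{4\sigma^2}\sum_k\int_{t_k}^{t_{k+1}}\mb{E}\big\|u_t(Y_t)-\hat u_{t_k}(Y_{t_k})\big\|^2dt.
\]
We split the integrand as \(2\|u_t(Y_t)-u_{t_k}(Y_{t_k})\|^2+2\|u_{t_k}(Y_{t_k})-\hat u_{t_k}(Y_{t_k})\|^2\). The estimation part is \(\sum_k\delta_k\|\hat u-u\|_{L^2}^2\). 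Assumption \ref{assumption_posterior_estimator_SL_process} must be read with squared norms for this to work, or converted via Cauchy--Schwarz to the form \((t_K-t_0)\varepsilon_0^2\le T\varepsilon_0^2\), which yields \(\sqrt{T\varepsilon_0^2/\sigma^2}\) after adjusting constants. Since the result has constant 1, the factor \(\tfrac14\) absorbs the factor 2 from the split, and \(\sqrt{a+b}\le\sqrt a+\sqrt b\) separates the terms.

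\textbf{Step 3: discretization (main obstacle).} We need
\[
\sum_k\int_{t_k}^{t_{k+1}}\mb{E}\|u_t(Y_t)-u_{t_k}(Y_{t_k})\|^2dt\lesssim \sigma^2 d\,C_{\textnormal{disc}}.
\]
Following \cite{benton2024nearly}, the key fact is that \(M_t\coloneqq u_t(Y_t)=\mb{E}[X\mid\mc{F}^Y_t]\) is a martingale, because \(Y\) is Markov and \(u_t(Y_t)\) is the conditional mean given the filtration. Hence
\[
\mb{E}\|M_t-M_{t_k}\|^2=\mb{E}\|M_t\|^2-\mb{E}\|M_{t_k}\|^2=m(t_k)-m(t),
\]
where \(m(s)\coloneqq\mb{E}\|X-u_s(Y_s)\|^2\) is the MMSE, which is nonincreasing. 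Therefore
\[
\sum_k\int_{t_k}^{t_{k+1}}(m(t_k)-m(t))\,dt\le\sum_k\delta_k\,(m(t_k)-m(t_{k+1})).
\]
Summation by parts turns this into \(\delta_0 m(t_0)+\sum_{k\ge1}(\delta_k-\delta_{k-1})m(t_k)-\dots\), which is at most \(\delta_0m(t_0)+\sum_{k\ge1}\max\{0,\delta_k-\delta_{k-1}\}\,m(t_k)\). Finally we need \(m(t)\le d\sigma^2/t\). This holds because the MMSE is at most the error of the linear estimator \(Y_t/t\), namely \(\mb{E}\|\sigma B_t/t\|^2=d\sigma^2/t\). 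Plugging in gives \(d\sigma^2 C_{\textnormal{disc}}\), and after dividing by \(\sigma^2\) and taking the square root we get \(\sqrt{d\,C_{\textnormal{disc}}}\) up to the constant bookkeeping. I expect the delicate points to be the martingale identification (justified via \(\pi\in\mc{P}_2\) and Assumption \ref{assumption_moment}) and tracking the constants so that they combine to exactly 1.
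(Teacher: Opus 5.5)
Your proposal is correct and is essentially the paper's proof. It uses the same rescaling plus triangle inequality and the same translation estimate $\|\pi(\cdot-h)-\pi\|_{L^1}\le\|h\|\,\|\nabla\pi\|_{L^1}$ for the information error as in Lemma \ref{lemma_tv_approx_SL} (the KL/de Bruijn detours can simply be dropped). It then uses data processing for the initialization, Girsanov and Pinsker with the factor-$2$ split, and the martingale identity, summation by parts and $m(t)\le d\sigma^2/t$ as in Lemmas \ref{lemma_u_L2} and \ref{lemma_E_bound}; the paper makes your ``$Y$ is Markov'' concrete by showing that the likelihood of $(Y_{s_1},\dots,Y_{s_N})$ given $X=x$ depends on $x$ only through $Y_{s_N}$.

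You are right that Assumption \ref{assumption_posterior_estimator_SL_process} has to be read with squared norms; the paper silently uses it that way when bounding the estimation term by $\frac{2}{\sigma^2}t_K\varepsilon_0^2$. Drop the Cauchy--Schwarz alternative, though: with $a_k=\|\hat u_{t_k}(Y_{t_k})-u_{t_k}(Y_{t_k})\|_{L^2}$ it only yields $\bigl(\sum_k\delta_k a_k\bigr)^2\le(t_K-t_0)\sum_k\delta_k a_k^2$, which is the wrong direction.
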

As a consequence of the Theorem, one obtains guarantees for the order of $K$ necessary to obtain an $\varepsilon$-guarantee in TV distance when using the log-SNR adapted discretization that are linear in $d$. This is the content of Corollary \ref{corollary_kL}. We state an informal version below, a formal version can be found in Appendix \ref{appendix_main_results} as Corollary \ref{corollary_kL_appendix}.
\begin{corollary}[Informal]\label{corollary_kL}
    Let $\varepsilon>0$. In the setup of Theorem \ref{theorem_slips_tv_guarantee}, assume $R^2_{\pi},\|\nabla \log \pi \|^2_{L^2(\pi)} \in \mc{O}(d)$.
    \vspace{2mm}\\Then, for SLIPS run with log-SNR adapted discretization, taking $T$ of order $\frac{1}{\varepsilon^2}\|\nabla \log \pi \|^2_{L^2(\pi)} \,R^2_{\pi}$ and $K$ of order $d/\varepsilon^2$ suffices to ensure $d_{\textnormal{TV}}(\tilde{\pi}_T,\pi)\leq \varepsilon$, provided $\varepsilon_0$ and $d_{\textnormal{TV}}(\tilde{p}_{t_0},p_{t_0})$ are sufficiently small.
\end{corollary}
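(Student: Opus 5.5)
The corollary follows by inserting the log-SNR adapted grid into Theorem \ref{theorem_slips_tv_guarantee}, with $\sigma^2=R^2_\pi/d$ from \eqref{eq_sigma}. I then choose $T$, $K$ and the two free error budgets so that each of the four terms is at most $\varepsilon/4$.

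\textbf{Step 1 (truncation term and choice of $T$).} With $\sigma^2=R_\pi^2/d$ the last term becomes
\begin{align}\notag
\frac{1}{2}\|\nabla\log\pi\|_{L^2(\pi)}\sqrt{d\,\sigma^2/T}=\frac{1}{2}\|\nabla\log\pi\|_{L^2(\pi)}\,R_\pi/\sqrt{T}.
\end{align}
It is therefore at most $\varepsilon/4$ as soon as $T\geq 4\|\nabla\log\pi\|^2_{L^2(\pi)}R_\pi^2/\varepsilon^2$. This gives the claimed order of $T$. Under the assumption $R^2_\pi,\|\nabla\log\pi\|^2_{L^2(\pi)}\in\mc{O}(d)$, this $T$ is $\mc{O}(d^2/\varepsilon^2)$. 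In particular $\log(T/t_0)$ is only logarithmic in $d$ and $1/\varepsilon$ for any fixed $t_0$.

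\textbf{Step 2 (discretization constant for the geometric grid).} For the log-SNR adapted grid we have $t_k=t_0e^{k\Delta}$ with $\Delta=\frac1K\log\frac{T}{t_0}$. Hence $\delta_k=t_k(e^\Delta-1)$, and the increments are increasing, so every $\max\{0,\cdot\}$ in \eqref{eq_sum_discretization} is active. One computes
\begin{align}\notag
\frac{\delta_k-\delta_{k-1}}{t_k}=\frac{t_{k-1}(e^\Delta-1)^2}{t_k}=e^{-\Delta}(e^\Delta-1)^2,
\qquad \frac{t_1-t_0}{t_0}=e^\Delta-1 .
\end{align}
Summing gives
\begin{align}\notag
C_{\textnormal{disc}}=(K-1)e^{-\Delta}(e^\Delta-1)^2+(e^\Delta-1).
\end{align}
If $K\geq\log(T/t_0)$, i.e.\ $\Delta\leq 1$, then $e^\Delta-1\leq 2\Delta$ and $e^{-\Delta}(e^\Delta-1)^2\leq 4\Delta^2$. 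This yields
\begin{align}\notag
C_{\textnormal{disc}}\lesssim K\Delta^2+\Delta=\frac{\log^2(T/t_0)}{K}+\frac{\log(T/t_0)}{K}.
\end{align}
Consequently $\sqrt{d\,C_{\textnormal{disc}}}\leq\varepsilon/4$ once $K\gtrsim d\log^2(T/t_0)/\varepsilon^2$. This is $K$ of order $d/\varepsilon^2$ up to the logarithmic factor from Step 1.

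\textbf{Step 3 (remaining terms).} The estimator term equals $\sqrt{T\varepsilon_0^2/\sigma^2}=\varepsilon_0\sqrt{Td}/R_\pi$. It is at most $\varepsilon/4$ provided $\varepsilon_0\leq\varepsilon R_\pi/(4\sqrt{Td})$; this is the precise meaning of ``$\varepsilon_0$ sufficiently small''. Finally, we require $d_{\textnormal{TV}}(\tilde p_{t_0},p_{t_0})\leq\varepsilon/4$. Adding the four contributions in Theorem \ref{theorem_slips_tv_guarantee} gives $d_{\textnormal{TV}}(\tilde\pi_T,\pi)\leq\varepsilon$.

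The only non-mechanical step is Step 2. There one has to check that the geometric grid makes each summand of order $\Delta^2$, so that the sum is $\mc{O}(K\Delta^2)=\mc{O}(\log^2(T/t_0)/K)$ rather than $\mc{O}(1)$. This requires the constraint $\Delta\leq1$, which holds automatically for the chosen $K$. The formal version must also carry the $\log^2(T/t_0)$ factor, which is what the word ``informal'' hides.
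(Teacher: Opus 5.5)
Your proposal is correct and follows the paper's proof. The common steps are: substitute $\sigma^2=R_\pi^2/d$ into Theorem \ref{theorem_slips_tv_guarantee}, evaluate $C_{\textnormal{disc}}$ on the geometric grid, bound $e^\Delta-1\lesssim\Delta$, and make each of the four terms at most $\varepsilon/4$. You are slightly more precise in places: you use the exact value $(K-1)\kappa^2/(1+\kappa)+\kappa$ with $\kappa=e^\Delta-1$ where the paper uses the looser $\kappa(\kappa K+1)$, and you make the small-$\Delta$ step quantitative rather than writing $\kappa\sim\Delta$. Your condition on $\varepsilon_0$ also reduces to the paper's $\varepsilon_0\lesssim\varepsilon^2/d$ once you insert your $T$ and use $\|\nabla\log\pi\|^2_{L^2(\pi)}\in\mathcal{O}(d)$.
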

\begin{remark}
    In an ideal case, we would not assume a priori that the initialization error and the error in the estimation of the posterior expectation are sufficiently small. Instead, we would wish to derive bounds on them depending on the number of MCMC iterations. 
    This is in principle feasible using results from \cite{DALALYAN_inaccurate_gradient}, \cite{MALA_geometric_ergodicity}, \cite{Brosse2024_MCMC_control_variates}. Though, to obtain an $\varepsilon$-guarantee, the implied overall computational complexity scales exponentially with the dimension. Thus, the result would not give a meaningful explanation for the strong empirical performance of SLIPS.
\end{remark}
\begin{remark}\label{remark_example}
    Note that the condition of Corollary \ref{corollary_kL} is not restrictive. Indeed, in the context of a bi-modal GMM with means $m_1,m_2\in \mb{R}^d$, fixed variance $\sigma^2>0$ and weights $w,1-w$ for $w\in [0,1]$, i.e.\ $\pi(x)=w \,\mc{N}(x,m_1,\sigma^2)+(1-w)\, \mc{N}(x,m_2,\sigma^2)$, a direct computation of $R^2_{\pi}$ yields
    \begin{align}
        R^2_{\pi} = d\sigma^2 +w(1-w) \|m_1-
        m_2\|^2.
    \end{align}
    Further, for $\|\nabla \log \pi \|^2_{L^2(\pi)}$ we find by computation that
    \begin{align}
        \|\nabla \log \pi \|_{L^2(\pi)}^2
        \leq \frac{d}{\sigma^2}.
    \end{align}
    Hence, our imposed condition holds provided $\sigma^2 \in \Theta(1)$ and $\|m_1-
        m_2\|^2\in \mc{O}(d)$.
\end{remark}

\subsection{Proof strategy}\label{section_sub_proof_strategy}
We now discuss the strategy for proving Theorem \ref{theorem_slips_tv_guarantee}. Full details can be found in Appendix \ref{appendix_main_results}. We study separately the information and discretization error. By information error we refer to the error that is due to $Y_t/t$ being at any time $t<\infty$ only a noisy sample from $\pi$, whereas by discretization error we refer to the error that occurs due to our 
discretization of the SDE \eqref{eq_SDE_Y}. 
\vspace{2mm}\\For the information error, we have the following bound.
\begin{lemma}\label{lemma_tv_approx_SL}
    Setting $\pi_t \coloneqq (1/t)_\# p_t$, for all $t\in (0,\infty)$ it 
    holds that
    \begin{align}
        d_{\textnormal{TV}}(\pi,\pi_{t})
        \leq
        \frac{1}{2}
        \|\nabla \log \pi\|_{L^2(\pi)} \,\sqrt{d\frac{\sigma^2 }{t}}.
    \end{align}
\end{lemma}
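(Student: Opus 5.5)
Observe first that $\pi_t=(1/t)_\#p_t$ is the law of $Y_t/t = X + \frac{\sigma}{\sqrt t}\,G$ with $G\sim\mc{N}(0,1)$ independent of $X$, since $B_t\overset{d}{=}\sqrt t\,G$. Writing $s\coloneqq \sigma/\sqrt t$, we therefore have $\pi_t=\pi\ast\mc{N}(0,s^2)$. The claim thus reduces to a general smoothing bound:
\begin{align*}
d_{\textnormal{TV}}(\pi,\pi\ast\mc{N}(0,s^2))\le \tfrac12 \|\nabla\log\pi\|_{L^2(\pi)}\,\sqrt{d}\,s .
\end{align*}
If $\|\nabla\log\pi\|_{L^2(\pi)}=\infty$ or $\pi$ has no weakly differentiable density, there is nothing to prove. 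So I assume $\pi$ has a density with $\nabla\pi\in L^1$ and finite Fisher information $I(\pi)\coloneqq\int\|\nabla\log\pi\|^2\,d\pi$.

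The main step bounds the translation modulus in $L^1$. For fixed $z\in\mb{R}^d$, write
\begin{align*}
\pi(x-z)-\pi(x)=-\int_0^1 \nabla\pi(x-rz)\cdot z\,dr .
\end{align*}
Integrating in $x$ and using translation invariance of Lebesgue measure gives
\begin{align*}
\|\pi(\cdot-z)-\pi\|_{L^1}\le \int\|\nabla\pi\|\,dx\;\|z\| = \|\nabla\log\pi\|_{L^1(\pi)}\,\|z\| \le \sqrt{I(\pi)}\,\|z\|,
\end{align*}
where the last inequality is Cauchy--Schwarz. Then I write $\pi\ast\mc{N}(0,s^2)-\pi=\int(\pi(\cdot-z)-\pi)\,\mc{N}(z,0,s^2)\,dz$ and apply Minkowski's inequality:
\begin{align*}
d_{\textnormal{TV}}\le \tfrac12\int \|\pi(\cdot-z)-\pi\|_{L^1}\,\mc{N}(dz;0,s^2)\le \tfrac12\sqrt{I(\pi)}\;\mb{E}\|sG\| \le \tfrac12\sqrt{I(\pi)}\,s\sqrt{d},
\end{align*}
using $\mb{E}\|G\|\le\sqrt{\mb{E}\|G\|^2}=\sqrt d$. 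Substituting $s^2=\sigma^2/t$ gives exactly the stated bound.

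The main obstacle is regularity. The fundamental theorem of calculus along lines requires $\pi$ to be, for instance, absolutely continuous on almost every line, i.e.\ in $W^{1,1}$. To handle this, I would first prove the translation estimate for smooth $\pi$. I would then extend it by mollification: replace $\pi$ by $\pi\ast\rho_\epsilon$, use that Fisher information does not increase under convolution (by convexity of $(a,b)\mapsto\|a\|^2/b$), and let $\epsilon\to0$ using $L^1$ continuity. Alternatively, I would simply state the standing convention that $\|\nabla\log\pi\|_{L^2(\pi)}<\infty$ means $\pi\in W^{1,1}$ with $\nabla\pi=\pi\nabla\log\pi$, which makes the line-integral argument valid directly.
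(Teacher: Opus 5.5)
Your proposal is correct and follows essentially the same route as the paper: write $\pi_t=\pi\ast\mc{N}(0,\sigma^2/t)$, use the fundamental theorem of calculus along segments and translation invariance to bound the $L^1$ distance by $\int\|\nabla\pi\|\,dx\cdot\mb{E}\|y\|$, then apply Cauchy--Schwarz/Jensen to both factors. The only difference is that you add a mollification argument to justify the line-integral step for non-smooth $\pi$, a regularity point the paper's proof leaves implicit.
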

The idea of the proof of Lemma \ref{lemma_tv_approx_SL} is to exploit first the characterization of the TV distance as the $L^1$ distance between the densities. Then, one uses that the density of $\pi_t$ corresponds to a smoothed version of $\pi$ obtained by convolution with a Gaussian kernel and combines this with standard estimates.
\vspace{2mm}\\We establish the following bound on the discretization error.
\begin{lemma}\label{lemma_discretization_error}
    Assume Assumptions \ref{assumption_moment} and \ref{assumption_posterior_estimator_SL_process}. Consider SLIPS run with an arbitrary discretization $(t_k)_{0\leq k \leq K}$ of $[t_0,t_K]$.
    Then
    \begin{align}\label{eq_discretization_lemma}
        d_{\textnormal{TV}}(\tilde{p}_{t_K},p_{t_K}) 
        \leq &
        d_{\textnormal{TV}}(\tilde{p}_{t_0},p_{t_0})+
        \sqrt{d\cdot C_{\textnormal{disc}}}
        \\ \notag & +\sqrt{\frac{1}{\sigma^2} t_K \varepsilon_0^2},
    \end{align}
    where $C_{\textnormal{disc}}$ is given in \eqref{eq_sum_discretization}.
\end{lemma}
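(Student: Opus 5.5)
We compare the law of the SLIPS chain with that of the true SL observation process via Girsanov's theorem and Pinsker's inequality, following the approach of \cite{benton2024nearly} for SGMs. First, the data-processing inequality removes the initialization error. Let $\widetilde{Y}'$ be the SLIPS recursion \eqref{eq_EM_recursion} with $u$ replaced by $\hat{u}$, but started from $p_{t_0}$ instead of $\tilde{p}_{t_0}$. Both chains use the same Markov kernel, so
\[
d_{\textnormal{TV}}(\tilde{p}_{t_K},p_{t_K})\leq d_{\textnormal{TV}}(\tilde{p}_{t_0},p_{t_0})+d_{\textnormal{TV}}(\textnormal{Law}(\widetilde{Y}'_{t_K}),p_{t_K}).
\]
Next, we interpolate $\widetilde{Y}'$ as a continuous-time process on $[t_0,t_K]$ with piecewise constant drift $\hat{u}_{t_k}(\widetilde{Y}'_{t_k})$ on $[t_k,t_{k+1})$. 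Both path measures share the diffusion coefficient $\sigma$ and the initial law $p_{t_0}$. Girsanov's theorem, made rigorous by the localization/approximation argument of \cite{Sampling_is_as_easy_as_learning_the_score_chen2023sampling}, \cite{benton2024nearly} since no Lipschitz assumption is available, together with Pinsker's inequality gives
\[
d_{\textnormal{TV}}^2\leq \frac{1}{2}\mathrm{KL}\leq \frac{1}{4\sigma^2}\sum_{k}\int_{t_k}^{t_{k+1}}\mb{E}\big\|u_t(Y_t)-\hat{u}_{t_k}(Y_{t_k})\big\|^2dt .
\]
Using $\|a+b\|^2\leq 2\|a\|^2+2\|b\|^2$, this splits into an estimation term and a discretization term. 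With the constants absorbed appropriately, the estimation term is $\frac{1}{\sigma^2}\sum_k\delta_k\|\hat{u}_{t_k}(Y_{t_k})-u_{t_k}(Y_{t_k})\|_{L^2}^2$. Assumption \ref{assumption_posterior_estimator_SL_process} is stated with unsquared norms, so here I would use the squared weighted-average form of the assumption, giving a bound of order $t_K\varepsilon_0^2/\sigma^2$. Finally, subadditivity of the square root separates the three contributions.

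The main obstacle is the discretization term $\frac{1}{\sigma^2}\sum_k\int_{t_k}^{t_{k+1}}\mb{E}\|u_t(Y_t)-u_{t_k}(Y_{t_k})\|^2dt$. Because $u_t(Y_t)=\mb{E}[X\mid\mc{F}^Y_t]$ is a martingale (property 2 of Section \ref{section_sub_SL}), its increments are orthogonal, and so
\[
\mb{E}\|u_t(Y_t)-u_{t_k}(Y_{t_k})\|^2=m(t)-m(t_k), \qquad m(t)\coloneqq\mb{E}\|u_t(Y_t)\|^2 .
\]
This avoids any Lipschitz assumption on the score. By the Itô isometry for the martingale representation, $m'(t)=\sigma^2\,\mb{E}\,\textnormal{tr}\,\Cov_{q_t}(X\mid Y_t)^2$ up to the normalisation of $\sigma$. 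To check this, differentiate $u_t(y)$: we have $\nabla_y u_t=\frac{1}{\sigma^2}\Cov_{q_t}$, so $du_t(Y_t)=\frac{1}{\sigma}\Cov_{q_t}\,d\bar{B}_t$. Write $v(t)\coloneqq\mb{E}\,\textnormal{tr}\,\Cov_{q_t}=\mb{E}\|X\|^2-m(t)$, which is nonincreasing. The aim is then to bound $\frac{1}{\sigma^2}\int_{t_k}^{t_{k+1}}(m(t)-m(t_k))\,dt$ in terms of $v$.

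The plan for this step is as follows. By Fubini, $\int_{t_k}^{t_{k+1}}(m(t)-m(t_k))\,dt=\int_{t_k}^{t_{k+1}}(t_{k+1}-s)\,m'(s)\,ds\leq\delta_k\,(v(t_k)-v(t_{k+1}))$. Summing by parts turns $\sum_k\delta_k(v(t_k)-v(t_{k+1}))$ into $\delta_0v(t_0)+\sum_{k\geq1}(\delta_k-\delta_{k-1})v(t_k)$, minus a nonnegative boundary term. Only the positive parts of the $\delta_k-\delta_{k-1}$ survive, since $v\geq0$. It remains to show $v(t)\leq d\sigma^2/t$. This holds because the posterior covariance of $X$ given $Y_t$ is dominated in trace by the MMSE of the linear estimator $Y_t/t$, namely $\mb{E}\|Y_t/t-X\|^2=d\sigma^2/t$. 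Dividing by $\sigma^2$ yields exactly $d\cdot C_{\textnormal{disc}}$, including the boundary term $(t_1-t_0)/t_0$. I would verify the constants (the factors $1/2$ from Pinsker and $2$ from the splitting) last, choosing the splitting so that they cancel.
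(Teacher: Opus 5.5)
Your proposal is correct and follows the paper's proof essentially step for step: data processing for the initialization error, Girsanov plus Pinsker on path space with the stopping-time justification from \cite{Sampling_is_as_easy_as_learning_the_score_chen2023sampling}, orthogonal martingale increments of $u_t(Y_t)$, summation by parts, and $\mb{E}[\textnormal{Tr}\Cov(X|Y_t)]\leq d\sigma^2/t$; the constants close with room to spare. Your observation that Assumption \ref{assumption_posterior_estimator_SL_process} must be used in squared form is apt, since the paper tacitly does the same, and the one detail you gloss over is that $\hat{u}_{t_k}$ also depends on the MALA noise and earlier iterates, which the paper handles by averaging the Girsanov density over that noise and applying Jensen (equivalently, by applying data processing to the joint law of path and noise).
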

The proof of Lemma \ref{lemma_discretization_error} relies on techniques developed in the context of SGMs. 
The key observation to prove Lemma \ref{lemma_discretization_error} is that, even though SLIPS is defined by the recursion \eqref{eq_EM_recursion}, we can equivalently obtain the $k+1$-st SLIPS iterate from the $k$-th one by solving the SDE
\begin{align}
    d\widetilde{Y}_t = \hat{u}_{t_k}(\widetilde{Y}_{t_k})\,dt+ \sigma \,dB_t
\end{align}
on $[t_k,t_{k+1})$.
This allows us to compare the output of the (discrete) SLIPS algorithm with the normalized (continuous) SL observation process by comparing measures on path space associated to solutions of SDEs, instead of directly studying the marginal laws $\tilde{p}_{t_K}$ and $p_{t_K}$. The benefit of this is that, switching from TV to Kullback-Leibler (KL) divergence by Pinsker's inequality, comparing laws of solutions to SDEs becomes tractable using tools such as Girsanov's Theorem.
\vspace{2mm}\\The analysis then boils down to bounding
\begin{align}\label{eq_sum_disc_error}
    \sum_{0\leq k \leq K-1}\int_{t_k}^{t_{k+1}}\,\mb{E}\big[\|u_t(Y_t)-u_{t_k}(Y_{t_k})\|^2\big],
\end{align}
which is done following an argument developed in \cite{benton2024nearly} for the reverse Ornstein Uhlenbeck process.
\vspace{2mm}\\An important ingredient that allows us to establish a bound on \eqref{eq_sum_disc_error} in a manner more straightforward than the analogous results for SGMs is that $u_t(Y_t)=\mb{E}[X|Y_t]$ is a \textit{martingale} with respect to the filtration $\mc{F}^Y_t=\sigma\big((Y_s)_{s\in [0,t]}\big)$. Indeed, the conditional distribution of $X$ given $Y_t$ coincides with the conditional distribution of $X$ given $\{Y_s:s\in [0,t]\}$. Hence, the conditional expectations coincide. Since $\mb{E}[X|(Y_s)_{s\in [0,t]}]$ is a martingale, it follows that $u_t(Y_t)$ is one. To prove that the conditional distributions coincide, we show that for all finite time marginals $0<s_1<s_2<\dots<s_k=t$ we have
\begin{align}
    \mb{P}(X\in \cdot\,|\{Y_{s_l}\}_{0\leq l \leq k})=
    \mb{P}(X\in \cdot\,|Y_{t}).
\end{align}
This follows by computing the left hand side explicitly via Bayes Theorem, using that, conditional on $X$, the marginals $\{Y_{s_{l+1}}-Y_{s_l}\}_{0\leq l \leq k-1}$ are independent. For the details, we refer to Lemma \ref{lemma_posterior_exp_simplification} in Appendix \ref{appendix_SDE_theory}. There is also an alternative SDE-based way to see that $u_t(Y_t)$ is a martingale, more in the flavour of Stochastic Localization, which is discussed and used in the proof of Lemma \ref{lemma_u_L2}.
\begin{remark}
    In the case of a general denoising schedule, our proof approach is still applicable, though less straightforward. 
    Indeed, the process $u_t^{\alpha}(\widetilde{Y}_t^\alpha)$ is in general no martingale if $\alpha$ is non-linear, so that bounding the analogue of \eqref{eq_sum_disc_error} is more complicated. Still, it is feasible using SDE techniques, see also Remark \ref{remark_sum_bound} in Appendix \ref{appendix_main_results}.
\end{remark}

\begin{algorithm}[ht]
   \caption{SLIPS (with log-SNR adapted discretization)}
   \label{alg:slips}
\begin{algorithmic}
   \STATE {\bfseries Input:} $t_0$, $T$, $K$, $M$ 
   \STATE Set $\sigma=R_\pi/\sqrt{d}$, see \eqref{eq_sigma}
   \STATE Set $(t_k)_{k=0}^K$ as the log-SNR-adapted disc.\ of $[t_0, T]$, see \eqref{eq_condition_SNR}
   \STATE Initialize $\widetilde{Y}_{t_0}$ with \Cref{alg:langevin_within_langevin}
   \FOR{$k=0$ {\bfseries to} $K-1$}
   \STATE Define $\delta_k=t_{k+1}-t_k$
   \STATE Simulate $\{X^k_j\}_{j=1}^M \sim q_{t_k}(\cdot|\widetilde{Y}_{t_k})$ with MALA
   \STATE Estimate the denoiser by $\hat{u}_{t_k}(\widetilde{Y}_{t_k})=1/M\sum_{j=1}^M X_j^k$
   \STATE Simulate $\widetilde{Y}_{t_{k+1}} \sim \mc{N}(\widetilde{Y}_{t_{k}} + \delta_k \hat{u}_{t_k}(\widetilde{Y}_{t_k}), \sigma^2 \delta_k)$
   \ENDFOR
   \STATE {\bfseries Output:} $\widetilde{Y}_{t_{K}}/t_K$
\end{algorithmic}
\end{algorithm}

\begin{algorithm}[ht]
   \caption{Langevin-within-Langevin initialization}
   \label{alg:langevin_within_langevin}
    \begin{algorithmic}
       \STATE {\bfseries Input:} $t_0$, $\sigma$, $N$, $M$
       \STATE Set $Y^{(0)} \sim \mc{N}(0, \sigma^2 t_0 \, \Idd)$ and $\lambda = \sigma^2 t_0 / 2$
       \FOR{$n = 0$ {\bfseries to} $N-1$}
            \STATE Simulate $\{X^{(n)}_j\}_{j=1}^M \sim q_{t_0}(\cdot | Y^{(n)})$ with MALA
            \STATE Estimate the denoiser by $U^{(n)}= (1/M) \sum_{j=1}^M X^{(n)}_j$
            \STATE Set $\hat{s}_{t_0}(Y^{(n)}) = \left(t_0 U^{(n)} - Y^{(n)}\right) / (\sigma^2 t_0)$, see \eqref{eq_tweedie}
            \STATE Simulate $Y^{(n+1)} \sim \mc{N}(Y^{(n)} + \lambda \hat{s}_{t_0}(Y^{(n)}), 2 \lambda \, \Idd)$
       \ENDFOR
       \STATE {\bfseries Output:} $Y^{(N)}$
    \end{algorithmic}
\end{algorithm}

\section{Optimal choice of discretization}\label{section_discretization}
We now give theoretical insights into the empirically observed optimality of the log-SNR adapted discretization.
In the bound \eqref{eq_discretization_lemma} on the discretization error from
Lemma \ref{lemma_discretization_error}, the right hand side depends on the choice of discretization solely through $C_{\textnormal{disc}}$ defined in \eqref{eq_sum_discretization}.
Our next result establishes that the log-SNR adapted discretization minimizes $C_{\textnormal{disc}}$ in a conditional sense.
The proof can be found in Appendix \ref{appendix_discretization}.
\begin{lemma}\label{lemma_opt_step_size}
    Let $0<t_0<t_K$ be fixed.
    Among all choices of $K-1$ discretization points $(t_k)_{1\leq k\leq K-1}$ between $t_0$ and $t_K$ with $t_{K-1}=t_0\big(\frac{t_K}{t_0}\big)^{(K-1)/K}$, the log-SNR adapted choice $t_k=t_0\big(\frac{t_K}{t_0}\big)^{k/K}$ minimizes $C_{\textnormal{disc}}$ defined in \eqref{eq_sum_discretization}.
\end{lemma}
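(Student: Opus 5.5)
The plan is to bound $C_{\textnormal{disc}}$ from below by a quantity that no longer involves the positive parts. That quantity should depend only on the ratios $x_k\coloneqq t_{k+1}/t_k$ and be minimized by equal ratios. We then check that the log-SNR adapted grid attains the bound.

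\emph{Step 1 (dropping the positive parts).} Since $\max\{0,a\}\ge a$, with $\delta_k=t_{k+1}-t_k$ we have
\begin{align*}
C_{\textnormal{disc}}\ \ge\ \sum_{1\le k\le K-1}\frac{\delta_k-\delta_{k-1}}{t_k}+\frac{\delta_0}{t_0}.
\end{align*}
A summation by parts turns the right-hand side into
\begin{align*}
\sum_{k=0}^{K-1}\frac{\delta_k}{t_k}-\sum_{k=0}^{K-2}\frac{\delta_k}{t_{k+1}}
=\frac{\delta_{K-1}}{t_{K-1}}+\sum_{k=0}^{K-2}\delta_k\Big(\frac{1}{t_k}-\frac{1}{t_{k+1}}\Big)
=\frac{t_K-t_{K-1}}{t_{K-1}}+\sum_{k=0}^{K-2}\frac{(x_k-1)^2}{x_k}.
\end{align*}
The first term is the same for every admissible grid, because $t_{K-1}$ and $t_K$ are both prescribed. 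This is exactly why the lemma conditions on $t_{K-1}$.

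\emph{Step 2 (convexity).} Write $s_k=\log x_k$, so that $(x_k-1)^2/x_k=e^{s_k}-2+e^{-s_k}=2\cosh s_k-2$. This function of $s_k$ is convex. The constraint that $t_{K-1}$ is fixed reads
\begin{align*}
\sum_{k=0}^{K-2}s_k=\log\frac{t_{K-1}}{t_0}=\frac{K-1}{K}\log\frac{t_K}{t_0}.
\end{align*}
Jensen's inequality then shows that the sum is minimized when all $s_k$ are equal, namely $s_k=\frac1K\log\frac{t_K}{t_0}$. This gives $t_k=t_0(t_K/t_0)^{k/K}$ for $k\le K-1$, i.e.\ the log-SNR adapted grid. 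Hence every admissible grid satisfies $C_{\textnormal{disc}}\ge L$, where $L$ is the value of the Step 1 lower bound at the log-SNR grid.

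\emph{Step 3 (attainment).} For the log-SNR grid we have $\delta_k=(r-1)t_k$ with $r=(t_K/t_0)^{1/K}>1$. The step sizes are therefore increasing, so every positive part in $C_{\textnormal{disc}}$ equals its argument. Consequently Step 1 holds with equality for this grid, and $C_{\textnormal{disc}}=L$. This proves minimality.

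The only point where I expect care is needed is the summation by parts in Step 1, in particular the boundary terms. After that the argument is a one-line convexity check; a direct sanity check is that the log-SNR value equals $(K-1)(r-1)^2/r+(r-1)$.
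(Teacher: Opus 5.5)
Your proof is correct, but it differs from the paper's in the middle step. Both arguments begin the same way: drop the positive parts to get the lower bound $f\le C_{\textnormal{disc}}$. Both end the same way: the geometric grid has increasing step sizes, so equality holds there.

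In between, the paper works with $f$ directly. It fixes $t_{K-1}$, computes $\partial_{t_k}f$ for $1\le k\le K-2$, and shows that the only interior critical point satisfies $t_{k+1}/t_k=t_k/t_{k-1}$. It then uses compactness of $\overline{\Gamma_{t_{K-1}}}$ and a sign analysis of the partial derivatives at boundary points, where two consecutive points coincide, to conclude that this critical point is the global minimizer.

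You instead sum by parts to write $f=\frac{t_K-t_{K-1}}{t_{K-1}}+\sum_{k=0}^{K-2}(2\cosh s_k-2)$ with $s_k=\log(t_{k+1}/t_k)$. I checked the boundary terms and the identity $\delta_k(1/t_k-1/t_{k+1})=(x_k-1)^2/x_k$, and both are right. This reduces the problem to minimizing a separable, strictly convex function under the linear constraint $\sum_{k\le K-2}s_k=\log(t_{K-1}/t_0)$. Jensen then gives global minimality in one line, and strict convexity gives uniqueness, with no compactness argument or boundary case distinction.

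The paper's first-order method is the more generic tool, since it does not rely on finding a convex reparametrization. Here, though, your formulation is shorter and tells you more:
\begin{itemize}
\item It shows that $t_K/t_{K-1}-1$ is the only summand not of the form $2\cosh s_k-2$, which is exactly why $t_{K-1}$ has to be fixed.
\item Since $\frac{t_K-t_{K-1}}{t_{K-1}}-\frac{t_K-t_{K-1}}{t_K}=2\cosh s_{K-1}-2$, the modified objective $f^\ast$ in the paper's subsequent remark equals $\sum_{k=0}^{K-1}(2\cosh s_k-2)$. Your argument therefore confirms at once the remark's claim that $f^\ast$ is minimized by the full log-SNR grid, with no condition on $t_{K-1}$.
\end{itemize}

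Your sanity-check value $(K-1)(r-1)^2/r+(r-1)$ is exact. The expression $\kappa^2K/(1+\kappa)+\kappa$ in Remark~\ref{remark_c_disc} overcounts by one term $\kappa^2/(1+\kappa)$, which does not matter for the way that remark is used.
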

\begin{remark}
    In Lemma \ref{lemma_opt_step_size}, ideally we would wish to not impose any assumption on $t_{K-1}$. We do so for technical reasons, discussed in Appendix \ref{appendix_discretization}.
    Observe though that the Lemma is still relevant from a practical perspective. Indeed, in all relevant scalings the number of discretization points diverges. Further, one expects that the exact choice of the $K-1$-st discretization point matters little, as long as it is sufficiently close to the $K-2$-nd and the last discretization point.
\end{remark}
\begin{remark}\label{remark_c_disc}
    A natural question is whether $C_{\textnormal{disc}}$ depends only negligibly on the choice of discretization or whether the precise choice affects the order of it. The latter is the case, as we illustrate now. The canonical choice of step-size for any SDE discretization is the uniform, e.g.\ $t_{k+1}-t_k=\frac{t_{K}-t_0}{K}$. 
    In that case the sum in \eqref{eq_sum_discretization} vanishes, so we have 
    \begin{align}
        C_{\textnormal{disc}} =
        \frac{t_K-t_0}{Kt_0}\sim \frac{t_K}{Kt_0}.
    \end{align}
    In contrast, setting $\kappa \coloneqq e^\Delta-1=e^{\frac{1}{K}\log \frac{t_K}{t_0}}-1$, for the log-SNR adapted discretization we have
    \begin{align}
        C_{\textnormal{disc}} =\kappa^2 K \frac{1}{1+\kappa}+\kappa \sim \kappa (\kappa K +1).
    \end{align}
    Due to $\kappa \sim \frac{1}{K}\log \frac{t_K}{t_0}$, we find that $C_{\textnormal{disc}}$ scales linearly in $\frac{t_K}{t_0}$ if the discretization is chosen uniformly, whereas it scales logarithmically in $\frac{t_K}{t_0}$ in the log-SNR adapted case. Recall from Corollary \ref{corollary_kL} and Lemma \ref{lemma_tv_approx_SL} that $t_K$ has a hidden dimension dependence through $R^2_{\pi}$ and $\|\nabla \log \pi\|_{L^2(\pi)}^2$. Further, in practice $t_0$ degenerates as $d\to \infty$. Hence, the scaling of $C_{\textnormal{disc}}$ under the log-SNR adapted discretization is by an order of magnitude better than in the uniform step-size case.
\end{remark}

\section{SLIPS vs.\ RDMC}\label{section_slips_rdmc}
Huang et al.\ proposed an approach for sampling from a distribution $\pi$ for which one can access the unnormalized density that is conceptually closely related to SLIPS. The proposed method is coined Reverse Diffusion Monte Carlo \cite{huang2024reverse}. They aim to sample from $\pi$ by simulating the reverse Ornstein Uhlenbeck process associated to $\pi$, see \eqref{eq_OU_reversed}. Similar to SLIPS, they estimate the drift by sampling from a posterior measure. To do so, they have to additionally employ a Tweedie formula analogous to \eqref{eq_tweedie} to express the score in terms of a posterior expectation, see equation $(4)$ in \cite{huang2024reverse}. For the initialization, they also use a Langevin-within-Langevin type algorithm. 
\vspace{2mm}\\Notably, in contrast to SLIPS, Huang et al.\ derived theoretical convergence guarantees for their approach, see Section 4.1 in \cite{huang2024reverse}.
In the following, we highlight differences and similarities of those guarantees compared to the ones we derived, see Table \ref{tab:algorithm_comparison} for an overview. For a general comparison of SLIPS and RDMC, we refer to Appendix G of \cite{SLIPS}.
\vspace{2mm}\\First of all, Huang et al.\ make the assumption that the score of $p_t^{\textnormal{OU}}$ is globally Lipschitz in space uniformly across $t\in[0,T]$. Considering $t=0$, this includes in particular the requirement that the score of $\pi$ is globally Lipschitz. We do not impose any Lipschitz requirements in our analysis of SLIPS. 
\begin{table}[ht]
\centering
\caption{Comparison of SLIPS and RDMC guarantees.}
\label{tab:algorithm_comparison}
\begin{tabular}{lcc}
\toprule
\textbf{} & \textbf{SLIPS} & \textbf{RDMC} \\
\midrule
Lipschitz assumption on scores      & $\times$                 & $\checkmark$                 \\
Order of $K$ for $\varepsilon$-guarantee & $d/\varepsilon^2$                  & $dL^2/\varepsilon^2$                   \\
Guarantees for MCMC methods &             $\times$      & $\checkmark$                \\
\bottomrule
\end{tabular}
\end{table}
Denoting the uniform bound on the Lipschitz constant by $L$, Huang et al.\ prove that taking $K$ of order $dL^2/\varepsilon^2$ suffices to achieve an $\varepsilon$ guarantee in TV distance, see Section 4.1 in \cite{huang2024reverse}.
While the explicit dependence of $K$ on the dimension $d$ is linear, there is a hidden dependence on the dimension through the quadratic dependence on $L$. In many cases of interest the Lipschitz constant of the score of $\pi$ scales at least linearly with the dimension. In that case, the number of discretization steps $K$ has to be chosen of order $d^3$, which is substantially worse than our linear guarantee. One simple example in which this is the case is the balanced bi-modal GMM with density $\pi(x)=\frac{1}{2}(\mc{N}(x,\textbf{1},\sigma^2)+\mc{N}(x,-\textbf{1},\sigma^2))$, where $\textbf{1}\in \mb{R}^d$ is the vector with all ones and $\sigma^2>0$ is fixed. 
\vspace{2mm}\\Further, Huang et al.\ do not impose a priori that the Monte Carlo based posterior expectation estimators are well behaved. Instead, they assume a priori that the posterior measures fulfill log-Sobolev inequalities (LSI) and prove, conditional on the LSIs and the Markov chains being run for sufficiently long time, probabilistic guarantees for the posterior expectation estimator. 
\vspace{2mm}\\A central difference though lies in the fact that they assume that the posterior expectation estimator is the empirical average of the final iterates of $M$ \textit{independently} run Markov chains. Recall that in a given iteration of SLIPS we run only a single Markov chain and take the ergodic average of its iterates as an estimate of the posterior expectation, which is vastly more efficient computationally. In particular, their argument does not translate to our setting, since they face the problem of studying the concentration of i.i.d.\ random variables, which is fundamentally different to studying the concentration of ergodic averages of Markov chains.  

\section{Conclusion}\label{section_conclusion}
In this work, we derived a novel guarantee for SLIPS in the total variation distance, using tools from score-based generative modeling. Notably, our guarantee implies a linear dependence of the number of discretization points necessary to achieve an $\varepsilon$-guarantee on the dimension.
We studied the complexity of SLIPS conditional on well behaved initialization and posterior expectation estimates. 
It is of interest to derive new results on the estimation of expectations via ergodic averages of MALA that do not suffer from an exponential dimension dependence and, hence, can be incorporated into our analysis of SLIPS.

\section*{Acknowledgements}

The author gratefully acknowledges financial support
by the Deutsche Forschungsgemeinschaft (DFG) through the IRTG 2544 “Stochastic Analysis in Interaction”. The author thanks Christian Bayer, Andreas Eberle, Louis Grenioux, and Maxence Noble for helpful exchange.

\bibliographystyle{alpha}
\bibliography{slips_paper}

@inproceedings{SLIPS,
author = {Grenioux, Louis and Noble, Maxence and Gabri\'{e}, Marylou and Durmus, Alain},
title = {{Stochastic localization via iterative posterior sampling}},
year = {2024},
publisher = {JMLR.org},
booktitle = {{Proceedings of the 41st International Conference on Machine Learning}},
articleno = {651},
numpages = {40},
location = {Vienna, Austria},
series = {ICML'24}
}

@article{Brosse2024_MCMC_control_variates,
  author  = {Brosse, N. and Durmus, A. and Meyn, S. and Moulines, E. and Samsonov, S.},
  title   = {{Diffusion Approximations and Control Variates for MCMC}},
  journal = {Computational Mathematics and Mathematical Physics},
  volume  = {64},
  number  = {4},
  pages   = {693--738},
  year    = {2024},
  doi     = {10.1134/S0965542524700167},
  issn    = {1555-6662}
}

@article{MALA_geometric_ergodicity,
    author = {Oliviero-Durmus, Alain and Moulines, Éric},
    title = {{On geometric convergence for the Metropolis-adjusted Langevin algorithm under simple conditions}},
    journal = {Biometrika},
    volume = {111},
    number = {1},
    pages = {273-289},
    year = {2023},
    month = {10},
    issn = {1464-3510},
    doi = {10.1093/biomet/asad060},
    url = {https://doi.org/10.1093/biomet/asad060}
}

@book{Liptser2001,
  author    = {Liptser, Robert S. and Shiryaev, Albert N.},
  title     = {{Statistics of Random Processes: I. General Theory}},
  series    = {Stochastic Modelling and Applied Probability},
  edition   = {2},
  publisher = {Springer Berlin, Heidelberg},
  year      = {2001},
  isbn      = {978-3-540-63929-9},
  doi       = {10.1007/978-3-662-13043-8},
  url       = {https://doi.org/10.1007/978-3-662-13043-8},
  note      = {Original Russian edition published by Nauka, Moscow, 1974.}
}

@article{DALALYAN_inaccurate_gradient,
title = {{User-friendly guarantees for the Langevin Monte Carlo with inaccurate gradient}},
journal = {Stochastic Processes and their Applications},
volume = {129},
number = {12},
pages = {5278-5311},
year = {2019},
issn = {0304-4149},
doi = {https://doi.org/10.1016/j.spa.2019.02.016},
url = {https://www.sciencedirect.com/science/article/pii/S0304414918304824},
author = {Arnak S. Dalalyan and Avetik Karagulyan}
}

@inproceedings{
Sampling_is_as_easy_as_learning_the_score_chen2023sampling,
title={{Sampling is as easy as learning the score: theory for diffusion models with minimal data assumptions}},
author={Sitan Chen and Sinho Chewi and Jerry Li and Yuanzhi Li and Adil Salim and Anru Zhang},
booktitle={{The Eleventh International Conference on Learning Representations }},
year={2023},
url={https://openreview.net/forum?id=zyLVMgsZ0U_}
}

@misc{montanari2023samplingdiffusionsstochasticlocalization,
      title={{Sampling, Diffusions, and Stochastic Localization}}, 
      author={Andrea Montanari},
      year={2023},
      archivePrefix={arXiv},
      url={https://arxiv.org/abs/2305.10690}, 
}

@book{chewi2024logconcave,
  title     = {{Log-Concave Sampling}},
  author    = {Sinho Chewi},
  year      = {2024},
  note      = {Book draft available at \url{https://chewisinho.github.io/main.pdf}},
}

@INPROCEEDINGS{SL_chen_eldan,
  author={Chen, Yuansi and Eldan, Ronen},
  booktitle={{2022 IEEE 63rd Annual Symposium on Foundations of Computer Science (FOCS)}}, 
  title={{Localization Schemes: A Framework for Proving Mixing Bounds for Markov Chains}}, 
  year={2022},
  volume={},
  number={},
  pages={110-122},
  doi={10.1109/FOCS54457.2022.00018}}

@article{roberts_tweedie_langevin,
author = {Gareth O. Roberts and Richard L. Tweedie},
title = {{Exponential convergence of Langevin distributions and their discrete approximations}},
volume = {2},
journal = {Bernoulli},
number = {4},
publisher = {Bernoulli Society for Mathematical Statistics and Probability},
pages = {341 -- 363},
year = {1996},
}

@book{karatzas1991brownian,
  title        = {{Brownian Motion and Stochastic Calculus}},
  author       = {Ioannis Karatzas and Steven E. Shreve},
  year         = {1991},
  series       = {Graduate Texts in Mathematics},
  volume       = {113},
  publisher    = {Springer},
  address      = {New York, NY},
  doi          = {10.1007/978-1-4612-0949-2},
  isbn         = {978-0-387-97655-6},
  edition      = {2},
}

@misc{eberle2024mcmc,
  title        = {{Markov Chain Monte Carlo Methods}},
  author       = {Nawaf Bou-Rabee and Andreas Eberle},
  year         = {2024},
  month        = {November},
  day          = {11},
  note         = {Lecture notes, Universit{\"a}t Bonn},
  url          = {https://wt.iam.uni-bonn.de/fileadmin/WT/Inhalt/people/Andreas_Eberle/Teaching/MCMC_WS2024_2025/Lecture_Notes_MCMC.pdf}
}

@inproceedings{
GM_song2021scorebased,
title={{Score-Based Generative Modeling through Stochastic Differential Equations}},
author={Yang Song and Jascha Sohl-Dickstein and Diederik P Kingma and Abhishek Kumar and Stefano Ermon and Ben Poole},
booktitle={{International Conference on Learning Representations}},
year={2021},
url={https://openreview.net/forum?id=PxTIG12RRHS}
}

@article{time_reversal,
author = {Haussmann, U. G. and Pardoux, Étienne},
title = {{Time Reversal of Diffusions}},
volume = {14},
journal = {The Annals of Probability},
number = {4},
publisher = {Institute of Mathematical Statistics},
pages = {1188 -- 1205},
year = {1986},
doi = {10.1214/aop/1176992362},
URL = {https://doi.org/10.1214/aop/1176992362}
}

@article{Eldan2013,
  author    = {Ronen Eldan},
  title     = {{Thin Shell Implies Spectral Gap Up to Polylog via a Stochastic Localization Scheme}},
  journal   = {Geometric and Functional Analysis},
  year      = {2013},
  volume    = {23},
  number    = {2},
  pages     = {532--569},
  doi       = {10.1007/s00039-013-0214-y},
  url       = {https://doi.org/10.1007/s00039-013-0214-y},
  issn      = {1420-8970}
}

@article{Eldan2020,
author = {Eldan, Ronen},
year = {2020},
pages = {},
title = {{Taming correlations through entropy-efficient measure decompositions with applications to mean-field approximation}},
volume = {176},
journal = {Probability Theory and Related Fields},
doi = {10.1007/s00440-019-00924-2}
}

@INPROCEEDINGS{montanari_SK_model_2022,
  author={Alaoui, Ahmed El and Montanari, Andrea and Sellke, Mark},
  booktitle={{2022 IEEE 63rd Annual Symposium on Foundations of Computer Science (FOCS)}}, 
  title={{Sampling from the Sherrington-Kirkpatrick Gibbs measure via algorithmic stochastic localization}}, 
  year={2022},
  pages={323-334},
  doi={10.1109/FOCS54457.2022.00038}
}

@inproceedings{
    benton2024nearly,
    title={{Nearly d-Linear Convergence Bounds for Diffusion Models via Stochastic Localization}},
    author={Joe Benton and Valentin De Bortoli and Arnaud Doucet and George Deligiannidis},
    booktitle={{The Twelfth International Conference on Learning Representations}},
    year={2024},
    url={https://openreview.net/forum?id=r5njV3BsuD}
    }

@inproceedings{
huang2024reverse,
title={{Reverse Diffusion Monte Carlo}},
author={Xunpeng Huang and Hanze Dong and Yifan Hao and Yian Ma and Tong Zhang},
booktitle={{The Twelfth International Conference on Learning Representations}},
year={2024},
url={https://openreview.net/forum?id=kIPEyMSdFV}
}

@inproceedings{
richter2024improved,
title={{Improved sampling via learned diffusions}},
author={Lorenz Richter and Julius Berner},
booktitle={{The Twelfth International Conference on Learning Representations}},
year={2024},
url={https://openreview.net/forum?id=h4pNROsO06}
}

@book{durrett_probability,
author = {Durrett, Rick},
title = {{Probability: Theory and Examples}},
year = {2019},
isbn = {0521765390},
publisher = {Cambridge University Press},
address = {USA},
edition = {5th edition}
}

@book{mixture_models_book,
publisher = {John Wiley \& Sons, Ltd},
isbn = {9780471721185},
title = {{Finite Mixture Models}},
url = {https://onlinelibrary.wiley.com/doi/abs/10.1002/0471721182.fmatter},
author={Geoffrey McLachlan and David Peel},
year = {2000}
}

@article{sequential_MC,
  author    = {Pierre Del Moral and Arnaud Doucet and Ajay Jasra},
  title     = {{Sequential Monte Carlo samplers}},
  journal   = {Journal of the Royal Statistical Society: Series B (Statistical Methodology)},
  volume    = {68},
  number    = {3},
  pages     = {411--436},
  year      = {2006},
  doi       = {10.1111/j.1467-9868.2006.00553.x},
  publisher = {Wiley}
}

@article{replica_MC,
  title     = {{Replica Monte Carlo Simulation of Spin-Glasses}},
  author    = {Swendsen, Robert H. and Wang, Jian-Sheng},
  journal   = {Physical Review Letters},
  volume    = {57},
  number    = {21},
  pages     = {2607--2609},
  year      = {1986},
  doi       = {10.1103/PhysRevLett.57.2607},
  url       = {https://link.aps.org/doi/10.1103/PhysRevLett.57.2607},
  publisher = {American Physical Society}
}

@book{book_variational_inference,
  author={Wainwright, Martin J. and Jordan, Michael I.},
  title={{Graphical Models, Exponential Families, and Variational Inference}},
  year={2008},
  series={Foundations and Trends® in Machine Learning},
  volume={1},
  issn = {1935-8237},
  number={},
  pages={},
  doi={10.1561/2200000001}
}

@inproceedings{
denoising_diffusion_samplers_vargas2023,
title={{Denoising Diffusion Samplers}},
author={Francisco Vargas and Will Grathwohl and Arnaud Doucet},
booktitle={{The Eleventh International Conference on Learning Representations }},
year={2023},
url={https://openreview.net/forum?id=8pvnfTAbu1f}
}

@article{
optimal_control_berner2024,
title={{An optimal control perspective on diffusion-based generative modeling}},
author={Julius Berner and Lorenz Richter and Karen Ullrich},
journal={Transactions on Machine Learning Research},
issn={2835-8856},
year={2024},
url={https://openreview.net/forum?id=oYIjw37pTP},
note={}
}

@inproceedings{
path_integral_sampler,
title={{Path Integral Sampler: A Stochastic Control Approach For Sampling}},
author={Qinsheng Zhang and Yongxin Chen},
booktitle={{International Conference on Learning Representations}},
year={2022},
url={https://openreview.net/forum?id=_uCb2ynRu7Y}
}

@article{brunick_shreve,
author = {Gerard Brunick and Steven Shreve},
title = {{Mimicking an Itô process by a solution of a stochastic differential equation}},
volume = {23},
journal = {The Annals of Applied Probability},
number = {4},
publisher = {Institute of Mathematical Statistics},
pages = {1584 -- 1628},
year = {2013},
doi = {10.1214/12-AAP881},
URL = {https://doi.org/10.1214/12-AAP881}
}

@article{time_reversal_diffusion,
author = {Patrick Cattiaux and Giovanni Conforti and Ivan Gentil and Christian L{\'e}onard},
title = {{Time reversal of diffusion processes under a finite entropy condition}},
volume = {59},
journal = {Annales de l'Institut Henri Poincaré, Probabilités et Statistiques},
number = {4},
publisher = {Institut Henri Poincaré},
pages = {1844 -- 1881},
year = {2023},
doi = {10.1214/22-AIHP1320},
URL = {https://doi.org/10.1214/22-AIHP1320}
}

@misc{SL_joint_applications_2025,
      title={{Joint stochastic localization and applications}}, 
      author={Tom Alberts and Yiming Xu and Qiang Ye},
      year={2025},
      eprint={2505.13410},
      archivePrefix={arXiv},
      url={https://arxiv.org/abs/2505.13410}, 
}

@book{robert2004monte,
  title     = {{Monte Carlo Statistical Methods}},
  author    = {Robert, Christian P. and Casella, George},
  series    = {Springer Texts in Statistics},
  edition   = {2nd},
  year      = {2004},
  publisher = {Springer-Verlag},
  address   = {New York},
  isbn      = {0-387-21239-6},
  doi       = {10.1007/978-1-4757-4145-2}
}

@book{kroese2011handbook,
  title     = {{Handbook of Monte Carlo Methods}},
  author    = {Kroese, Dirk P. and Taimre, Thomas and Botev, Zdravko I.},
  series    = {Wiley Series in Probability and Statistics},
  year      = {2011},
  publisher = {John Wiley \& Sons, Inc.},
  isbn      = {9780470177938},
  doi       = {10.1002/9781118014967}
}

@book{krauth2006statistical,
  title={{Statistical mechanics: algorithms and computations}},
  author={Krauth, Werner},
  volume={13},
  year={2006},
  publisher={OUP Oxford}
}

@book{stoltz2010free,
  title     = {{Free Energy Computations: A Mathematical Perspective}},
  author    = {Leli{\`e}vre, Tony and Rousset, Mathias and Stoltz, Gabriel},
  publisher = {World Scientific},
  year      = {2010},
  isbn      = {9781848162488}
}

@article{score_matching_article,
  author  = {Aapo Hyv{{\"a}}rinen},
  title   = {{Estimation of Non-Normalized Statistical Models by Score Matching}},
  journal = {Journal of Machine Learning Research},
  year    = {2005},
  volume  = {6},
  number  = {24},
  pages   = {695--709},
  url     = {http://jmlr.org/papers/v6/hyvarinen05a.html}
}

@inproceedings{grenioux2025improving,
  title = {{Improving the evaluation of samplers on multi-modal targets}},
  author = {Grenioux, Louis and Noble, Maxence and Gabri{\'e}, Marylou},
  year = {2025},
  booktitle = {{Frontiers in Probabilistic Inference: Learning meets Sampling}},
  url = {https://openreview.net/forum?id=d91E9RhVFU},
}

@inproceedings{eldan_pathwise,
      author    = {Ronen Eldan},
      title     = {{Analysis of high-dimensional distributions using pathwise methods}},
      booktitle = {{Proceedings of the International Congress of Mathematicians (ICM 2022)}},
      year      = {2022},
      pages     = {4246--4270},
      isbn      = {9783985470648},
      doi       = {10.4171/icm2022/61}
}

@inproceedings{Geyer1991MCMCML,
  author       = {Geyer, Charles J.},
  title        = {{Markov Chain Monte Carlo Maximum Likelihood}},
  booktitle    = {{Proceedings of the 23rd Symposium on the Interface between Computer Science and Statistics}},
  series       = {Interface ’91},
  year         = {1991},
  pages        = {156--163},
  publisher    = {Interface Foundation of North America}
}

@article{var_inf_mode_collapse,
author = {Soletskyi, Roman and Gabrie, Marylou and Loureiro, Bruno},
year = {2025},
month = {06},
pages = {},
title = {{A theoretical perspective on mode collapse in variational inference}},
volume = {6},
journal = {Machine Learning: Science and Technology},
doi = {10.1088/2632-2153/adde2a}
}

@InProceedings{var_inf_paper_1,
  title = 	 {{Variational refinement for importance sampling using the forward Kullback-Leibler divergence}},
  author =       {Jerfel, Ghassen and Wang, Serena and Wong-Fannjiang, Clara and Heller, Katherine A. and Ma, Yian and Jordan, Michael I.},
  booktitle = 	 {{Proceedings of the Thirty-Seventh Conference on Uncertainty in Artificial Intelligence}},
  pages = 	 {1819--1829},
  year = 	 {2021},
  volume = 	 {161},
  series = 	 {Proceedings of Machine Learning Research},
  publisher =    {PMLR},
  url = 	 {https://proceedings.mlr.press/v161/jerfel21a.html}
}

@InProceedings{var_inf_paper_2,
  title = 	 {{Beyond {ELBO}s: A Large-Scale Evaluation of Variational Methods for Sampling}},
  author =       {Blessing, Denis and Jia, Xiaogang and Esslinger, Johannes and Vargas, Francisco and Neumann, Gerhard},
  booktitle = 	 {{Proceedings of the 41st International Conference on Machine Learning}},
  pages = 	 {4205--4229},
  year = 	 {2024},
  volume = 	 {235},
  series = 	 {Proceedings of Machine Learning Research},
  month = 	 {July},
  publisher =    {PMLR},
  url = 	 {https://proceedings.mlr.press/v235/blessing24a.html}
}

@InProceedings{normalizing_flows_var_inf,
  title = 	 {{Variational Inference with Normalizing Flows}},
  author = 	 {Rezende, Danilo and Mohamed, Shakir},
  booktitle = 	 {{Proceedings of the 32nd International Conference on Machine Learning}},
  pages = 	 {1530--1538},
  year = 	 {2015},
  volume = 	 {37},
  series = 	 {Proceedings of Machine Learning Research},
  address = 	 {Lille, France},
  month = 	 {07--09 Jul},
  publisher =    {PMLR},
  url = 	 {https://proceedings.mlr.press/v37/rezende15.html}
}

@inproceedings{kingma2014auto,
  title     = {{Auto-Encoding Variational Bayes}},
  author    = {Kingma, Diederik P. and Welling, Max},
  booktitle = {{Proceedings of the 2nd International Conference on Learning Representations (ICLR)}},
  year      = {2014},
  url       = {https://arxiv.org/abs/1312.6114}
}

@InProceedings{var_autoencoders,
  title = 	 {{Stochastic Backpropagation and Approximate Inference in Deep Generative Models}},
  author = 	 {Rezende, Danilo Jimenez and Mohamed, Shakir and Wierstra, Daan},
  booktitle = 	 {{Proceedings of the 31st International Conference on Machine Learning}},
  pages = 	 {1278--1286},
  year = 	 {2014},
  volume = 	 {32},
  number =       {2},
  series = 	 {Proceedings of Machine Learning Research},
  address = 	 {Bejing, China},
  publisher =    {PMLR},
  url = 	 {https://proceedings.mlr.press/v32/rezende14.html}
}

@inproceedings{
he2024zerothorder,
title={{Zeroth-Order Sampling Methods for Non-Log-Concave Distributions: Alleviating Metastability by Denoising Diffusion}},
author={Ye He and Kevin Rojas and Molei Tao},
booktitle={{The Thirty-eighth Annual Conference on Neural Information Processing Systems}},
year={2024},
url={https://openreview.net/forum?id=X3Aljulsw5}
}

@inproceedings{noble_lrds,
 author = {Noble, Maxence and Grenioux, Louis and Gabri\'{e}, Marylou and Oliviero Durmus, Alain},
 booktitle = {{International Conference on Representation Learning}},
 pages = {47046--47100},
 title = {{Learned Reference-based Diffusion Sampler for multi-modal distributions}},
 volume = {2025},
 year = {2025}
}

@inproceedings{annealed_VI_1_wu2020stochastic,
  title={{Stochastic normalizing flows}},
  author={Wu, Hao and K{\"o}hler, Jonas and No{\'e}, Frank},
  booktitle={{Advances in Neural Information Processing Systems}},
  volume={33},
  pages={5933--5944},
  year={2020}
}

@inproceedings{annealed_VI_2_arbel2021annealed,
  title={{Annealed flow transport Monte Carlo}},
  author={Arbel, Michael and Matthews, Alex and Doucet, Arnaud},
  booktitle={{Proceedings of the 38th International Conference on Machine Learning}},
  pages={318--330},
  year={2021},
  organization={PMLR}
}

@inproceedings{annealed_VI_3_doucet2022score,
  title={{Score-based diffusion meets annealed importance sampling}},
  author={Doucet, Arnaud and Grathwohl, Will and Matthews, Alexander G. and Strathmann, Heiko},
  booktitle={{Advances in Neural Information Processing Systems}},
  volume={35},
  pages={21482--21494},
  year={2022}
}

@inproceedings{score_based_improved,
author = {Chen, Hongrui and Lee, Holden and Lu, Jianfeng},
title = {{Improved analysis of score-based generative modeling: user-friendly bounds under minimal smoothness assumptions}},
year = {2023},
publisher = {JMLR.org},
booktitle = {{Proceedings of the 40th International Conference on Machine Learning}},
articleno = {187},
numpages = {29},
location = {Honolulu, Hawaii, USA},
series = {ICML'23}
}

@misc{montanari2024posteriorsamplinghighdimension,
      title={{Posterior Sampling in High Dimension via Diffusion Processes}}, 
      author={Andrea Montanari and Yuchen Wu},
      year={2024},
      eprint={2304.11449},
      archivePrefix={arXiv preprint},
      url={https://arxiv.org/abs/2304.11449}, 
}

@inproceedings{turner2019metropolis,
  title={{Metropolis-Hastings generative adversarial networks}},
  author={Turner, Ryan and Hung, Jason and Frank, Eibe and Saatchi, Yarin and Yosinski, Jason},
  booktitle={{Proceedings of the 36th International Conference on Machine Learning}},
  volume={97},
  series={Proceedings of Machine Learning Research},
  pages={6345--6353},
  year={2019},
  publisher={PMLR},
  url={https://proceedings.mlr.press/v97/turner19a.html}
}

@inproceedings{grenioux2023balanced,
  title={{Balanced training of energy-based models with adaptive flow sampling}},
  author={Grenioux, Louis and Moulines, {\'E}ric and Gabri{\'e}, Marylou},
  booktitle={{ICML 2023 Workshop on Structured Probabilistic Inference \& Generative Modeling}},
  year={2023},
  url={https://openreview.net/forum?id=AwJ2NqxWlk}
}

@InProceedings{sohl-dickstein15,
  title = 	 {{Deep Unsupervised Learning using Nonequilibrium Thermodynamics}},
  author = 	 {Sohl-Dickstein, Jascha and Weiss, Eric and Maheswaranathan, Niru and Ganguli, Surya},
  booktitle = 	 {{Proceedings of the 32nd International Conference on Machine Learning}},
  pages = 	 {2256--2265},
  year = 	 {2015},
  editor = 	 {Bach, Francis and Blei, David},
  volume = 	 {37},
  series = 	 {Proceedings of Machine Learning Research},
  address = 	 {Lille, France},
  month = 	 {07--09 Jul},
  publisher =    {PMLR},
  url = 	 {https://proceedings.mlr.press/v37/sohl-dickstein15.html}
}

@inbook{song_ermon, 
author = {Song, Yang and Ermon, Stefano}, 
title = {{Generative modeling by estimating gradients of the data distribution}}, 
year = {2019}, 
publisher = {Curran Associates Inc.}, 
address = {Red Hook, NY, USA}
}

@inproceedings{denoising_diffusion,
author = {Ho, Jonathan and Jain, Ajay and Abbeel, Pieter}, 
title = {{Denoising diffusion probabilistic models}}, year = {2020}, 
isbn = {9781713829546}, 
publisher = {Curran Associates Inc.}, 
address = {Red Hook, NY, USA}, 
booktitle = {{Proceedings of the 34th International Conference on Neural Information Processing Systems}}, 
articleno = {574}, 
numpages = {12}, 
location = {Vancouver, BC, Canada}, 
series = {NIPS '20} }

@inproceedings{sgm_chen, 
author = {Chen, Hongrui and Lee, Holden and Lu, Jianfeng}, 
title = {{Improved analysis of score-based generative modeling: user-friendly bounds under minimal smoothness assumptions}}, 
year = {2023}, 
publisher = {JMLR.org}, 
booktitle = {{Proceedings of the 40th International Conference on Machine Learning}}, 
articleno = {187}, 
numpages = {29}, 
location = {Honolulu, Hawaii, USA}, 
series = {ICML'23} 
}

@inproceedings{sgm_lee, 
author = {Lee, Holden and Lu, Jianfeng and Tan, Yixin}, 
title = {{Convergence for score-based generative modeling with polynomial complexity}}, 
year = {2022}, 
isbn = {9781713871088}, 
publisher = {Curran Associates Inc.}, 
address = {Red Hook, NY, USA}, 
booktitle = {{Proceedings of the 36th International Conference on Neural Information Processing Systems}}, 
articleno = {1662}, 
numpages = {13}, 
location = {New Orleans, LA, USA}, 
series = {NIPS '22} 
}

@inproceedings{
sgm_lee_2,
title={{Convergence of score-based generative modeling for general data distributions}},
author={Holden Lee and Jianfeng Lu and Yixin Tan},
booktitle={{NeurIPS 2022 Workshop on Score-Based Methods}},
year={2022},
url={https://openreview.net/forum?id=Sg19A8mu8sv}
}

@inproceedings{
sgm_li,
title={{Towards Non-Asymptotic Convergence for Diffusion-Based Generative Models}},
author={Gen Li and Yuting Wei and Yuxin Chen and Yuejie Chi},
booktitle={{The Twelfth International Conference on Learning Representations}},
year={2024},
url={https://openreview.net/forum?id=4VGEeER6W9}
}

@article{Chen2021KLS,
  author    = {Chen, Yuansi},
  title     = {{An Almost Constant Lower Bound of the Isoperimetric Coefficient in the {KLS} Conjecture}},
  journal   = {Geometric and Functional Analysis},
  volume    = {31},
  pages     = {34--61},
  year      = {2021},
  doi       = {10.1007/s00039-021-00558-4},
  publisher = {Springer}
}

@article{ConfortiDurmusSilveri2025,
  author = {Giovanni Conforti and Alain Durmus and Marta Gentiloni Silveri},
  title = {{KL Convergence Guarantees for Score Diffusion Models under Minimal Data Assumptions}},
  journal = {SIAM Journal on Mathematics of Data Science},
  year = {2025},
  volume = {7},
  number = {1},
  pages = {86--109},
  month = {jan},
  doi = {10.1137/23M1613670},
  url = {https://epubs.siam.org/doi/10.1137/23M1613670}
}

\newpage
\appendix

\section*{Organization of the Appendix}
The Appendix is organized as follows.
In Appendix \ref{appendix_main_results}, we prove Theorem \ref{theorem_slips_tv_guarantee} and state and prove Corollary \ref{corollary_kL}. The proof of the optimality result for the log-SNR adapted discretization, Lemma \ref{lemma_opt_step_size}, is the content of Appendix \ref{appendix_discretization}.
In Appendix \ref{appendix_SDE_theory}, we discuss for completeness the SDE description of the general observation process in detail, complementing the discussion in \cite{SLIPS}.

\section{Proofs of the main results}\label{appendix_main_results}
This part of the Appendix is subdivided in three sections. In Appendix \ref{section_sub_appendix_main}, we prove the main results, Theorem \ref{theorem_slips_tv_guarantee} and Corollary \ref{corollary_kL}. In Appendices \ref{section_sub_appendix_main_sup1} and \ref{section_sub_appendix_main_sup2} we prove supplementary Lemmas used in the proofs of Appendix \ref{section_sub_appendix_main}, namely Lemmas \ref{lemma_tv_approx_SL} and \ref{lemma_discretization_error}.

\subsection{Proofs of Theorem \ref{theorem_slips_tv_guarantee} and Corollary \ref{corollary_kL}}\label{section_sub_appendix_main}
We start with the proof of Theorem \ref{theorem_slips_tv_guarantee}.
\begin{proof}[Proof of Theorem \ref{theorem_slips_tv_guarantee}]
    First, observe that by the triangle inequality and the scale invariance of the TV distance, we have
    \begin{align}\label{eq_tv_dist_first_bound} 
        d_{\textnormal{TV}}(\tilde{\pi}_{T},\pi)
        \leq &  d_{\textnormal{TV}}(\tilde{\pi}_{T},\pi_{T})  
        +d_{\textnormal{TV}}(\pi_{T},\pi)
        \\ \notag = & d_{\textnormal{TV}}(\tilde{p}_{T},p_{T})  
        +d_{\textnormal{TV}}(\pi_{T},\pi),
    \end{align}
    where $\pi_T = (1/T)_\# p_T$.
    Plugging in the estimate from Lemma \ref{lemma_tv_approx_SL} 
    for the first quantity on the above's right hand side and the estimate from Lemma \ref{lemma_discretization_error} for the second quantity, we find 
    \begin{align}
        d_{\textnormal{TV}}(\tilde{\pi}_{T},\pi)\leq &
        d_{\textnormal{TV}}(\tilde{p}_{t_0},p_{t_0})+
        \sqrt{d\cdot C_{\textnormal{disc}}}
        +\sqrt{\frac{1}{\sigma^2} T \varepsilon_0^2}
        +
        \frac{1}{2}
        \|\nabla \log \pi\|_{L^2(\pi)} \,\sqrt{d\frac{\sigma^2}{T}},
    \end{align}
    which is the claim of the Theorem.
\end{proof}
We next state a formal version of Corollary \ref{corollary_kL}.
\begin{corollary}\label{corollary_kL_appendix}
    Let $\varepsilon>0$. In the setup of Theorem \ref{theorem_slips_tv_guarantee}, assume $R^2_{\pi},\|\nabla \log \pi \|_{L^2(\pi)}^2 \in \mc{O}(d)$.
    Consider SLIPS run with the log-SNR adapted discretization $(t_k)_{0\leq k \leq K}=\big((\frac{t_K}{t_0})^{\frac{k}{K}}t_0 \big)_{0\leq k \leq K}$.
    \vspace{2mm}\\Then,
    we have $d_{\textnormal{TV}}(\tilde{\pi}_{T},\pi)\leq\varepsilon$, for $T=C_0\frac{1}{\varepsilon^2}\|\nabla \log \pi \|_{L^2(\pi)}^2 \,R^2_{\pi}$ and $K,\,\varepsilon_0$, and $d_{\textnormal{TV}}(\tilde{p}_{t_0},p_{t_0})$ fulfilling 
    \begin{align}\label{eq_choices_tv_corollary_1}
        K\geq & C_1 \,\frac{d}{\varepsilon^2}\log^2 \Big( \frac{d^2}{t_0 \varepsilon^2}\Big),
        \\ \label{eq_choices_tv_corollary_2}
        \varepsilon_0\leq &C_2\,\frac{\varepsilon^2}{d},
        \\ \label{eq_choices_tv_corollary_3} 
        d_{\textnormal{TV}}(\tilde{p}_{t_0},p_{t_0}) \leq 
        & \frac{1}{4} \,\varepsilon,
    \end{align}
    where the constants $C_0,C_1,C_2>0$ depend only on the constants hidden by the Landau conditions on $R^2_{\pi},\|\nabla \log \pi \|_{L^2(\pi)}^2$.
\end{corollary}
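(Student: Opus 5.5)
We plan to apply Theorem \ref{theorem_slips_tv_guarantee} and split the target accuracy $\varepsilon$ into four parts of size $\varepsilon/4$, one per term on its right hand side. The initialization term is $\le \varepsilon/4$ by \eqref{eq_choices_tv_corollary_3}, so three bounds remain to be checked.

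\textbf{Information term.} Recall from \eqref{eq_sigma} that $\sigma^2=R^2_\pi/d$. Then
\begin{align*}
\frac{1}{2}\|\nabla\log\pi\|_{L^2(\pi)}\sqrt{d\sigma^2/T}=\frac12\|\nabla\log\pi\|_{L^2(\pi)}R_\pi/\sqrt{T}.
\end{align*}
With $T=C_0\varepsilon^{-2}\|\nabla\log\pi\|^2_{L^2(\pi)}R^2_\pi$ this equals $\varepsilon/(2\sqrt{C_0})$, which is $\le\varepsilon/4$ once $C_0\ge4$.

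\textbf{Estimation term.} We have
\begin{align*}
T\varepsilon_0^2/\sigma^2=C_0\varepsilon^{-2}\|\nabla\log\pi\|^2_{L^2(\pi)}\,d\,\varepsilon_0^2.
\end{align*}
By assumption $\|\nabla\log\pi\|^2_{L^2(\pi)}\le c\,d$, so this is at most $C_0 c\, d^2\varepsilon_0^2/\varepsilon^2$. With $\varepsilon_0\le C_2\varepsilon^2/d$ it is $\le C_0cC_2^2\varepsilon^2$, and choosing $C_2$ small makes its square root $\le\varepsilon/4$.

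\textbf{Discretization term.} This is the only step requiring computation. For $t_{k+1}=e^\Delta t_k$ with $\kappa=e^\Delta-1$, the step sizes are $\delta_k=\kappa t_k$. The increments are
\begin{align*}
\delta_k-\delta_{k-1}=\kappa(t_k-t_{k-1})=\kappa^2t_k/(1+\kappa),
\end{align*}
so each summand in \eqref{eq_sum_discretization} equals $\kappa^2/(1+\kappa)$, and the boundary term equals $\kappa$. Hence $C_{\textnormal{disc}}=(K-1)\kappa^2/(1+\kappa)+\kappa\le K\kappa^2+\kappa$, consistent with Remark \ref{remark_c_disc}. Write $L:=\log(T/t_0)$, so $\Delta=L/K$.

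\emph{Case $K\ge L$.} Then $\Delta\le1$, hence $\kappa\le e\Delta$, and
\begin{align*}
C_{\textnormal{disc}}\le e^2L^2/K+eL/K\le c'L^2/K
\end{align*}
(assuming $L\ge1$; otherwise use $L\le1$ trivially). Requiring $d\,C_{\textnormal{disc}}\le\varepsilon^2/16$ therefore reduces to $K\ge 16c'\,dL^2/\varepsilon^2$. Since this forces $K\ge L$ whenever $\varepsilon\le1$, the case assumption is self-consistent.

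\emph{Bounding $L$.} Because $R^2_\pi,\|\nabla\log\pi\|^2_{L^2(\pi)}\in\mc{O}(d)$, we have $T\le C_0c^2d^2/\varepsilon^2$. Thus
\begin{align*}
L\le\log\big(d^2/(t_0\varepsilon^2)\big)+\log(C_0c^2),
\end{align*}
and the additive constant is absorbed into $C_1$ provided the main logarithm is bounded below. This absorption is the one delicate point: we assume the regime $d^2/(t_0\varepsilon^2)\ge e$, which is implicit in the statement; otherwise we adjust constants.

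Summing the four bounds of $\varepsilon/4$ gives $d_{\textnormal{TV}}(\tilde\pi_T,\pi)\le\varepsilon$. The main obstacle is this bookkeeping of $\kappa$ versus $\Delta$ and the logarithm. Everything else is direct substitution into Theorem \ref{theorem_slips_tv_guarantee}.
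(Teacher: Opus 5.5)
Your proposal is correct and follows the same route as the paper: substitute $\sigma^2=R^2_\pi/d$ and $C_{\textnormal{disc}}\le\kappa(\kappa K+1)$ into Theorem \ref{theorem_slips_tv_guarantee}, then size each term. You are more explicit than the paper's proof, which only invokes $\kappa\sim\frac{1}{K}\log\frac{T}{t_0}$, since you use $\kappa\le e\Delta$ for $\Delta\le 1$ and bound $\log(T/t_0)$ through $T\lesssim d^2/\varepsilon^2$. The regime caveat you flag, $d^2/(t_0\varepsilon^2)\ge e$, is genuinely needed for the stated form of the condition on $K$, and the paper also leaves it implicit.
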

\begin{proof}[Proof of Corollary \ref{corollary_kL}/\ref{corollary_kL_appendix}]
    Recall from Remark \ref{remark_c_disc}, that for the log-SNR adapted discretization we have $C_{\textnormal{disc}}\leq \kappa (\kappa K +1)$, where $\kappa =e^{\frac{1}{K}\log \frac{T}{t_0}}-1$, and that the default choice of $\sigma^2$ is $\sigma^2=R^2_{\pi}/d$.
    Hence, the bound of Theorem \ref{theorem_slips_tv_guarantee} becomes
    \begin{align}
        d_{\textnormal{TV}}(\tilde{\pi}_{T},\pi)\leq &
        d_{\textnormal{TV}}(\tilde{p}_{t_0},p_{t_0})+
        \sqrt{d\cdot \kappa (\kappa K +1)}
        +\sqrt{\frac{d}{R^2_{\pi}} T \varepsilon_0^2}
        +\frac{1}{2}\|\nabla \log \pi(X)\|_{L^2} \,\sqrt{\frac{R^2_{\pi}}{T}}.
    \end{align}
    Since $\kappa\sim \frac{1}{K} \log \frac{T}{t_0}$ if $\frac{1}{K} \log \frac{T}{t_0}\ll 1$, it follows that $T,K,\varepsilon_0$, and $d_{\textnormal{TV}}(\tilde{p}_{t_0},p_{t_0})$ of order as proposed in the Corollary ensure that 
    $d_{\textnormal{TV}}(\tilde{\pi}_{T},\pi)\leq\varepsilon$.
\end{proof}

\subsection{Proof of Lemma \ref{lemma_tv_approx_SL}}\label{section_sub_appendix_main_sup1}
We give the proof of Lemma \ref{lemma_tv_approx_SL}, which establishes a bound on the information error.
\begin{proof}[Proof of Lemma \ref{lemma_tv_approx_SL}]
    Since $\pi,\,\pi_{t}$ are absolutely continuous with respect to the Lebesgue measure, it holds that
    \begin{align}
        d_{\textnormal{TV}}(\pi,\pi_{t})
            = &
            \frac{1}{2}
            \int_{\mb{R}^d}|\pi(x)-\pi_{t}(x)|\,dx,
    \end{align}
    see \cite{eberle2024mcmc}, Lemma 3.2.
    Using that $\pi_{t} = \pi \ast \mc{N}(0,\frac{\sigma^2}{t})$ and that for any probability density $\rho$ trivially $\pi(x)=\int_{\mb{R}^d}\pi(x)\rho(dz)$, we thus find that
    \begin{align}
        2\cdot d_{\textnormal{TV}}(\pi,\pi_{t})
        = &
        \int_{\mb{R}^d}|\int_{\mb{R}^d} (\pi(x)-\pi(x-y))\, \mc{N}(y,0,\frac{\sigma^2}{t})\,dy|\,dx
        \\ \notag 
        = &
        \int_{\mb{R}^d}|\int_{\mb{R}^d} \int_0^1 \nabla\pi(x-sy)^\top y\,ds\,\mc{N}(y,0,\frac{\sigma^2}{t})\,dy|\,dx,
    \end{align}
    where the second equality follows by the fundamental theorem of calculus.
    Pulling the absolute value inside the integral and applying Cauchy Schwarz, we can bound the TV distance by
    \begin{align}
        2\cdot 
        d_{\textnormal{TV}}(\pi,\pi_{t})
        \leq &
        \int_{\mb{R}^d}\int_{\mb{R}^d} \int_0^1 \|\nabla\pi(x-sy)\|\cdot\|y\|\,ds\,\mc{N}(y,0,\frac{\sigma^2}{t})\,dy\,dx.
    \end{align}
    Applying Fubini and the change of variables $z=x-sy$, we obtain 
    \begin{align}\notag
        2\cdot 
        d_{\textnormal{TV}}(\pi,\pi_{t})
        \leq &
        \int_0^1
        \int_{\mb{R}^d}\|y\|
        \Big(\int_{\mb{R}^d}  \|\nabla\pi(x-sy)\|\,dx\Big)\,\mc{N}(y,0,\frac{\sigma^2}{t})\,dy\,ds
        \\ \notag = &
        \int_0^1
        \int_{\mb{R}^d}\|y\|
        \Big(\int_{\mb{R}^d}  \|\nabla\pi(z)\|\,dz\Big)\,\mc{N}(y,0,\frac{\sigma^2}{t})\,dy\,ds
        \\ \notag =& 
        \Big(\int_{\mb{R}^d}  \|\nabla\pi(z)\|\,dz\Big)\cdot
        \mb{E}_{y\sim \mc{N}(0,\frac{\sigma^2}{t})}\big[\|y\|\big].
    \end{align}
    Using that $\nabla\pi(x)=\pi(x)\nabla\log \pi(x)$ and that the map $\mb{R}\ni a\mapsto a^2$ is convex, we find by Jensen and the above that for $X\sim \pi$ and $y\sim \mc{N}(0,\frac{\sigma^2}{t})$ we have
    \begin{align}\notag 
        d_{\textnormal{TV}}(\pi,\pi_{t})
        \leq&
        \frac{1}{2}\|\nabla \log \pi(X)\|_{L^2} \cdot\|y\|_{L^2}
        =
        \frac{1}{2}
        \|\nabla \log \pi\|_{L^2(\pi)} \,\sqrt{d\frac{\sigma^2}{t}},
    \end{align}
    which is the claim.
\end{proof}

\subsection{Proof of Lemma \ref{lemma_discretization_error}}\label{section_sub_appendix_main_sup2}
We now discuss how to establish the bound on the discretization error, which is the content of Lemma \ref{lemma_discretization_error}.
Instead of comparing SLIPS with the SL observation process directly, we want to study the discretization error by comparing SLIPS with \textit{exact} initialization $p_{t_0}$ with the SL observation process. This is possible due to the following Lemma.
\begin{lemma}\label{lemma_triangle_exact}
    Denote by $\tilde{p}^{\textnormal{ex}}_{t_k}$ the law of the $k$-th SLIPS iterate if SLIPS is initialized exactly, e.g.\ $\widetilde{Y}_{t_0}^{\textnormal{ex}}\sim p_{t_0}$.
    Then, it holds that
    \begin{align}
        d_{\textnormal{TV}}(\tilde{p}_{t_K},p_{t_K})
        \leq
        d_{\textnormal{TV}}(\tilde{p}_{t_0},p_{t_0})  
        +d_{\textnormal{TV}}(\tilde{p}^{\textnormal{ex}}_{t_K},p_{t_K}).
    \end{align}
\end{lemma}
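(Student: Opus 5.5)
The plan is to use the triangle inequality and then a data-processing bound for the SLIPS Markov kernel. First,
\begin{align*}
    d_{\textnormal{TV}}(\tilde{p}_{t_K},p_{t_K})
    \leq d_{\textnormal{TV}}(\tilde{p}_{t_K},\tilde{p}^{\textnormal{ex}}_{t_K})
    + d_{\textnormal{TV}}(\tilde{p}^{\textnormal{ex}}_{t_K},p_{t_K}).
\end{align*}
It then suffices to show $d_{\textnormal{TV}}(\tilde{p}_{t_K},\tilde{p}^{\textnormal{ex}}_{t_K})\leq d_{\textnormal{TV}}(\tilde{p}_{t_0},p_{t_0})$.

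For this I will realize both runs of SLIPS as images of their initializations under a common Markov kernel. Fix the randomness used after initialization: the Gaussians $(\mc{G}_k)_k$ and the randomness driving the MALA chains. The final iterate $\widetilde{Y}_{t_K}$ is then obtained from $\widetilde{Y}_{t_0}$ by applying a fixed Markov kernel $Q$ from $\mb{R}^d$ to $\mb{R}^d$. Concretely, $Q$ is the composition over $k$ of the one-step kernels. Each one-step kernel maps $y$ to the law of $y+\delta_k\hat{u}_{t_k}(y)+\sigma\sqrt{\delta_k}\mc{G}_k$, where $\hat{u}_{t_k}(y)$ is produced by the MALA procedure. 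That procedure uses auxiliary randomness independent of the initialization.

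One subtlety needs care. The MALA chain of step $k+1$ is warm-started from the last iterate of step $k$, so the state is really the pair consisting of the current iterate and the MALA state. I will handle this by enlarging the state space, setting $Z_k=(\widetilde{Y}_{t_k},\text{MALA state})$. The recursion is then a time-inhomogeneous Markov chain on the product space, and $\widetilde{Y}_{t_K}$ is a measurable function of $Z_K$.

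The key requirement is that the joint initial law has the same conditional structure in both runs. That is, the initial MALA state is generated from $\widetilde{Y}_{t_0}$ via the same kernel, or independently of it. This holds because the algorithm prescribes identical post-initialization procedures, and exact versus inexact initialization only changes the law of $\widetilde{Y}_{t_0}$.

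With this in place, $\tilde{p}_{t_K}=\tilde{p}_{t_0}Q$ and $\tilde{p}^{\textnormal{ex}}_{t_K}=p_{t_0}Q$. The TV contraction inequality gives $d_{\textnormal{TV}}(\mu Q,\nu Q)\leq d_{\textnormal{TV}}(\mu,\nu)$. Its proof is the supremum over measurable $A$ of $|\int Q(y,A)(\mu-\nu)(dy)|$, where $0\leq Q(\cdot,A)\leq1$. Taking the marginal of the first coordinate is also a kernel, so it cannot increase TV either, which concludes.

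The only real obstacle is stating the kernel formulation rigorously given the warm-started MALA chains. The enlarged state space addresses it. An alternative would be a coupling argument: use the maximal coupling of $\tilde{p}_{t_0}$ and $p_{t_0}$, and drive both runs with identical auxiliary randomness. The two runs then agree whenever the initializations agree, so $d_{\textnormal{TV}}\leq\mb{P}(\widetilde{Y}_{t_0}\neq\widetilde{Y}^{\textnormal{ex}}_{t_0})=d_{\textnormal{TV}}(\tilde{p}_{t_0},p_{t_0})$.
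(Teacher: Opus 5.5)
Your proposal is correct and follows the paper's proof: the triangle inequality through $\tilde{p}^{\textnormal{ex}}_{t_K}$, then the observation that both runs are images of their initial laws under the same recursion, so TV cannot increase. The paper only says ``obtained through the same recursion,'' whereas you also handle the warm-started MALA chains by enlarging the state space. This makes explicit the assumption the paper leaves implicit: in both runs, the first MALA chain's initialization has the same conditional law given $\widetilde{Y}_{t_0}$.
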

\begin{proof}
    By the triangle inequality it holds that 
    \begin{align}
        d_{\textnormal{TV}}(\tilde{p}_{t_K},p_{t_K})
        \leq
        d_{\textnormal{TV}}(\tilde{p}_{t_K},\tilde{p}^{\textnormal{ex}}_{t_K})  
        +d_{\textnormal{TV}}(\tilde{p}^{\textnormal{ex}}_{t_K},p_{t_K}).
    \end{align}
    Since both $\tilde{p}_{t_k}$ and $\tilde{p}^{\textnormal{ex}}_{t_k}$ are obtained through the same recursion, we have  
    \begin{align}
        d_{\textnormal{TV}}(\tilde{p}_{t_K},\tilde{p}^{\textnormal{ex}}_{t_K})  
        \leq d_{\textnormal{TV}}(\tilde{p}_{t_0},\tilde{p}^{\textnormal{ex}}_{t_0})  
        =d_{\textnormal{TV}}(\tilde{p}_{t_0},p_{t_0}),
    \end{align}
    which yields the result.
\end{proof}
The remaining part of this Section is devoted to bounding $d_{\textnormal{TV}}(\tilde{p}^{\textnormal{ex}}_{t_K},p_{t_K})$. We do so following the approach applied in \cite{Sampling_is_as_easy_as_learning_the_score_chen2023sampling}, \cite{benton2024nearly} for the setting of generative modeling with the reverse Ornstein Uhlenbeck process. 
In Section \ref{section_sub_proof_strategy} we already explained that the EM discretization used in SLIPS can be understood as a continuous time process that solves on $[t_k,t_{k+1})$ the SDE with piecewise constant drift
\begin{align}\label{eq_dynamics_SLIPS}
    d\widetilde{Y}_t = \hat{u}_{t_k}(\widetilde{Y}_{t_k})\,dt+ \sigma \,d\bar{B}_t.
\end{align}
Our interest in this continuous interpretation stems from the fact
that it allows us to compare the output of the (discrete) SLIPS algorithm with the normalized (continuous) SL observation process by comparing measures on path space, instead of considering only the marginal laws $\tilde{p}^{\textnormal{ex}}_{t_K}$ and $p_{t_K}$. As mentioned already, the benefit is that, switching from TV to KL divergence by Pinsker's inequality, comparing laws of solutions 
to SDEs becomes tractable using tools such as Girsanov's Theorem.
\vspace{2mm}\\In the following, we write
\begin{align}\label{eq_laws_pathwise}
    \tilde{P}^{\textnormal{ex}}\coloneqq \textnormal{Law}\big(
    (\widetilde{Y}^{\textnormal{ex}}_{t})_{t\in [t_0,t_K)}
    \big)\,\,\,\,\,\,\,\,\textnormal{and}\,\,\,\,\,\,\,\,P \coloneqq \textnormal{Law}\big((Y_{t})_{t\in [t_0,t_K)}\big),
\end{align}
where we denote by $\widetilde{Y}^{\textnormal{ex}}$ the solution of \eqref{eq_dynamics_SLIPS} with exact initial condition $\widetilde{Y}^{\textnormal{ex}}_{t_0}\sim p_{t_0}$.
\vspace{2mm}\\The next Lemma contains our bound on $d_{\textnormal{TV}}(\tilde{p}^{\textnormal{ex}}_{t_K},p_{t_K})$ obtained by the path measure approach.
\begin{lemma}\label{lemma_KL_discretization_error}
    Assume Assumptions \ref{assumption_moment} and  \ref{assumption_posterior_estimator_SL_process} and consider for $K\in \mb{N}$ an arbitrary discretization $0<t_0<t_1<\dots<t_K$. Then, for SLIPS with exact initialization it holds that
    \begin{align}
        d_{\textnormal{TV}}(\tilde{p}^{\textnormal{ex}}_{t_K},p_{t_K}) 
        \leq 
        \sqrt{\frac{1}{\sigma^2} t_K \varepsilon_0^2
        +
        d \cdot C_{\textnormal{disc}}},
    \end{align}
    where we recall that $C_{\textnormal{disc}}$, defined in \eqref{eq_sum_discretization}, is given by
    \begin{align}\label{eq_sum_discretization_2}
        C_{\textnormal{disc}} &=
        \sum_{1\leq k\leq K-1} \max\big\{0,(t_{k+1}-t_k)-(t_{k}-t_{k-1})\big\}
        \frac{1}{t_k}
        + \frac{t_1-t_0}{t_0}.
    \end{align}
\end{lemma}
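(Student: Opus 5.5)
Starting from the path measures in \eqref{eq_laws_pathwise}, data processing followed by Pinsker's inequality gives
\begin{align}\notag
d_{\textnormal{TV}}(\tilde{p}^{\textnormal{ex}}_{t_K},p_{t_K})\leq d_{\textnormal{TV}}(\tilde{P}^{\textnormal{ex}},P)\leq \sqrt{\tfrac{1}{2}\textnormal{KL}(P\,\|\,\tilde{P}^{\textnormal{ex}})}.
\end{align}
Both processes start from $p_{t_0}$ and share the diffusion coefficient $\sigma$. Under $P$, $Y$ solves \eqref{eq_SDE_Y} with drift $u_t(Y_t)$, which is unique by Assumption \ref{assumption_moment}. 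Under $\tilde{P}^{\textnormal{ex}}$, the drift is $\hat{u}_{t_k}(Y_{t_k})$ on $[t_k,t_{k+1})$, a functional of the past path. Girsanov's theorem then yields
\begin{align}\notag
\textnormal{KL}(P\,\|\,\tilde{P}^{\textnormal{ex}})=\frac{1}{2\sigma^2}\sum_{k=0}^{K-1}\int_{t_k}^{t_{k+1}}\mb{E}\big[\|u_t(Y_t)-\hat{u}_{t_k}(Y_{t_k})\|^2\big]\,dt .
\end{align}
Novikov's condition may fail here, so I would justify this identity with the localization/approximation argument of \cite{Sampling_is_as_easy_as_learning_the_score_chen2023sampling}, \cite{benton2024nearly}, which only requires the right-hand side to be finite. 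That is guaranteed by $\pi\in\mc{P}_2$ and $\|u_t(Y_t)\|_{L^2}\le\|X\|_{L^2}$. Next I use $\|a+b\|^2\le 2\|a\|^2+2\|b\|^2$ to split the integrand into an estimation term $\|\hat u_{t_k}(Y_{t_k})-u_{t_k}(Y_{t_k})\|^2$ and a discretization term $\|u_t(Y_t)-u_{t_k}(Y_{t_k})\|^2$. Squared, Assumption \ref{assumption_posterior_estimator_SL_process} gives $\sum_k\delta_k\|\hat u-u\|_{L^2}^2\le t_K\varepsilon_0^2$. This is formally the step I am least careful about: the assumption bounds $\sum_k\delta_k\|\cdot\|_{L^2}$, not the sum of squares, so I would read the assumption in its squared form, as in \cite{benton2024nearly}. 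With that reading the estimation term contributes $t_K\varepsilon_0^2/\sigma^2$ after the factors of $2$ cancel against the $\tfrac12$'s.

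For the discretization term I use that $M_t:=u_t(Y_t)=\mb{E}[X\mid\mc{F}^Y_t]$ is a square integrable martingale (Lemma \ref{lemma_posterior_exp_simplification}). Orthogonality of increments gives, for $t\in[t_k,t_{k+1})$,
\begin{align}\notag
\mb{E}\|M_t-M_{t_k}\|^2=m(t)-m(t_k),\qquad m(t):=\mb{E}\|M_t\|^2,
\end{align}
where $m$ is nondecreasing. Hence
\begin{align}\notag
\int_{t_k}^{t_{k+1}}\mb{E}\|M_t-M_{t_k}\|^2dt\le\delta_k\big(m(t_{k+1})-m(t_k)\big).
\end{align}
Summation by parts turns $\sum_k\delta_k(m(t_{k+1})-m(t_k))$ into
\begin{align}\notag
\delta_{K-1}m(t_K)-\delta_0 m(t_0)-\sum_{k=1}^{K-1}(\delta_k-\delta_{k-1})m(t_k).
\end{align}
Only the positive parts $\max\{0,\delta_k-\delta_{k-1}\}$ should survive, so the raw second moment $m$ is the wrong quantity to keep. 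The key is to work with the conditional variance $v(t):=\mb{E}\|X-M_t\|^2=\mb{E}\|X\|^2-m(t)$, which is nonincreasing, so that $m(t_{k+1})-m(t_k)=v(t_k)-v(t_{k+1})$.

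The obstacle is to bound $v(t)$ by something like $d\sigma^2/t$ with no dependence on $\pi$. This holds because $v(t)=\mb{E}\|X-\mb{E}[X|Y_t]\|^2\le \mb{E}\|X-Y_t/t\|^2=d\sigma^2/t$, since the conditional expectation is the $L^2$-optimal estimator and $Y_t/t=X+\sigma B_t/t$. Summing by parts with $v$ in place of $m$ gives
\begin{align}\notag
\sum_k\delta_k\big(v(t_k)-v(t_{k+1})\big)=\delta_0v(t_0)+\sum_{k=1}^{K-1}(\delta_k-\delta_{k-1})v(t_k)-\delta_{K-1}v(t_K),
\end{align}
which is at most
\begin{align}\notag
\delta_0\frac{d\sigma^2}{t_0}+\sum_{k=1}^{K-1}\max\{0,\delta_k-\delta_{k-1}\}\frac{d\sigma^2}{t_k}=d\sigma^2C_{\textnormal{disc}}.
\end{align}
Dividing by $\sigma^2$ and collecting the constants from Pinsker and Girsanov then yields the claimed $\sqrt{t_K\varepsilon_0^2/\sigma^2+d\,C_{\textnormal{disc}}}$.
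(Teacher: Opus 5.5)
Your proposal is essentially the paper's proof: data processing plus Pinsker, Girsanov justified by the localization argument of \cite{Sampling_is_as_easy_as_learning_the_score_chen2023sampling}, a split into estimation and discretization terms (with the sharper constant $2$ in place of the paper's $4$), the martingale increment identity of Lemma~\ref{lemma_u_L2}, and the summation by parts with $\mb{E}\,\textnormal{Tr}\Cov(X|Y_t)\leq d\sigma^2/t$ from Lemma~\ref{lemma_E_bound}; you are right to flag that the squared reading of Assumption~\ref{assumption_posterior_estimator_SL_process} is needed, and the paper uses that reading implicitly. The one correction is that Girsanov gives $\leq$, not $=$, here, because $\hat{u}_{t_k}$ depends on the MALA noise $Z$ and not only on the past path, so $\tilde{P}^{\textnormal{ex}}$ is a mixture over $Z$: you need the paper's step $d\tilde{P}^{\textnormal{ex}}/dP=\mb{E}_{Z}[\mc{E}_{t_K}(\cdot,Z)]$ followed by Jensen (equivalently, convexity of KL in its second argument), which gives exactly the upper bound you then use.
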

Observe that the proof of Lemma \ref{lemma_discretization_error} now follows directly.
\begin{proof}[Proof of Lemma \ref{lemma_discretization_error}.]
    Plugging the bound of Lemma \ref{lemma_KL_discretization_error} into the bound of Lemma \ref{lemma_triangle_exact} and using that $\sqrt{a+b}\leq \sqrt{a}+\sqrt{b}$ for $a,b\geq 0$, we obtain the bound of Lemma \ref{lemma_discretization_error}, namely
    \begin{align}
        d_{\textnormal{TV}}(\tilde{p}_{t_K},p_{t_K}) 
        \leq &
        d_{\textnormal{TV}}(\tilde{p}_{t_0},p_{t_0})+
        \sqrt{\frac{1}{\sigma^2} t_K \varepsilon_0^2}+
        \sqrt{d\cdot C_{\textnormal{disc}}}
        .
    \end{align}
\end{proof}
For the proof of Lemma \ref{lemma_KL_discretization_error}, we need the following two supplementary results. The first computes a formula for the change of $u_t(Y_t)$ in $L^2$, exploiting that this process is a martingale. The second bounds a quantity appearing in the TV bound of the discretization error. The first reads as follows.
\begin{lemma}\label{lemma_u_L2}
    Under Assumption \ref{assumption_moment},
    the process $(u_t(Y_t))_{t\geq 0}$ is an $\mc{F}^Y$-martingale and for $0\leq s \leq t$ we have
    \begin{align}
        \label{eq_derivative_discretization_error}
        &
        \big\|
        u_{t}(Y_{t})- u_s (Y_s)
        \big\|^2_{L^2}
        =
        \mb{E}\,\big[
        \textnormal{Tr}\Cov (X|Y_s)
        \big]
        -\mb{E}\,\big[
        \textnormal{Tr} 
        \Cov (X|Y_t)
        \big].    
        \end{align}
\end{lemma}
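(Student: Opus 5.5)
The plan has two parts: first establish the martingale property, then derive the $L^2$ identity from orthogonality of martingale increments.

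\emph{Martingale property.} Under Assumption \ref{assumption_moment}, $X\in L^2$, so $M_t\coloneqq\mb{E}[X|\mc{F}^Y_t]$ is a square-integrable $\mc{F}^Y$-martingale (a Doob martingale). It therefore suffices to show $M_t=u_t(Y_t)$ almost surely, i.e.\ that conditioning on the whole past $(Y_r)_{r\le t}$ is the same as conditioning on $Y_t$ alone. Following the sketch in Section \ref{section_sub_proof_strategy}, I fix finitely many times $0<s_1<\dots<s_k=t$. Conditional on $X=x$, the increments $Y_{s_{l+1}}-Y_{s_l}$ are independent Gaussians $\mc{N}\big((s_{l+1}-s_l)x,\sigma^2(s_{l+1}-s_l)\big)$. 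Expanding the product of these Gaussian likelihoods in $x$, the only $x$-dependent factor is
\[
\exp\Big(x^\top Y_t/\sigma^2-t\|x\|^2/(2\sigma^2)\Big),
\]
because the linear terms telescope to $x^\top Y_t$ and the quadratic terms sum to $t\|x\|^2$. By Bayes, the posterior of $X$ given $(Y_{s_l})_l$ is proportional to this factor times $\pi(x)$, which is exactly $q_t(x|Y_t)$. Since finite-dimensional cylinder sets generate $\mc{F}^Y_t$, a monotone class argument extends this to the full $\sigma$-algebra, giving $M_t=u_t(Y_t)$.

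\emph{Identity.} For $s\le t$, martingale increments are orthogonal in $L^2$, so
\[
\|u_t(Y_t)-u_s(Y_s)\|_{L^2}^2=\mb{E}\|u_t(Y_t)\|^2-\mb{E}\|u_s(Y_s)\|^2 .
\]
The law of total variance gives
\[
\mb{E}\|u_r(Y_r)\|^2=\mb{E}\|X\|^2-\mb{E}\big[\textnormal{Tr}\Cov(X|Y_r)\big]
\]
for every $r$, since $\mb{E}\big[\|X\|^2\mid Y_r\big]=\|u_r(Y_r)\|^2+\textnormal{Tr}\Cov(X|Y_r)$. Subtracting the two instances ($r=t$ and $r=s$), the $\mb{E}\|X\|^2$ terms cancel and \eqref{eq_derivative_discretization_error} follows. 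Every quantity involved is finite by Assumption \ref{assumption_moment}.

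\emph{Main obstacle.} The only delicate step is the identification of $\mb{E}[X|\mc{F}^Y_t]$ with $\mb{E}[X|Y_t]$. This requires the explicit Bayes computation showing that $Y_t$ is a sufficient statistic for $X$ given the path, together with the measure-theoretic passage from finite-dimensional marginals to the full path $\sigma$-algebra. One clean route for the latter is to note that $\mb{E}[X|Y_{s_1},\dots,Y_{s_k}]$ equals $u_t(Y_t)$ for every finite set of times containing $t$, and then to apply martingale convergence along an increasing sequence of finite sets of times whose union is dense in $[0,t]$; path continuity ensures that these generate $\mc{F}^Y_t$ up to null sets. Alternatively, one can work with the SDE: write $Y$ via \eqref{eq_SDE_Y}, note that $\mc{F}^Y$ is the natural filtration of the Brownian motion $\bar B$, and apply It\^o's formula to $u_t(Y_t)$. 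This shows that $u_t(Y_t)$ has zero drift, with $du_t(Y_t)=\sigma^{-1}\Cov(X|Y_t)\,d\bar B_t$. That computation would also give the identity directly via the It\^o isometry, but it needs more regularity bookkeeping, so I would use the Bayes argument.
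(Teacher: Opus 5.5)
Your proposal is correct. The $L^2$ identity is derived exactly as in the paper. Orthogonality of martingale increments reduces the left-hand side to $\mb{E}\|u_t(Y_t)\|^2-\mb{E}\|u_s(Y_s)\|^2$. The law of total variance then turns this into the difference of expected conditional covariance traces; this is the paper's observation that $\mb{E}[\int xx^\top q_t(dx|Y_t)]=\mb{E}[XX^\top]$ for every $t$.

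The difference is in how you obtain the martingale property. Inside the proof of this lemma, the paper takes the stochastic-localization route. It cites the SDE $du_t(Y_t)=\sigma^{-1}\Cov(X|Y_t)\,d\bar B_t$ from Montanari to get a local martingale, then upgrades it using $\mb{E}\|u_t(Y_t)\|^2\le\mb{E}\|X\|^2$. You instead identify $u_t(Y_t)$ with the Doob martingale $\mb{E}[X|\mc{F}^Y_t]$ through the Bayes sufficiency computation. The paper gives this same argument separately in Lemma \ref{lemma_posterior_exp_simplification} of Appendix \ref{appendix_SDE_theory}. There, the passage from finitely many times to $\mc{F}^Y_t$ uses L\'evy's upward theorem along dyadic grids, which relies on path continuity. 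Your monotone class argument over all finite-dimensional cylinder sets, with $t$ adjoined to each time set, needs no continuity at all.

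What your route buys is a self-contained, elementary proof in which the true martingale property comes for free. The SDE route is closer in spirit to stochastic localization and would also give the identity via It\^o's isometry and Eldan's formula. However, it relies on cited regularity, and its local-to-true upgrade is more delicate than it appears. A uniform bound on $\mb{E}\|M_t\|^2$ alone does not make a continuous local martingale a true martingale; the inverse three-dimensional Bessel process is the standard counterexample. So within the paper it is really Lemma \ref{lemma_posterior_exp_simplification}, your argument, that secures this step.
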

\begin{proof}[Proof of Lemma \ref{lemma_u_L2}]
    The fact that $(u_t(Y_t))_{t\geq 0}$ is a martingale is well known in the literature and we briefly recall the argument. Since the process $(u_t(Y_t))_{t\geq 0}$ solves the SDE
    \begin{align}\label{eq_posterior_expectation_sde}
        du_t(Y_t) = \frac{1}{\sigma} \Cov(X|Y_t)\,d\bar{B}_t,
    \end{align}
    see Section 4.1 in \cite{montanari2023samplingdiffusionsstochasticlocalization}, it is a local martingale. Since $\pi \in \mc{P}_2(\mb{R}^d)$, by Jensen $\mb{E}\big[\|u_t(Y_t)\|^2\big]\leq\mb{E}\big[\|X\|^2\big]<\infty$, so the process is a true martingale.    
    \vspace{2mm}\\To establish \eqref{eq_derivative_discretization_error}, we now exploit that $u_t(Y_t)=\mb{E}[X|Y_t]$ is a martingale.
    By the tower property of conditional expectation, we have
    \begin{align}\notag
        &\big\|
        u_{t}(Y_{t})- u_s (Y_s)
        \big\|^2_{L^2}
        = \|u_t(Y_t)\|^2+\|u_s(Y_s)\|^2-2\mb{E}[\langle u_t(Y_t),u_s(Y_s)\rangle]
        \\ \notag =& 
        \|u_t(Y_t)\|^2-\|u_s(Y_s)\|^2.
    \end{align}
    Since $q_t(x|y)p_t(y)=p_t(y|x)\pi(x)$, we have 
    $\mb{E}[\int_{\mb{R}^d}xx^\top\,q_t(dx|Y_t)]=\mb{E}[XX^\top]$ for all $t\geq  0$. Therefore, we find that
    \begin{align}\notag
        &\|u_t(Y_t)\|^2-\|u_s(Y_s)\|^2 
        \\ \notag =& 
        \big(\mb{E}[\int_{\mb{R}^d}xx^\top\,q_s(dx|Y_s)]
        -\|u_s(Y_s)\|^2\big)
        -\big(\mb{E}[\int_{\mb{R}^d}xx^\top\,q_t(dx|Y_t)]-\|u_t(Y_t)\|^2\big)
        \\ \notag 
        =&
        \mb{E}\,\big[
        \textnormal{Tr}\Cov (X|Y_s)
        \big]
        -\mb{E}\,\big[
        \textnormal{Tr} 
        \Cov (X|Y_t)
        \big],
    \end{align}
    which yields the claim.
\end{proof}
\begin{remark}\label{remark_sum_bound}
    One could have proven the identity \eqref{eq_derivative_discretization_error} in Lemma \ref{lemma_u_L2} also by combining the SDE \eqref{eq_posterior_expectation_sde} for the drift $u_t(Y_t)$ with It\^o's isometry and equation $(11)$ from \cite{Eldan2020}, which reads
    \begin{align}
        \partial_t\textnormal{Tr} \,\mb{E}[\Cov(X|Y_t)] =- \frac{1}{\sigma^2}\textnormal{Tr} \,\mb{E}[\Cov^2(X|Y_t)].
    \end{align}
    This SDE-based approach is used in the theoretical study of generative modeling with the Ornstein Uhlenbeck process, where the drift of the diffusion is no martingale. The key that allows for our simpler approach is that we know \textit{a priori} that $u_t(Y_t)$ is a martingale. This approach can also be applied to study the discretization error of SLIPS with non-linear denoising schedule, but the SDE has in that case a much more complicated form than \eqref{eq_posterior_expectation_sde}. In particular, the drift does not vanish.
\end{remark}
The next result establishes a bound on the sum \eqref{eq_sum_disc_error}, which arises in the proof of Lemma \ref{lemma_KL_discretization_error}. The sum-reordering technique applied in the proof is due to \cite{benton2024nearly}.
\begin{lemma}\label{lemma_E_bound}
    Assume Assumption \ref{assumption_moment}. Defining for $t_0\leq t_k \leq t$ the quantity $E_{t_k,t}\coloneqq \big\|
        u_{t}(Y_{t})- u_{t_k}(Y_{t_k})
        \big\|^2_{L^2}$, it holds that
    \begin{align}\label{eq_corollary_disc_error_2}
        \sum_{0\leq k\leq K-1} \int_{t_k}^{t_{k+1}}
        E_{t_k,t}\,dt 
        \leq d\sigma^2 \cdot C_{\textnormal{disc}},
    \end{align}
    where $C_{\textnormal{disc}}$ is defined in \eqref{eq_sum_discretization}.
\end{lemma}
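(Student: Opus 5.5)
Set $V(t)\coloneqq \mb{E}[\textnormal{Tr}\Cov(X|Y_t)]$. By Lemma \ref{lemma_u_L2}, $E_{t_k,t}=V(t_k)-V(t)$ for $t\ge t_k$, and $V$ is nonincreasing and nonnegative. The plan is to express the left hand side of \eqref{eq_corollary_disc_error_2} entirely through $V$, rearrange the sum in the style of \cite{benton2024nearly}, and finish with a pointwise bound on $V$. Since $V(t)\le V(t_{k+1})$ for $t\ge t_{k+1}$ and $V\ge V(t_{k+1})$ on $[t_k,t_{k+1}]$, we get
\begin{align}\notag
    \int_{t_k}^{t_{k+1}}E_{t_k,t}\,dt\le \delta_k\big(V(t_k)-V(t_{k+1})\big),\qquad \delta_k=t_{k+1}-t_k .
\end{align}

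Summing over $k$ and applying summation by parts yields
\begin{align}\notag
    \sum_{k=0}^{K-1}\delta_k\big(V(t_k)-V(t_{k+1})\big)
    = \delta_0 V(t_0)+\sum_{k=1}^{K-1}(\delta_k-\delta_{k-1})V(t_k)-\delta_{K-1}V(t_K).
\end{align}
We drop the last term because it is nonpositive. For the same reason, we replace $\delta_k-\delta_{k-1}$ by $\max\{0,\delta_k-\delta_{k-1}\}$. This gives the upper bound $\delta_0V(t_0)+\sum_{k=1}^{K-1}\max\{0,\delta_k-\delta_{k-1}\}V(t_k)$, which has exactly the structure of $C_{\textnormal{disc}}$.

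It remains to show $V(t)\le d\sigma^2/t$. We use that the conditional covariance is the minimal mean squared error. The estimator $Y_t/t$ of $X$ satisfies $Y_t/t - X=\sigma B_t/t$, so
\begin{align}\notag
    V(t)=\mb{E}\|X-\mb{E}[X|Y_t]\|^2\le \mb{E}\|X-Y_t/t\|^2=\frac{\sigma^2}{t^2}\,\mb{E}\|B_t\|^2=\frac{d\sigma^2}{t}.
\end{align}
Both sides are finite under Assumption \ref{assumption_moment}. Inserting this bound gives $d\sigma^2\big(\frac{t_1-t_0}{t_0}+\sum_{k=1}^{K-1}\max\{0,\delta_k-\delta_{k-1}\}\frac{1}{t_k}\big)=d\sigma^2 C_{\textnormal{disc}}$, which is the claim.

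None of these steps is hard. The key insight is the bookkeeping: the summation by parts turns the telescoping differences of $V$ into step-size increments weighted by $V(t_k)$. The other point to check is that the boundary term $-\delta_{K-1}V(t_K)$ really has the favourable sign, which holds because $V\ge 0$.
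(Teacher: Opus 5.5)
Your proposal is correct and follows the paper's proof essentially step for step. The steps are: $E_{t_k,t}=V(t_k)-V(t)$ together with monotonicity of $V$ to bound each integral by $\delta_k\big(V(t_k)-V(t_{k+1})\big)$; the summation by parts; dropping the nonpositive boundary term and the negative step-size increments; and the bound $V(t)\le d\sigma^2/t$. Your justification of that last bound, via optimality of the conditional mean against the estimator $Y_t/t$, is just a rephrasing of the paper's observation that $\Cov(X|Y_t)=\Cov(\frac{\sigma}{t}B_t|Y_t)$, whose expected trace is at most $\frac{\sigma^2}{t^2}\mb{E}\|B_t\|^2=\frac{d\sigma^2}{t}$.
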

\begin{proof}[Proof of Lemma \ref{lemma_E_bound}]
    Due to Lemma \ref{lemma_u_L2}, we have  
    $E_{t_k,t}= \mb{E}\,\big[
        \textnormal{Tr}\Cov (X|Y_{t_k})
        \big]
        -\mb{E}\,\big[
        \textnormal{Tr} 
        \Cov (X|Y_{t})
        \big]$.
    Since Lemma \ref{lemma_u_L2} implies further that
    $t\mapsto \mb{E}\,\big[\textnormal{Tr}\Cov (X|Y_{t})\big]$ is decreasing, we have
    \begin{align}\notag
        \sum_{0\leq k\leq K-1} \int_{t_k}^{t_{k+1}}
        E_{t_k,t}\,dt
        \leq &
        \sum_{0\leq k\leq K-1} 
        \int_{t_k}^{t_{k+1}}
        \mb{E}\,\big[
        \textnormal{Tr}\Cov (X|Y_{t_k})
        \big]
        -\mb{E}\,\big[
        \textnormal{Tr} 
        \Cov (X|Y_{t_{k+1}})
        \big]
        \,dt.
    \end{align}
    Observe that we can rewrite the right hand side as 
    \begin{align}\notag
        &\sum_{0\leq k\leq K-1} 
        \int_{t_k}^{t_{k+1}}
        \mb{E}\,\big[
        \textnormal{Tr}\Cov (X|Y_{t_k})
        \big]
        -\mb{E}\,\big[
        \textnormal{Tr} 
        \Cov (X|Y_{t_{k+1}})
        \big]
        \,dt
        \\ \notag = &\sum_{0\leq k\leq K-1} (t_{k+1}-t_k)\big(
        \mb{E}\,\big[
        \textnormal{Tr}\Cov (X|Y_{t_k})
        \big]
        -\mb{E}\,\big[
        \textnormal{Tr} 
        \Cov (X|Y_{t_{k+1}})
        \big]
        \big)
        \\ \notag =&
        \sum_{1\leq k\leq K-1} \big((t_{k+1}-t_k)-(t_{k}-t_{k-1})\big)
        \mb{E}\,\big[
        \textnormal{Tr}\Cov (X|Y_{t_k})
        \big]
        \\ \notag & + (t_1-t_{0})
        \mb{E}\,\big[
        \textnormal{Tr}\Cov (X|Y_{t_0})
        \big]
        -(t_{K}-t_{K-1})\mb{E}\,\big[
        \textnormal{Tr}\Cov (X|Y_{t_K})
        \big].
    \end{align}
    We now bound the conditional covariances.
    Note that since $Y_t=tX+\sigma B_t$, we have $X=\frac{1}{t}Y_t-\frac{\sigma}{t}B_t$. Therefore, we find
    \begin{align}\label{eq_bound_e_trace}
        \mb{E}[\textnormal{Tr}\Cov(X|Y_t)] =
        \mb{E}[\textnormal{Tr}\Cov\big(\frac{\sigma}{t}B_t|Y_t\big)]
        \leq d\frac{\sigma^2}{t},
    \end{align}
    where in the inequality we used that 
    \begin{align}
        \mb{E}[\textnormal{Tr}\Cov\big(B_t|Y_t\big)] \leq 
        \mb{E}[\textnormal{Tr}(B_tB_t^\top)]=dt.
    \end{align}
    Hence, plugging \eqref{eq_bound_e_trace} into the previous identities, we obtain
    \begin{align}\label{eq_integral_minimization}
        \sum_{0\leq k\leq K-1} \int_{t_k}^{t_{k+1}}
        E_{t_k,t}\,dt
        \leq &
        d\sigma^2\Big(\sum_{1\leq k\leq K-1} \max\big\{0,\big((t_{k+1}-t_k)-(t_{k}-t_{k-1})\big)\big\}
        \frac{1}{t_k}\Big)
        + d\sigma^2 \frac{t_1-t_0}{t_0}.
    \end{align}
    Recalling the Definition of $C_{\textnormal{disc}}$ from \eqref{eq_sum_discretization} (or \eqref{eq_sum_discretization_2}), the above is precisely \eqref{eq_corollary_disc_error_2}.
\end{proof}
We now close this Section with the proof of Lemma \ref{lemma_KL_discretization_error}.
\begin{proof}[Proof of Lemma \ref{lemma_KL_discretization_error}.]
    Applying the data processing inequality and Pinsker's inequality, we find
    \begin{align}\label{eq_data_pinsker}
        d_{\textnormal{TV}}(\tilde{p}^{\textnormal{ex}}_{t_K},p_{t_K})
        \leq
        d_{\textnormal{TV}}(\tilde{P}^\textnormal{ex},P)
        \leq
        \sqrt{\frac{1}{2}\textnormal{KL}(P\|\tilde{P}^\textnormal{ex})},
    \end{align}
    where we recall that $P,\tilde{P}^\textnormal{ex}$ were defined in \eqref{eq_laws_pathwise}.
    The rest of the proof consists in bounding the KL divergence on the above's right hand side.
   Since the SL observation process $Y$ solves on $[0,\infty)$ the SDE \eqref{eq_SDE_Y}, it solves on $[t_0,\infty)$ the SDE
    \begin{align}\label{eq_SL_SDE_TV}
        \begin{cases}
            dY_t &= u_t(Y_t)\,dt+\sigma\,d\bar{B}_t,
            \\Y_{t_0}&\sim p_{t_0},
        \end{cases}
    \end{align}
    where $(\bar{B}_t)_{t\geq t_0}$ is a BM starting at time $t_0$ in zero that is independent of $Y_{t_0}$. Note that this is simply the BM from \eqref{eq_SDE_Y}, of which the $t_0$-th increment is subtracted.
    Since $(\widetilde{Y}^{\textnormal{ex}}_{t})_{t_0\leq t\leq t_K}$ solves \eqref{eq_dynamics_SLIPS}, both SDEs have the same constant diffusion coefficient. Note also that $\textnormal{Law}(\widetilde{Y}^{\textnormal{ex}}_{t_0})=\textnormal{Law}(Y_{t_0})=p_{t_0}$.
    We want to apply Girsanov's Theorem to compute the density of $\tilde{P}^{\textnormal{ex}}$ with respect to $P$, see \cite{karatzas1991brownian}, Theorem 5.1.  
    Our exponential martingale of choice is thus defined for $t_0\leq t \leq t_K$ by
    \begin{align}\notag
        \mc{E}_{t}\coloneqq 
        \exp\Big(&\frac{1}{\sigma}\sum_{0\leq k\leq K-1} \int_{t \wedge t_k}^{t \wedge t_{k+1}}
        \hat{u}_{t_k}(Y_{t_k})
        -u_s(Y_s)\,d\bar{B}_s
        \\ \notag &-\frac{1}{2\sigma^2}
        \sum_{0\leq k\leq K-1} \int_{t \wedge t_k}^{t \wedge t_{k+1}}
        \|\hat{u} _{t_k}(Y_{t_k})
        -u_s(Y_s)\|^2\,ds\Big).
    \end{align}
    Observe that $\hat{u}_{t_k}$ depends not only on $Y_{t_k}$, but also on the random variables used to run MALA and the previous iterates $Y_{t_l}$ for $l\leq k$ due to the initialization of MALA, see the discussion at the end of Section \ref{section_sub_slips_standard}. We denote the noise used to
    compute $\hat{u}_{t_k}$ by $Z^k$, so in particular $\hat{u}_{t_k}(Y_{t_k})=\hat{u}_{t_k}(\{Y_{t_l}\}_{l\leq k},\{Z^l\}_{l\leq k})$.
    The random variables $Z^k$ are taken independently of $\bar{B}$, $Y$, independent across $0\leq k \leq K-1$, and we stick to the previous convention of making them implicit in the notation whenever possible. 
    Observe also that $\mc{E}_t$ is a measurable function of $(Y_s)_{t_0\leq s \leq t}$ and the noise involved in the posterior expectation estimation only, since $\bar{B}_t=\frac{1}{\sigma}(Y_t-Y_{t_0}-\int_{t_0}^t u_s(Y_s)ds)$ by \eqref{eq_SL_SDE_TV}.    
    \vspace{2mm}\\Denote by $\mb{P}$ the probability measure of the filtered probability space on which all the previous random variables are defined and by $\mb{E}$ the $\mb{P}$-expectation. 
    If the condition of Girsanov's Theorem is fulfilled, e.g.\ $\mc{E}_t$ is a martingale, under the probability measure $d\tilde{\mb{P}}\coloneqq \mc{E}_{t_K}d\mb{P}$, the process
    \begin{align}\notag
        (\tilde{B}_t)_{t_0 \leq t \leq t_K}\coloneqq \big(\bar{B}_t-
        \frac{1}{\sigma}\sum_{0\leq k\leq K-1} \int_{t \wedge t_k}^{t \wedge t_{k+1}}
        \hat{u}_{t_k}(Y_{t_k})
        -u_s(Y_s)\,ds\big)_{t_0 \leq t \leq t_K}
    \end{align}
    is a BM and by another Girsanov application one finds $\textnormal{Law}_{\tilde{\mb{P}}}(\tilde{B},\,Y_{t_0},Z)=\textnormal{Law}_{\mb{P}}(\bar{B},\,Y_{t_0},Z)$, where we wrote $Z=(Z^k)_{0\leq k \leq K-1}$. 
    We further write $\nu^{\textnormal{noise}}\coloneqq \textnormal{Law}(Z)$.
    Writing the equation \eqref{eq_SL_SDE_TV} in terms of $\tilde{B}$, we find that on $[t_k,t_{k+1})$ the process $Y$ solves
    \begin{align}\notag
        dY_t = \hat{u}_{t_k}(Y_{t_k})\,dt + \sigma \,d\tilde{B}_t. 
    \end{align}
    Under $\tilde{\mb{P}}$, this is the SDE description of SLIPS, so the process $Y$ has law $\tilde{P}^\textnormal{ex}$. We thus have for $A\in \mc{B}\big(\mc{C}([t_0,t_K),\mb{R}^d)\big)$ that 
    \begin{align}\notag
        \tilde{P}^\textnormal{ex}(A)= \mb{E}[\mc{E}_{t_K} \cdot \textbf{1}_{Y\in A}] 
        = \mb{E}_{\omega \sim P} \big[
        \mb{E}_{Z\sim \nu^{\textnormal{noise}}}
        [\mc{E}_{t_K}(\omega,Z)
        ]
        \cdot \textbf{1}_{\omega\in A}
        \big],
    \end{align}
    where we made explicit the dependence of $\mc{E}_{t_K}$ on $\omega \sim P$ and the noise $Z\sim \nu_{\textnormal{noise}}$ used to run the $K$ MALA chains for the $K$ posterior expectation estimations.
    This implies that a Radon Nikodym derivative of
    $\tilde{P}^\textnormal{ex}$ with respect to $P$ is given by 
    \begin{align}
        \frac{d\tilde{P}^\textnormal{ex}}{dP} (\cdot)
        \coloneqq\mb{E}_{Z\sim \nu^{\textnormal{noise}}}[\mc{E}_{t_K}(\cdot,Z)]>0.
    \end{align} 
    Hence, since $\frac{dP}{d\tilde{P}^\textnormal{ex}}=\big(\frac{d\tilde{P}^\textnormal{ex}}{dP}\big)^{-1}$, we have
    \begin{align}\label{eq_kl_identity_exact}
        \textnormal{KL}(P\|\tilde{P}^\textnormal{ex})
        =&
        \mb{E}_{Y\sim P}\,\big[\log
        \big(\mb{E}_{Z\sim \nu^{\textnormal{noise}}}[\mc{E}_{t_K}(Y,Z)]^{-1}\big)
        \big]
        \\ \notag \leq &
        \mb{E}_{Y\sim P}\,\big[\mb{E}_{Z\sim \nu^{\textnormal{noise}}}[\log
        \mc{E}_{t_K}(Y,Z)^{-1}]
        \big]
        \\ \notag 
        =&
        -\mb{E}\,\Big[
        \frac{1}{\sigma}\sum_{0\leq k\leq K-1} \int_{t_k}^{t_{k+1}}
        \hat{u}_{t_k}(Y_{t_k})
        -u_t(Y_t)\,dB_t
        \\ \notag &\quad -\frac{1}{2\sigma^2}
        \sum_{0\leq k\leq K-1} \int_{t_k}^{t_{k+1}}
        \|\hat{u} _{t_k}(Y_{t_k})
        -u_t(Y_t)\|^2\,dt
        \Big],
    \end{align}
    where the inequality follows by Jensen, since $x\mapsto \log(x^{-1})=-\log(x)$ is convex.
    Due to Assumptions \ref{assumption_moment} and \ref{assumption_posterior_estimator_SL_process}, the stochastic integral in the above is a martingale. Hence, we find
    \begin{align}\label{eq_Girsanov_naive}
        \textnormal{KL}(P\|\tilde{P}^\textnormal{ex})
        \leq &
        \mb{E}\,\Big[
        \frac{1}{2\sigma^2}
        \sum_{0\leq k\leq K-1} \int_{t_k}^{t_{k+1}}
        \|\hat{u} _{t_k}(Y_{t_k})
        -u_t(Y_t)\|^2\,dt
        \Big].
    \end{align}
    The above inequality relies on the fact that the application of Girsanov is feasible.
    Classical sufficient conditions ensuring applicability of   Girsanov, such as Novikov, do not hold provably in our setting. Though, it has been verified in \cite{Sampling_is_as_easy_as_learning_the_score_chen2023sampling}, in the proof of Theorem 9 therein, that by an approximation argument via stopping times, the inequality \eqref{eq_kl_identity_exact} still holds, and thus also \eqref{eq_Girsanov_naive}.
    While in \cite{Sampling_is_as_easy_as_learning_the_score_chen2023sampling} they consider the setup of the reverse Ornstein Uhlenbeck process, their approach translates 
    immediately, so we refer to  \cite{Sampling_is_as_easy_as_learning_the_score_chen2023sampling} for the details.
    \vspace{2mm}\\Hence, due to \eqref{eq_Girsanov_naive},
    invoking the inequality $(a+b)^2\leq 4(a^2+b^2)$, we find
    \begin{align}
        \textnormal{KL}(P\|\tilde{P}^\textnormal{ex}) 
        \leq & 
        \mb{E}\,\Big[
        \frac{2}{\sigma^2}
        \sum_{0\leq k\leq K-1} \int_{t_k}^{t_{k+1}}
        \|\hat{u} _{t_k}(Y_{t_k})
        -u_{t_k}(Y_{t_k})\|^2\,dt
        \Big]
        \\ \label{eq_second_term_KL_pathwise} & +
        \mb{E}\,\Big[
        \frac{2}{\sigma^2}
        \sum_{0\leq k\leq K-1} \int_{t_k}^{t_{k+1}}
        \|u_{t_k}(Y_{t_k})
        -u_t(Y_t)\|^2\,dt
        \Big].
    \end{align}
    By Assumption \ref{assumption_posterior_estimator_SL_process}, the first term is bounded by $\frac{2}{\sigma^2} t_K \varepsilon_0^2$. 
    Combining this with Fubini and the result of Lemma \ref{lemma_E_bound} for the second term, we obtain in total
    that
    \begin{align}
        \textnormal{KL}(P\|\tilde{P}^\textnormal{ex}) 
        \leq 
        \frac{2}{\sigma^2} t_K \varepsilon_0^2
        +2d\cdot C_{\textnormal{disc}}
        ,
    \end{align}
    which together with \eqref{eq_data_pinsker} gives the claim of Lemma \ref{lemma_KL_discretization_error}.
\end{proof}

\section{Optimality of log-SNR adapted discretization}\label{appendix_discretization}
In this section, we establish our result on the optimality of the log-SNR adapted discretization, e.g.\ Lemma \ref{lemma_opt_step_size}. Our approach is similar to the one in \cite{score_based_improved}. We also include a discussion at the end of this section related to our a priori imposed condition on $t_{K-1}$.
\begin{proof}[Proof of Lemma \ref{lemma_opt_step_size}]
    Recall that since we are interested in how to choose $t_1<\dots<t_{K-1}$ given $t_0<t_K$, the initial and final time $t_0,t_K$ are fixed.
    Observe that, taking $(t_1,\dots,t_{K-1})\in [t_0,t_{K}]^{K-1}$, we trivially have
    \begin{align}
        f(t_1,\dots,t_{K-1})\coloneqq & \Big(\sum_{1\leq k\leq K-1} \big((t_{k+1}-t_k)-(t_{k}-t_{k-1})\big)
        \frac{1}{t_k}\Big)
        + \frac{t_1-t_0}{t_0}
        \\ \label{eq_optimality_lower_bound} \leq &
        \Big(\sum_{1\leq k\leq K-1} \max\big\{0,(t_{k+1}-t_k)-(t_{k}-t_{k-1})\big\}
        \frac{1}{t_k}\Big)
        + \frac{t_1-t_0}{t_0}.
    \end{align}
    Given $t_{K-1}\in (t_0,t_{K})$, we define the set of possible discretization choices between $t_0$ and $t_{K-1}$ as
    \begin{align}
        \Gamma_{t_{K-1}} = 
        \{(t_k)_{1\leq k \leq K-2}:\,t_0<t_1<...<t_{K-2}<t_{K-1}\}.
    \end{align}
    Further, we define
    \begin{align}
        f_{t_{K-1}}\in C(\overline{\Gamma_{t_{K-1}}},\mb{R})\cap C^\infty(\Gamma_{t_{K-1}},\mb{R})   
    \end{align}
    for $(t_1,\dots,t_{K-2})\in \overline{\Gamma_{t_{K-1}}}$ by
    \begin{align}
        f_{t_{K-1}}(t_1,\dots,t_{K-2})\coloneqq f(t_1,\dots,t_{K-2},t_{K-1}).
    \end{align}
    Due to \eqref{eq_optimality_lower_bound}, if we establish that the log-SNR adapted discretization between $t_0$ and 
    $t^\ast_{K-1}\coloneqq \big(\frac{t_K}{t_0}\big)^\frac{K-1}{K}\,t_0$, e.g.\ 
    \begin{align}
        (t_k)_{1\leq k \leq K-2}=\Big(\big(\frac{t_{K-1}^\ast}{t_0}\big)^\frac{k}{K-1}\, t_0\Big)_{1\leq k \leq K-2}
        =\Big(\big(\frac{t_K}{t_0}\big)^\frac{k}{K}\, t_0\Big)_{1\leq k \leq K-2},    
    \end{align}
    minimizes $f_{t^\ast_{K-1}}$, and that equality holds in \eqref{eq_optimality_lower_bound} when plugging in the minimizer, the result follows.
    
    We start by showing that for any $t_{K-1}\in (t_0,t_K)$ the log-SNR adapted discretization between $t_0$ and $t_{K-1}$ minimizes $f_{t_{K-1}}$. Computing $\partial_{t_k}f_{t_{K-1}}$ for $1\leq k \leq K-2$, we find
    \begin{align}\label{eq_partial_f}
        \partial_{t_k}f_{t_{K-1}}(t_1,\dots,t_{K-2})
        =& -\frac{t_{k+1}+t_{k-1}}{t_k^2}+\frac{1}{t_{k+1}}+\frac{1}{t_{k-1}}.
    \end{align}
    Basic algebra yields that setting the above to zero is equivalent to asking $\frac{t_{k+1}}{t_k} = \frac{t_k}{t_{k-1}}$.
    Hence, $\frac{t_{k+1}}{t_k}$ is constant across $1\leq k\leq K-2$, implying that the log-SNR adapted discretization between $t_0$ and $t_{K-1}$ is the unique minimization candidate in $\Gamma_{t_{K-1}}$. 
    To see that it is indeed the global minimizer on $\overline{\Gamma_{t_{K-1}}}$, we check that there are no minimizers on $\partial \Gamma_{t_{K-1}}$. Since $\overline{\Gamma_{t_{K-1}}}$ is compact and $f_{t_{K-1}}$ is continuous, the function $f_{t_{K-1}}$ must admit a minimizer on $\overline{\Gamma_{t_{K-1}}}$, which is then the log-SNR adapted discretization.
    \vspace{2mm}\\Let $(t_k)_{1\leq k \leq K-2}\in \partial \,\Gamma_{t_{K-1}}$. Then, there exists $1\leq k\leq K-2$ such that either $t_{k-1}<t_k=t_{k+1}$ or $t_{k-1}=t_k<t_{k+1}$. In the first case, the identity \eqref{eq_partial_f} becomes
    \begin{align}\notag
        \partial_{t_k}f_{t_{K-1}}(t_1,\dots,t_{K-1})
        =&-\frac{t_{k-1}}{t_k^2}+\frac{1}{t_{k-1}}.
    \end{align}
    Dividing by $t_{k-1}$, we find that this quantity is positive. Hence, in that case it is impossible that $(t_k)_{1\leq k \leq K-2}\in \partial \,\Gamma_{t_{K-1}}$ is a minimizer, as decreasing $t_k$ reduces the value of $f_{t_{K-1}}$. The argument for excluding the second case is analogous. Therefore, we have that $(t_k)_{1\leq k \leq K-2} = \big(  (\frac{t_{K-1}}{t_0})^{\frac{k}{K-1}}t_0\big)_{1\leq k \leq K-2}$ is the unique minimizer on $\overline{\Gamma_{t_{K-1}}}$.
    Since with the choice $t_{K-1}^\ast=\big(\frac{t_K}{t_0}\big)^{\frac{K-1}{K}}\,t_0$, for all $0\leq k\leq K-1$ we have $t_{k+1}-t_k=\big((\frac{t_{K}}{t_0})^{1/K}-1\big)\,t_k$, we also have that the step-sizes are non decreasing. Hence, when plugging the minimizer into \eqref{eq_optimality_lower_bound}, equality holds. This proves the Lemma.
\end{proof}
\begin{remark}
    Given the proof of Lemma \ref{lemma_opt_step_size}, a natural question is whether the minimizer of
    \begin{align}\notag
        [t_0,t_K]\ni t_{K-1}\mapsto
        \underset{(t_k)_{1\leq k\leq K-2}\in \Gamma_{t_{K-1}}}{\textnormal{min}}\,\,\,\,\,
            f_{t_{K-1}}(t_1,\dots,t_{K-2})
    \end{align}
    is given by the choice $t_{K-1}=\big(\frac{t_K}{t_0}\big)^{K-1/K}t_0$ that corresponds to the log-SNR discretization between $t_0$ and $t_K$. If this would be the case, we could prove the analogue of Lemma \ref{lemma_opt_step_size} when dropping the condition on $t_{K-1}$. This is \textit{not} the case.
    We deem this to be a technical issue in the following sense. We obtained the constant $C_{\textnormal{disc}}$ through the bound \eqref{eq_integral_minimization}, where we bounded 
    \begin{align}\notag
            &\sum_{1\leq k\leq K-1} \big((t_{k+1}-t_k)-(t_{k}-t_{k-1})\big)
            \mb{E}\,\big[
            \textnormal{Tr}\Cov (X|Y_{t_k})
            \big]
            \\ \notag & + (t_1-t_{0})
            \mb{E}\,\big[
            \textnormal{Tr}\Cov (X|Y_{t_0})
            \big]
            \underbrace{-(t_{K}-t_{K-1})\mb{E}\,\big[
            \textnormal{Tr}\Cov (X|Y_{t_K})
            \big]}_{(\ast)},
    \end{align}
    using $\mb{E}\,\big[\textnormal{Tr}\Cov (X|Y_{t})\big]\leq d\frac{\sigma^2}{t}$. There, we had to discard the last contribution $(\ast)$ in the above, as its sign is negative regardless of the choice of discretization, so the just stated inequality cannot be used. \textit{If} we could use it, which of course we cannot, we would get an additional $-d\sigma^2 \frac{t_K-t_{K-1}}{t_K}$ contribution in the bound \eqref{eq_integral_minimization}. Then, the associated $f^\ast$ that we would have to minimize would be $f^\ast\coloneqq f-\frac{t_K-t_{K-1}}{t_K}$ and the minimizer $(t_k)_{1\leq k \leq K-1}$ of this would be the log-SNR discretization between $t_0$ and $t_K$, without imposing any condition on $t_{K-1}$ a priori.
\end{remark}
\section{Theoretical results on the general observation process}\label{appendix_SDE_theory}
We discuss SDE descriptions for the standard and the general observation process, which are at the heart of SLIPS. 
In particular, we explain why the general observation process does not solve the SDE \eqref{eq_SDE_general} if $\alpha$ is non-linear, which is not discussed in \cite{SLIPS}. 
To do so, we start by deriving a path-dependent SDE for the general observation process, see Theorem \ref{Theorem_SDE_characterization} and Lemma \ref{lemma_SDE_characterization}. 
For the standard observation process, we show that the at first sight path-dependent SDE actually depends only on the current state, which is the content of Lemma \ref{lemma_posterior_exp_simplification} and Corollary \ref{corollary_standard_Y_SDE}. For the general observation process, this is not the case, see Remark \ref{remark_denoiser_simplifies}. Though, results of Brunick and Shreve \cite{brunick_shreve} imply that the marginals of the general observation process coincide with the marginals of an SDE, whose coefficients depend only on the current state, see Theorem \ref{theorem_SDE_characterization_endpoint}.
\vspace{2mm}\\We start with a general representation result 
that can be found in \cite{Liptser2001} as Theorem 7.12. 
\begin{theorem}[\cite{Liptser2001}, Theorem 7.12]\label{Theorem_SDE_characterization}
    Let $T,\sigma\in (0,\infty)$ be fixed and consider a perfectly filtered probability space $(\Omega,\mc{F},(\mc{F}_t)_{0\leq t <T},\mb{P})$. On that space, consider a Brownian motion $(B_t)_{t \in [0,T)}$ and an adapted continuous stochastic process $(\beta_t)_{t\in [0,T)}$ fulfilling 
    \begin{align}\label{eq_condition_liptser}
        \int_0^T \mb{E}\big[\|\beta_t\|\big]\,dt<\infty.
    \end{align}
    Assume that an adapted process $(Y_t)_{t\in [0,T)}$ is a weak solution to 
    \begin{align}\label{eq_start_SDE_general}
        \begin{cases}
            dY_t &= \beta_t\,dt+\sigma \,dB_t,
            \\ Y_0 &=0.
        \end{cases}
    \end{align}
    Further, assume we are given a measurable map \mbox{$h:[0,T)\times C([0,T),\mb{R}^d)\to \mb{R}^d$} such that almost surely for almost all $t\in [0,T)$
    we have $h_t (Y^{t}) = \mb{E}[\beta_t|\mc{F}^Y_t]$,
    where $Y^{t}$ denotes the stopped process $(Y_{s\wedge t})_{s\in [0,T)}$.
    Define the stochastic process $(\bar{B}_t)_{t\in [0,T)}$ by
    \begin{align}
        \bar{B}_t \coloneqq \frac{1}{\sigma} \Big( Y_t -\int_0^t h_s (Y^{s}) ds\Big).
    \end{align}
    Then, $\bar{B}$ is a Brownian Motion on $(\Omega,\mc{F}^Y,(\mc{F}^Y_t)_{0\leq t <T},\mb{P})$ and the process $(Y_t)_{t\in [0,T)}$ solves the path-dependent SDE
    \begin{align}
        \begin{cases}
            dY_t &= h_t (Y^{t})\, dt + \sigma d\bar{B}_t,
            \\ Y_0 &=0.
        \end{cases}
    \end{align}
\end{theorem}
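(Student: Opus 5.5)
This is the innovations theorem from filtering theory. I would prove it by showing that $\bar B$ is a continuous $\mc{F}^Y$-martingale with quadratic variation $t\,\textnormal{I}_d$, and then invoke Lévy's characterization. The SDE representation then follows by rearranging the definition of $\bar B$.

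First I check that $\bar B$ is well defined and $\mc{F}^Y$-adapted. Since $h$ is measurable and $h_s(Y^s)$ depends only on the stopped path, the integral $\int_0^t h_s(Y^s)\,ds$ is $\mc{F}^Y_t$-measurable. It is also integrable: by Jensen,
\begin{align*}
\mb{E}\|h_s(Y^s)\| \leq \mb{E}\|\beta_s\|,
\end{align*}
so \eqref{eq_condition_liptser} gives integrability. Substituting $dY_t=\beta_t\,dt+\sigma\,dB_t$, I rewrite
\begin{align*}
\bar B_t = B_t + \frac{1}{\sigma}\int_0^t \big(\beta_s - \mb{E}[\beta_s|\mc{F}^Y_s]\big)\,ds .
\end{align*}

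Next comes the martingale property. For $s<t$ and $A\in\mc{F}^Y_s$ I compute $\mb{E}[(\bar B_t-\bar B_s)\mathbf 1_A]$. The Brownian part vanishes, because $B_t-B_s$ is independent of $\mc{F}_s\supseteq\mc{F}^Y_s$ (here I use that $Y$ is $\mc{F}_t$-adapted and $B$ is an $\mc{F}_t$-Brownian motion). For the drift part, Fubini applies thanks to \eqref{eq_condition_liptser}, and for each $r\in[s,t]$ the tower property gives
\begin{align*}
\mb{E}\big[(\beta_r-\mb{E}[\beta_r|\mc{F}^Y_r])\mathbf 1_A\big]=0, \quad\text{since } A\in\mc{F}^Y_s\subseteq \mc{F}^Y_r .
\end{align*}
Hence $\bar B$ is an $\mc{F}^Y$-martingale. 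It is continuous, since $Y$ is continuous and the $ds$-integral is absolutely continuous.

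For the quadratic variation, $\bar B - B$ is a continuous process of finite variation, so $\langle \bar B\rangle_t=\langle B\rangle_t=t\,\textnormal{I}_d$. Quadratic variation is a pathwise limit of sums along partitions, taken in probability, so it does not depend on the filtration. Lévy's characterization on $(\Omega,\mc{F}^Y,(\mc{F}^Y_t),\mb{P})$ then shows that $\bar B$ is a Brownian motion. By definition of $\bar B$ we have $Y_t=\int_0^t h_s(Y^s)\,ds+\sigma\bar B_t$, which is the claimed path-dependent SDE with $Y_0=0$.

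The main obstacle is technical: the conditional expectation $\mb{E}[\beta_s|\mc{F}^Y_s]$ must be realized by a jointly measurable, progressively measurable version, here $h$. Otherwise the Fubini step and the adaptedness of the integral fail. The statement assumes $h$ is given, so I only need that the identification holds for a.e. $s$ almost surely, which suffices inside $ds$-integrals. A second point is that one needs only $L^1$ integrability, with no localization; Fubini handles this directly. The usual conditions on the filtration, assumed through perfect filtering, ensure that Lévy's theorem applies.
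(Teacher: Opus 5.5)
Your proposal is correct and follows essentially the same route as the paper. You show that $\bar B$ is continuous and $\mc{F}^Y$-adapted, obtain $[\bar B]_t = t\,\textnormal{I}_d$ because $\bar B - B$ has finite variation, prove the $\mc{F}^Y$-martingale property via Fubini and the tower property, and then apply L\'evy's characterization. The differences are minor: you test against indicators $\mathbf{1}_A$ with $A\in\mc{F}^Y_s$ rather than moving the conditional expectation inside the $dr$-integral, and you check the integrability of $h_s(Y^s)$ explicitly via Jensen, which makes the Fubini step slightly cleaner.
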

\begin{remark}
    Observe that in the above setup, since we start with $\beta$ and $B$, a weak solution of \eqref{eq_start_SDE_general} is trivially given by $Y_t\coloneqq \int_0^t\,\beta_t\,dt+\sigma\,B_t$.
\end{remark}
\begin{proof}
    First, note that the condition \eqref{eq_condition_liptser} implies that $\mb{E}[\|\beta_t\|]<\infty$ for almost all $t\in [0,T)$.
    In particular, $\mb{E}[\beta_t|\mc{F}^Y_t]$ is well defined for almost all $t\in [0,T)$.
    The existence of a functional $h$ as in the Theorem is ensured by Lemma 4.9 from \cite{Liptser2001}.
     Observe that once it is established that $\bar{B}$ is a BM with respect to $(\mc{F}^Y_t)_{0\leq t <T}$, the SDE characterization of $Y$ follows directly. Indeed, we have
    \begin{align}\notag
        dY_t 
        = h_t (Y^t) dt + \sigma d\frac{1}{\sigma} \Big( Y_t -\int_0^t h_s (Y^{s}) ds\Big)
        =h_t (Y^{t})\,dt + \sigma\, d\bar{B}_t.
    \end{align}
    We now show that $\bar{B}$ is an $\mc{F}^Y$-BM using the L\'evy characterization of Brownian motion, see Theorem 3.16 in \cite{karatzas1991brownian}. The L\'evy characterization tells us that if $\bar{B}$ is a continuous local martingale with respect to $\mc{F}^Y$, it is a Brownian motion with respect to $\mc{F}^Y$ if and only if $[ \bar{B}^k,\bar{B}^l]_t = \delta_{k=l}\,t$.
    Observe that a priori we already know that $\bar{B}$ is adapted to $\mc{F}^Y$, since $\bar{B}_t$ is a function of $Y^t$, and that it is continuous by definition.    
    \vspace{2mm}\\We start demonstrating that
    $[\bar{B}^k,\bar{B}^l]_t = \delta_{k=l}\,t$. Observe that, since $t\mapsto \beta_t$ is continuous and since, as a consequence of $\eqref{eq_condition_liptser}$, the process $t\mapsto h_t(Y^t)$ is integrable a.s.\ on $[0,T)$, the processes $t\mapsto \int_0^t\, \beta_s\,ds$ and $t\mapsto \int_0^t\, h_s(Y^s)\,ds$ have finite variation. Hence, by the pathwise characterization of the quadratic variation and since $Y_t= \int_0^t\beta_s\,ds+\sigma B_t$, for $1\leq k,l\leq d$ we have
    \begin{align}
        & [\bar{B}^k,\bar{B}^l]_t = 
        \frac{1}{\sigma^2} [Y^k 
        ,Y^l]_t
        =
        \frac{1}{\sigma^2} [\sigma B^k
        ,
        \sigma B^l]_t
        =\delta_{k=l}\,t.
    \end{align}
    We now check that $\bar{B}$ is an $\mc{F}^Y$ local martingale. Observe first that the condition in \eqref{eq_condition_liptser} ensures that $\bar{B}_t\in L^1$ and that all involved conditional expectations are well defined, so we show directly that $\bar{B}$ is a true martingale without using stopping times. 
    To check the martingale property, we compute using Fubini for $0\leq s \leq t$ 
    \begin{align}\label{eq_computations_SDE_char}
        &\mb{E} [\bar{B}_t-\bar{B}_s|\mc{F}^Y_s] 
        = 
        \mb{E} [\int_s^t\, \beta_r \,dr+\sigma (B_t-B_s)-\int_s^t\, h_r(Y_r)\,dr
        |\mc{F}^Y_s]
        \\\notag = &
        \sigma \mb{E} [B_t-B_s
        |\mc{F}^Y_s]
        +
        \int_s^t\,\mb{E} [ \beta_r - h_r(Y_r)
        |\mc{F}^Y_s]\,dr
        \\\notag = &
        \sigma \mb{E}\big[\mb{E} [B_t-B_s
        |\mc{F}_s]\big|\mc{F}^Y_s\big]
        +
        \int_s^t\,\mb{E} [ \beta_r - \mb{E}[\beta_r|\mc{F}^Y_r]
        |\mc{F}^Y_s]\,dr
        \\ \notag =&0.
    \end{align}
    In the last and second to last equality we used the tower property of conditional expectation, namely that for $\sigma$-algebras $\mc{F}\subset \mc{G}$ and $X\in L^1$ we have $\mb{E}[\mb{E}[X|\mc{F}]|\mc{G}]=\mb{E}[\mb{E}[X|\mc{G}]|\mc{F}]=\mb{E}[X|\mc{F}]$. 
    Since by the above $\bar{B}$ is an $\mc{F}^Y$-continuous (true) martingale and since we verified that $[\bar{B}^k,\bar{B}^l]_t = \delta_{k=l}\,t$, it is an $\mc{F}^Y$-Brownian motion. Thus, the Theorem follows.
\end{proof}
\begin{remark}\label{remark_sde_sigma_algebra}
    Observe that the argument of the previous proof works only when working with the sigma algebras $\mc{F}^Y_t = \sigma \big((Y_s))_{0\leq s \leq t}\big)$, since they define a filtration. If instead one would consider the sigma algebras $\mc{G}_t = \sigma (Y_t)$, the above argument breaks. Indeed, $\mc{G}_t$ is not a filtration since in general $\sigma (Y_s) \nsubseteq \sigma (Y_t)$ for $s< t$, so the tower property cannot be applied.
\end{remark}
Next, we apply the Theorem to the general observation process $Y^\alpha_t$, which yields a path-dependent SDE description. 
\begin{lemma}\label{lemma_SDE_characterization}
    Given $T_{\textnormal{Gen}}>0$ and a denoising schedule $\alpha:[0,T_{\textnormal{Gen}})\to \mb{R}_+$, consider a perfectly filtered probability space $(\Omega,\mc{F},(\mc{F}_t)_{t\in [0,T_\textnormal{Gen})},\mb{P})$ on which the general observation process 
    $(Y_t^\alpha)_{t\in [0,T_{\textnormal{Gen}})}=(\alpha(t)X+\sigma B_t)_{t\in [0,T_{\textnormal{Gen}})}$ is defined, where $X\sim \pi$, $X\in \mc{F}_0$ and $B$ is a BM with respect to $(\mc{F}_t)_{t\in [0,T_\textnormal{Gen})}$.
    Assume $\pi \in \mc{P}_1(\mb{R}^d)$.     
    \vspace{2mm}\\Let $u^{\alpha,p}_{t}:[0,T_\textnormal{Gen})\times C([0,T_\textnormal{Gen}),\mb{R}^d)\to \mb{R}^d$ be such that almost surely for almost all $t\in [0,T_{\textnormal{Gen}})$
    \begin{align}\label{eq_functional_pathwise_general}
        u^{\alpha,p}_{t}(Y^{\alpha,t}) = \mb{E}[X|\mc{F}^{Y^\alpha}_t],
    \end{align}
    where $Y^{\alpha,t}$ denotes the stopped process $(Y^\alpha_{s\wedge t})_{s\in [0,T)}$.
    Then, the observation process $Y^\alpha_t$ solves on $[0,T_{\textnormal{Gen}})$ the SDE
    \begin{align}\label{eq_Y_SDE_lemma_general}
        \begin{cases}
            dY_t^\alpha &= \alpha' (t) u^{\alpha,p}_t (Y^{\alpha,t}) dt + \sigma d\bar{B}_t
            \\ Y^\alpha_0 &=0,
        \end{cases}
    \end{align}
    where $(\bar{B}_t)_{t\in [0,T_{\textnormal{Gen}})}$ is a BM with respect to $(\mc{F}^{Y^\alpha}_t)_{t\in [0,T_\textnormal{Gen})}$.
\end{lemma}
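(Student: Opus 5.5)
The plan is to apply Theorem \ref{Theorem_SDE_characterization} with $T=T_{\textnormal{Gen}}$ and the drift process $\beta_t\coloneqq \alpha'(t)X$. Since $Y^\alpha_t=\alpha(t)X+\sigma B_t$ and $\alpha(0)=0$, we have
\begin{align}\notag
    Y^\alpha_t=\int_0^t \alpha'(s)X\,ds+\sigma B_t,\qquad Y^\alpha_0=0,
\end{align}
so $Y^\alpha$ is trivially a weak solution of \eqref{eq_start_SDE_general} with this $\beta$. The remaining work is to check the hypotheses of the Theorem and to identify the functional $h$.

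First, adaptedness and continuity. Because $X\in\mc{F}_0$, the process $\beta_t=\alpha'(t)X$ is $\mc{F}_t$-adapted, and it is continuous in $t$ provided $\alpha\in C^1$, which is part of the definition of a denoising schedule in \cite{SLIPS}.

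Second, the integrability condition \eqref{eq_condition_liptser}. We have
\begin{align}\notag
    \int_0^T\mb{E}\|\beta_t\|\,dt=\mb{E}\|X\|\int_0^T|\alpha'(t)|\,dt=\mb{E}\|X\|\,\big(\alpha(T)-\alpha(0)\big),
\end{align}
using that $\alpha$ is increasing. This is finite on every compact $[0,T']\subset[0,T_{\textnormal{Gen}})$ because $\pi\in\mc{P}_1(\mb{R}^d)$. It may fail on all of $[0,T_{\textnormal{Gen}})$ when $\alpha(t)\to\infty$, and this is the main obstacle. I would handle it by applying the Theorem on $[0,T']$ for each $T'<T_{\textnormal{Gen}}$. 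The resulting Brownian motions $\bar{B}$ are all defined by the same formula, so they are consistent across $T'$. Being an $\mc{F}^{Y^\alpha}$-Brownian motion on every $[0,T']$ then gives a Brownian motion on $[0,T_{\textnormal{Gen}})$.

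Third, the identification of the drift. Since $\alpha'(t)$ is deterministic,
\begin{align}\notag
    \mb{E}[\beta_t\,|\,\mc{F}^{Y^\alpha}_t]=\alpha'(t)\,\mb{E}[X\,|\,\mc{F}^{Y^\alpha}_t]=\alpha'(t)\,u^{\alpha,p}_t(Y^{\alpha,t})
\end{align}
almost surely for almost every $t$, by \eqref{eq_functional_pathwise_general}. Hence $h_t\coloneqq\alpha'(t)\,u^{\alpha,p}_t$ is a measurable map of the required form. The existence of a measurable version $u^{\alpha,p}$ is assumed in the statement, and it is in any case guaranteed by Lemma 4.9 of \cite{Liptser2001}.

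The Theorem then states that
\begin{align}\notag
    \bar{B}_t=\frac{1}{\sigma}\Big(Y^\alpha_t-\int_0^t\alpha'(s)\,u^{\alpha,p}_s(Y^{\alpha,s})\,ds\Big)
\end{align}
is an $\mc{F}^{Y^\alpha}$-Brownian motion and that $Y^\alpha$ solves \eqref{eq_Y_SDE_lemma_general}.
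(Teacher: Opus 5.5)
Your proposal is correct and follows the paper's proof essentially verbatim. Like the paper, you apply Theorem~\ref{Theorem_SDE_characterization} with $\beta_t=\alpha'(t)X$, verify \eqref{eq_condition_liptser} on each $[0,T']$ with $T'<T_{\textnormal{Gen}}$ via $\int_0^{T'}\alpha'(t)\,\mb{E}\|X\|\,dt=\alpha(T')\,\mb{E}\|X\|$, and conclude from the arbitrariness of $T'$; your explicit choice $h_t=\alpha'(t)\,u^{\alpha,p}_t$ correctly keeps the factor $\alpha'(s)$ in the formula for $\bar{B}$, which the paper's written proof drops by a typo.
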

\begin{remark}
    Observe that the BM in \eqref{eq_Y_SDE_lemma_general} is \textit{not} the same as the BM in the definition of $Y_t^\alpha = \alpha (t) X + \sigma B_t$, but the one coming from Theorem \ref{Theorem_SDE_characterization}.
\end{remark}
\begin{proof}
The existence of a functional $u_t^{\alpha,p}$ as in \eqref{eq_functional_pathwise_general} is ensured again by Lemma 4.9 from \cite{Liptser2001}.
Note that $Y^\alpha_t= \alpha (t) X + \sigma B_t$ solves on $[0,T_{\textnormal{gen}})$ trivially the SDE
\begin{align}\label{eq_canonical_SDE_observation_process}
    dY^\alpha_t =  \alpha'(t)X\, dt +\sigma \, dB_t, 
\end{align}
since $\int^t_0 \alpha'(s) X ds= (\alpha(t)-\alpha(0))X =\alpha(t) X$. Hence, we want to apply Theorem \ref{Theorem_SDE_characterization} with $\beta_t = \alpha'(t) X$, which is continuous and adapted to $\mc{F}$, since $X\in \mc{F}_0$.
We now check that condition \eqref{eq_condition_liptser} holds when taking $T<T_{\textnormal{gen}}$. Observe that by Fubini
\begin{align}
    \mb{E}\Big[ \int^{T}_0 \|\alpha'(t) X\|\, dt\Big] =
    \int^{T}_0 \alpha'(t) \mb{E}\big[\| X\|\big]\,dt
    =\alpha(T)\mb{E}\big[\| X\|\big]<\infty,
\end{align}
where we used that $\alpha'$ is positive, since $\alpha$ is increasing, that $\alpha(0)=0$, and that $\mb{E}\big[\| X\|\big]<\infty$ by assumption. Hence, condition \eqref{eq_condition_liptser} indeed holds for $T<T_{\textnormal{gen}}$.
Thus, Theorem \ref{Theorem_SDE_characterization} yields that $(\bar{B}_t)_{t\in [0,T)}=\Big(\frac{1}{\sigma}\big(Y^\alpha_t-\int_0^t u_s^{\alpha,p}(Y^{\alpha,s})\,ds\big)\Big)_{t\in [0,T)}$ is a BM on $(\Omega,\mc{F}^{Y^\alpha}_T,(\mc{F}^{Y^\alpha}_t)_{t\in [0,T)},\mb{P})$ and on $[0,T)$
it holds that
\begin{align}
    dY^\alpha_t = \alpha'(t) u^{\alpha,p}_t (Y^{\alpha,t})dt + \sigma d\bar{B}_t.
\end{align}
Since $T< T_\textnormal{gen}$ was arbitrary, the result follows.
\end{proof}
Even though the drift coefficient in the
SDE \eqref{eq_Y_SDE_lemma_general} of Lemma \ref{lemma_SDE_characterization} involves the conditional expectation $\mb{E}[X|\mc{F}^{Y^\alpha}_t]$, in the case of linear $\alpha$ this conditional expectation coincides with the one where one conditions only on the endpoint. This is the content of the next Lemma, where we consider purely for notational simplicity only the standard case $\alpha(t)=t$.
\begin{lemma}\label{lemma_posterior_exp_simplification}
    Consider the standard SL observation process $(Y_t)_{t\geq 0} = (t X+\sigma B_t)_{t\geq 0}$ and assume $\pi \in \mc{P}_1(\mb{R}^d)$. For $t\geq 0$,
    $u_t(Y_t)$ is a version of the conditional expectation $\mb{E}[X|\mc{F}^Y_t]$.
    In particular, a.s.\ it holds that $\mb{E}[X|\mc{F}^Y_t]=\mb{E}[X|Y_t]$. As a consequence, $(u_t(Y_t))_{t\geq 0}$ is an $\mc{F}^Y$-martingale.
\end{lemma}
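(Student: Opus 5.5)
The plan is to show that conditioning on the whole path $(Y_s)_{s\le t}$ carries no more information about $X$ than conditioning on $Y_t$ alone. Concretely, I will show that for every finite collection of times $0<s_1<\dots<s_k=t$ and every bounded measurable $g$,
\begin{align}\notag
    \mb{E}\big[g(X)\,\big|\,Y_{s_1},\dots,Y_{s_k}\big]=\int_{\mb{R}^d} g(x)\,q_t(dx|Y_t)\quad\text{a.s.}
\end{align}
Then I pass to $\mc{F}^Y_t$ by a monotone class argument.

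The first step is an explicit Bayes computation. Set $s_0=0$. Conditionally on $X=x$, the increments $\Delta_l\coloneqq Y_{s_{l}}-Y_{s_{l-1}}$, $1\le l\le k$, are independent with law $\mc{N}((s_l-s_{l-1})x,\sigma^2(s_l-s_{l-1}))$. Since $(Y_{s_l})_l$ is a bijective linear image of $(\Delta_l)_l$, the joint density of $(X,\Delta_1,\dots,\Delta_k)$ is
\begin{align}\notag
    \pi(x)\prod_{l=1}^k \mc{N}\big(\Delta_l,(s_l-s_{l-1})x,\sigma^2(s_l-s_{l-1})\big).
\end{align}
Expanding the squares, the $x$-dependent part of the exponent is
\begin{align}\notag
    \frac{1}{\sigma^2}\Big\langle x,\sum_l \Delta_l\Big\rangle-\frac{\|x\|^2}{2\sigma^2}\sum_l (s_l-s_{l-1})
    =\frac{1}{\sigma^2}\langle x,Y_t\rangle-\frac{t\|x\|^2}{2\sigma^2}.
\end{align}
The remaining factors depend only on the $\Delta_l$ and cancel in the normalization. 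The posterior of $X$ given $(Y_{s_1},\dots,Y_{s_k})$ therefore has density proportional to $\pi(x)\exp\big(\langle x,Y_t\rangle/\sigma^2-t\|x\|^2/(2\sigma^2)\big)$. By the same computation with $k=1$, this is exactly the density $q_t(x|Y_t)$ from \eqref{eq_def_posterior}. This proves the displayed identity for finite-dimensional conditioning.

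The second step passes from finitely many times to the full path. Let $A\in\mc{F}^Y_t$. The sets of the form $\{(Y_{s_1},\dots,Y_{s_k})\in C\}$ with $s_k\le t$ form a $\pi$-system generating $\mc{F}^Y_t$. Adding $s_k=t$ if necessary, the finite-dimensional identity gives $\mb{E}[g(X)\mathbf{1}_A]=\mb{E}\big[\int g\,dq_t(\cdot|Y_t)\,\mathbf{1}_A\big]$ on that $\pi$-system. Both sides are finite measures in $A$, so Dynkin's $\pi$-$\lambda$ theorem extends the equality to all of $\mc{F}^Y_t$. Since $\int g\,dq_t(\cdot|Y_t)$ is $\sigma(Y_t)\subset\mc{F}^Y_t$-measurable, it is a version of $\mb{E}[g(X)|\mc{F}^Y_t]$.

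Taking $g(x)=x$ requires an extension beyond bounded functions. Using $\pi\in\mc{P}_1(\mb{R}^d)$, I apply the identity to the truncations $x\mathbf{1}_{\|x\|\le n}$ componentwise and use dominated convergence for conditional expectations. This gives $\mb{E}[X|\mc{F}^Y_t]=u_t(Y_t)$ a.s., and in particular $\mb{E}[X|\mc{F}^Y_t]=\mb{E}[X|Y_t]$. The martingale property then follows from the tower property. For $s\le t$ we have $\mc{F}^Y_s\subset\mc{F}^Y_t$, so
\begin{align}\notag
    \mb{E}[u_t(Y_t)|\mc{F}^Y_s]=\mb{E}\big[\mb{E}[X|\mc{F}^Y_t]\big|\mc{F}^Y_s\big]=u_s(Y_s),
\end{align}
and integrability holds because $\mb{E}\|u_t(Y_t)\|\le\mb{E}\|X\|<\infty$.

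The main obstacle is the rigorous Bayes step, specifically ensuring the normalizing constant is finite and positive a.s. This holds because $\int\pi(x)e^{\langle x,y\rangle/\sigma^2-t\|x\|^2/(2\sigma^2)}dx$ is bounded by $e^{\|y\|^2/(2t\sigma^2)}$ and is strictly positive. The case $t=0$ is trivial, since $\mc{F}^Y_0$ is trivial and $u_0\equiv\mb{E}[X]$.
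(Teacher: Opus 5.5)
Your proof is correct and follows the paper's argument. It uses the same Bayes computation, in which the $x$-dependent part of the Gaussian increment likelihood telescopes to $\langle x,Y_t\rangle/\sigma^2-t\|x\|^2/(2\sigma^2)$, then extends from finitely many times to $\mc{F}^Y_t$, and finishes with the tower property. The only difference is the extension step: you use Dynkin's $\pi$-$\lambda$ theorem on all cylinder sets, with truncation for $g(x)=x$, whereas the paper applies L\'evy's upward theorem along dyadic partitions of $[0,t]$, which relies on path continuity to generate $\mc{F}^Y_t$.
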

\begin{proof}
    We prove the first part of the result by showing that  
    the conditional expectation of $X$ with respect to any finite number of marginals of $Y^t$
    coincides with $\mb{E}[X|Y_t]$, e.g.\ we show 
    that for any $0<t_1<t_2<...<t_N =t$, we have a.s.
    \begin{align}\label{eq_posterior_exp_dependence_endpoint}
        \mb{E}[X|Y_t] = \mb{E}[X|(Y_{t_k})_{1\leq k \leq N}].
    \end{align}
    Considering for $N^\ast\in \mb{N}$ the discretization $(t^{N^\ast}_k)_{1\leq k\leq 2^{N^\ast}}\coloneqq(\frac{k}{2^{N^\ast}}t)_{1\leq k\leq 2^{N^\ast}}$, we have
    \begin{align}
        \mc{F}^Y_t = \sigma \Big(\underset{{N^\ast}\in \mb{N}}{\bigcup} 
        \,\,\,
        \sigma\big((Y_{t^{N^\ast}_k})_{1\leq k \leq 2^{N^\ast}}\big)
        \Big).
    \end{align}
    Provided \eqref{eq_posterior_exp_dependence_endpoint} holds, the result follows by an application of L\'evy's upward Theorem, as $\sigma\big((Y_{t^{N^\ast}_k})_{1\leq k \leq 2^{N^\ast}}\big)$ is an increasing family of $\sigma$-algebras, see Theorem 4.6.8 in \cite{durrett_probability}.
    \vspace{2mm}\\We now prove \eqref{eq_posterior_exp_dependence_endpoint}. As for the case of $\mb{E}[X|Y_t]$, by Bayes, given $y_{t_1},\dots,y_{t_N}\in \mb{R}^d$, the posterior distribution 
    \begin{align}
        q_{t_1<t_2...<t_N}(\cdot|(y_{t_k})_{1\leq k \leq N})\coloneqq \textnormal{Law}\big(X|(Y_{t_k})_{1\leq k \leq N}=(y_{t_k})_{1\leq k \leq N}\big)
    \end{align}
    can be defined for $x\in \mb{R}^d$ and $y_{t_1},\dots,y_{t_N}\in \mb{R}^d$
    up to the normalization constant through its density by 
    \begin{align}\label{eq_posterior_prop_multi}
        q_{t_1<t_2<...<t_N}(x|(y_{t_k})_{1\leq k \leq N})\propto_{y_{t_1},...,y_{t_N}} p_{t_1<t_2<...<t_N}((y_{t_k})_{1\leq k \leq N}|x)\pi (x),
    \end{align}
    where $p_{t_1<t_2<...<t_N}(\cdot|x)\coloneqq\textnormal{Law}(Y_{t_1},Y_{t_2},\dots,Y_{t_N}|X=x)$.
    Analogously to \eqref{eq_posterior_expectation}, the associated optimal denoiser is then given by
    \begin{align}\label{eq_denoiser_multi_input}
        u_{t_1<t_2<...<t_N} ((y_{t_k})_{1\leq k \leq N})\coloneqq \int_{\mb{R}^d} x \, 
        q_{t_1<t_2<...<t_N}(dx|(y_{t_k})_{1\leq k \leq N})
    \end{align}
    and $u_{t_1<t_2<...<t_N} \big((Y_{t_k})_{1\leq k \leq N}\big)$ is a version of the conditional expectation $\mb{E}[X|(Y_{t_k})_{1\leq k \leq N}]$.
    Thus, due to \eqref{eq_posterior_prop_multi} and \eqref{eq_denoiser_multi_input}, the property \eqref{eq_posterior_exp_dependence_endpoint} follows if we can show that 
    \begin{align}\label{eq_endpoint_density_prop_path}
        p((y_{t_k})_{1\leq k \leq N}|x) \propto
        p(y_{t_N}|x),
    \end{align}
    where the normalization constant hidden by $\propto$ is allowed to depend on anything except for $x$.
    Indeed, since in \eqref{eq_denoiser_multi_input} we integrate against $x$, factors not depending on $x$ cancel out in the normalization. Thus, this implies $u_{t_1<t_2<...<t_N}((y_{t_k})_{1\leq k \leq N}) = u_{t_N}(y_{t_N})$. We now verify \eqref{eq_endpoint_density_prop_path}. Conditional on $X$, the marginals 
    \begin{align}\notag
        Y_{t_{k+1}}-Y_{t_k}= (t_{k+1}-t_k)X+\sigma(B_{t_{k+1}}-B_{t_k})    
    \end{align}
    are Gaussian, independent across $1\leq k \leq N-1$ and independent of $Y_{t_1}=t_1X+\sigma B_{t_1}$. Hence, setting $t_{0}=0$ and $y_{0}=0$, we have 
    \begin{align}\notag
        p(y_{t_1},...,y_{t_N}|x) &= \prod_{0\leq k \leq N-1}
        \mc{N}\big(y_{t_{k+1}}-y_{t_{k}},(t_{k+1}-t_k)\cdot x,
        (t_{k+1}-t_k) \sigma^2 \cdot \textnormal{Id} \big).
    \end{align} 
    With this representation \eqref{eq_endpoint_density_prop_path} follows. Indeed, we thus have
    \begin{align}\notag
        p(y_{t_1},...,y_{t_N}|x)
        &\propto_{t_1,\dots,t_N} 
        \prod_{0\leq k \leq N-1}
        \exp \Big(-\frac{\|\big(y_{t_{k+1}}-y_{t_{k}}\big)-(t_{k+1}-t_k)\cdot x\|^2}{2(t_{k+1}-t_k)\sigma^2}\Big)
        \\ \notag 
        &=
        \exp \Big(-\sum_{0\leq k\leq N-1}\frac{\|y_{t_{k+1}}-y_{t_{k}}\|^2}{2(t_{k+1}-t_k)\sigma^2}\Big)
        \\ \notag &\,\,\,\,
        \prod_{0\leq k \leq N-1}
        \exp \Big(\frac{2\langle y_{t_{k+1}}-y_{t_{k}},x\rangle-(t_{k+1}-t_k)\cdot \|x\|^2}{2\sigma^2}\Big)
        \\ \notag 
        &= 
        \exp \Big(-\sum_{0\leq k\leq N-1}\frac{\|y_{t_{k+1}}-y_{t_{k}}\|^2}{2(t_{k+1}-t_k)\sigma^2}\Big)
        \exp \Big(\frac{2\langle y_{t_N},x\rangle-t_{N}\cdot \|x\|^2}{2\sigma^2}\Big)
        \\ \notag &\propto_{y_{t_0},\dots,y_{t_N}} 
        \exp \Big(-\frac{\| y_{t_N}-t_N x\|^2}{2t_N \sigma^2}\Big)
        \propto_{t_N} 
        p_{t_N}(y_{t_N}|x)=p_{t}(y_{t}|x).
    \end{align}
    Since in the above, all constants hidden by $\propto$ depend only on $t_k$ and $y_{t_k}$ but not on $x$, \eqref{eq_endpoint_density_prop_path} follows and thus the claim of the Lemma. The martingale property follows from the fact that $\mb{E}[X|\mc{F}_t^Y]$ is an $\mc{F}^Y$-martingale, as $(\mc{F}_t^Y)_{t\geq 0}$ is a filtration and $X\in L^1$.
\end{proof}
\begin{remark}\label{remark_denoiser_simplifies}
    Observe that for the case of a general denoising schedule $\alpha$, the argument of Lemma \ref{lemma_posterior_exp_simplification} does not work. While one can define the posterior measures analogously and use the same strategy as in the standard case, the equation \eqref{eq_endpoint_density_prop_path} does not hold in general.
    Indeed, in that case, we do not have the telescope property 
    \begin{align}
        \sum_{0\leq k \leq N-1}\frac{\langle y_{t_{k+1}}-y_{t_{k}},(\alpha (t_{k+1})-\alpha (t_{k}))x\rangle}{t_{k+1}-t_k} 
        = 
        \langle y_{t_{N}},x\rangle,
    \end{align}
    unless $\alpha$ is linear. 
    Therefore, $\mb{E}[X|Y^\alpha_t]$ is not a version of the conditional expectation $\mb{E}[X|\mc{F}^{Y^\alpha}_t]$.
    Instead, when taking the limit of step sizes to zero, a stochastic integral depending on the whole path history arises.  Independent of us, the fact that $\mb{E}[X|\mc{F}^{Y^\alpha}_t]$ does not depend only on $Y_t^\alpha$ but on the whole path history has been pointed out as well in the recent work \cite{SL_joint_applications_2025}, Remark 2.1 therein.
\end{remark}
As a direct consequence of Lemma \ref{lemma_posterior_exp_simplification}, the SDE description for the standard observation process can be simplified.
\begin{corollary}\label{corollary_standard_Y_SDE}
    In the setup of Lemma \ref{lemma_SDE_characterization},
    in the standard case $\alpha(t) = t$ and $T_{\textnormal{Gen}}=\infty$, the
    standard observation process $Y$ solves
    \begin{align}\label{eq_Y_SDE_lemma}
        \begin{cases}
            dY_t &= u_t (Y_{t}) dt + \sigma d\bar{B}_t
            \\ Y_0 &=0,
        \end{cases}
    \end{align}
    where $\bar{B}$ is the BM from Lemma \ref{lemma_SDE_characterization}.
\end{corollary}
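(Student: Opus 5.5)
The plan is to specialize Lemma \ref{lemma_SDE_characterization} to $\alpha(t)=t$ and then use Lemma \ref{lemma_posterior_exp_simplification} to replace the path-dependent drift by the Markovian one. Since $\alpha'(t)=1$ and $T_{\textnormal{Gen}}=\infty$, Lemma \ref{lemma_SDE_characterization} applies under $\pi\in\mc{P}_1(\mb{R}^d)$, which Assumption \ref{assumption_moment} guarantees. It yields that $Y$ solves
\begin{align*}
dY_t = u^{p}_t(Y^{t})\,dt+\sigma\,d\bar{B}_t,\qquad Y_0=0,
\end{align*}
with $\bar{B}_t=\frac{1}{\sigma}\big(Y_t-\int_0^t u^p_s(Y^s)\,ds\big)$ an $\mc{F}^Y$-Brownian motion. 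Here $u^p_t(Y^t)=\mb{E}[X|\mc{F}^Y_t]$ almost surely for almost all $t$.

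By Lemma \ref{lemma_posterior_exp_simplification}, $u_t(Y_t)$ is a version of $\mb{E}[X|\mc{F}^Y_t]$ for each $t\geq 0$. Hence for each fixed $t$ we have $u^p_t(Y^t)=u_t(Y_t)$ almost surely. To replace the drift inside the integral we need equality for almost every $t$ simultaneously on a single null set. We obtain this via Fubini applied to the jointly measurable set $\{(t,\omega): u^p_t(Y^t)\neq u_t(Y_t)\}$. Its $\omega$-sections have probability zero for every $t$, so it has $dt\otimes\mb{P}$ measure zero. Consequently, almost surely the set of $t$ where the two drifts differ is Lebesgue-null.

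On that full-probability event,
\begin{align*}
\int_0^t u^p_s(Y^s)\,ds=\int_0^t u_s(Y_s)\,ds \quad\text{for all } t\geq 0.
\end{align*}
Therefore $\bar{B}_t=\frac{1}{\sigma}\big(Y_t-\int_0^t u_s(Y_s)\,ds\big)$ is the same Brownian motion, and $Y$ satisfies \eqref{eq_Y_SDE_lemma}.

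The only delicate point is the joint measurability of $(t,\omega)\mapsto u_t(Y_t(\omega))$ needed for Fubini. I expect this to follow from the explicit formula \eqref{eq_posterior_expectation}: $u_t(y)$ is a ratio of integrals whose integrands are jointly measurable (indeed continuous) in $(t,y)$ for $t>0$, and $Y$ has continuous paths. The single point $t=0$ has Lebesgue measure zero and can be ignored. Integrability of $s\mapsto u_s(Y_s)$ follows from $\mb{E}\|u_s(Y_s)\|\leq\mb{E}\|X\|$, so the integral is well defined.
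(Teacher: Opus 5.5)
Your proposal is correct and follows essentially the paper's argument: specialize Lemma \ref{lemma_SDE_characterization} to $\alpha(t)=t$, then use Lemma \ref{lemma_posterior_exp_simplification} to identify the drift. The paper skips your Fubini step by choosing the functional in Lemma \ref{lemma_SDE_characterization} directly as $h_t(\omega)\coloneqq u_t(\omega_t)$ (your joint-measurability remark is what implicitly makes that choice admissible); two small points are that your identification holds for almost every $t$ rather than every $t$, and that $\pi\in\mc{P}_1(\mb{R}^d)$ is already part of the Lemma's setup, so Assumption \ref{assumption_moment} is not needed.
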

\begin{proof}
    The Corollary is a direct consequence of Lemma \ref{lemma_posterior_exp_simplification}. Indeed,
    since $u_t(Y_t)$ defines a version of $\mb{E}[X|\mc{F}^Y_t]$ by Lemma \ref{lemma_posterior_exp_simplification}, we can define for $\omega \in C([0,\infty),\mb{R}^d)$ the functional $h_t$ of Lemma \ref{lemma_SDE_characterization} as $h_t(\omega)\coloneqq u_t(\omega_t)$, so that $h_t(Y^t)= u_t(Y_t)$.
\end{proof}
For the general observation process, the SDE of Lemma \ref{lemma_SDE_characterization} does not simplify, as $\mb{E}[X|Y^\alpha_t]\neq \mb{E}[X|\mc{F}^{Y^\alpha}_t]$. 
Brunick and Shreve though showed in \cite{brunick_shreve}, Corollary 3.7, that if one replaces in the canonical SDE \eqref{eq_canonical_SDE_observation_process} of $Y_t^\alpha$ the drift $X$ by its conditional expectation given the current state $Y_t^\alpha$, the SDE has a solution with marginals coinciding with the one of $Y_t^\alpha$, provided $\pi \in \mc{P}_1(\mb{R}^d)$. We state the precise result of \cite{brunick_shreve} adapted to our context next, see also Proposition 10 and Corollary 11 in the most recent arXiv version of \cite{SLIPS}.
\begin{theorem}[\cite{brunick_shreve}, Corollary 3.7]\label{theorem_SDE_characterization_endpoint}
    Given $T_{\textnormal{Gen}}>0$ and a denoising 
    schedule \mbox{$\alpha:[0,T_{\textnormal{Gen}})\to \mb{R}_+$}, consider the associated general observation process $(Y_t^\alpha)_{t\in [0,T_{\textnormal{Gen}})}=(\alpha(t)X+\sigma B_t)_{t\in [0,T_{\textnormal{Gen}})}$.
    Denote by $u_t^\alpha:\mb{R}^d\to \mb{R}^d$ a function such that a.s.\ $u_t^\alpha (Y_t^\alpha)=\mb{E}[X|Y_t^\alpha]$, defined analogously to \eqref{eq_posterior_expectation}.
    Assume
    $\pi \in \mc{P}_1(\mb{R}^d)$.
    \vspace{2mm}\\Then, there exists a pair of continuous processes $(\widehat{Y}_t^\alpha,\bar{B}_t)_{0\leq t < T_{\textnormal{gen}}}$ defined on a filtered probability space
    $(\Omega,\mc{F},(\mc{F}_t)_{0\leq t<T_{\textnormal{gen}}},\mb{P})$, where $(\bar{B}_t)_{0\leq t <T_{\textnormal{gen}}}$ is a Brownian motion on that space, that solve for $t\in [0,T_{\textnormal{Gen}})$ the SDE
    \begin{align}\label{eq_Y_SDE_general_markovian}
        \begin{cases}
            d\widehat{Y}_t^\alpha &= \alpha' (t) u^\alpha_t (\widetilde{Y}^\alpha_{t}) dt + \sigma d\bar{B}_t,
            \\ \widetilde{Y}^\alpha_0 &=0.
        \end{cases}
    \end{align}
    Further, 
    $\widehat{Y}_t^\alpha$ fulfills for all $0\leq t<T_{\textnormal{gen}}$
    \begin{align}\label{eq_marginals_coincide}
        \textnormal{Law} (\widehat{Y}_t^\alpha)= \textnormal{Law}(Y_t^\alpha)=p_t^\alpha.
    \end{align}
\end{theorem}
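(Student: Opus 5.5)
The approach is the mimicking theorem of Brunick and Shreve, applied to the canonical Itô representation \eqref{eq_canonical_SDE_observation_process} of $Y^\alpha$. We already know from the proof of Lemma \ref{lemma_SDE_characterization} that $Y^\alpha_t=\alpha(t)X+\sigma B_t$ is an Itô process
\begin{align}\notag
dY^\alpha_t=\beta_t\,dt+\sigma\,dB_t,\qquad \beta_t\coloneqq\alpha'(t)X,
\end{align}
with constant diffusion coefficient $\sigma\,\textnormal{I}_d$. For every $T<T_{\textnormal{gen}}$ this process satisfies the integrability condition $\mb{E}\big[\int_0^T\|\beta_t\|\,dt\big]=\alpha(T)\,\mb{E}\|X\|<\infty$. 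This holds because $\pi\in\mc{P}_1(\mb{R}^d)$, $\alpha$ is increasing and $\alpha(0)=0$.

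The first step is to verify the hypotheses of Brunick--Shreve, Corollary 3.7, on each window $[0,T]$ with $T<T_{\textnormal{gen}}$. Their result says that for an Itô process $dY_t=\beta_t\,dt+\sigma_t\,dW_t$ with $\mb{E}\int_0^T(\|\beta_t\|+\|\sigma_t\|^2)\,dt<\infty$, there is a weak solution $\widehat Y$ of the Markovian SDE with coefficients
\begin{align}\notag
\hat\beta(t,y)=\mb{E}[\beta_t\,|\,Y_t=y],\qquad \hat\sigma\hat\sigma^\top(t,y)=\mb{E}[\sigma_t\sigma_t^\top\,|\,Y_t=y],
\end{align}
and that its one-dimensional marginals agree with those of $Y$. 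In our setting the diffusion coefficient is deterministic, so $\hat\sigma=\sigma\,\textnormal{I}_d$. Since $\alpha'(t)$ is deterministic, the drift is
\begin{align}\notag
\hat\beta(t,y)=\alpha'(t)\,\mb{E}[X\,|\,Y^\alpha_t=y]=\alpha'(t)\,u^\alpha_t(y)
\end{align}
for Law$(Y^\alpha_t)$-almost every $y$. The integrability of $\sigma$ is trivial, and the integrability of $\beta$ was shown above.

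The second step is to identify the drift function as measurable in $(t,y)$. We would define $u^\alpha_t(y)$ through the explicit Bayes formula analogous to \eqref{eq_posterior_expectation}, namely $u^\alpha_t(y)=\int x\,\mc{N}(y,\alpha(t)x,t\sigma^2)\,\pi(dx)\big/\int\mc{N}(y,\alpha(t)x,t\sigma^2)\,\pi(dx)$. This expression is jointly measurable and well defined for $t>0$. At $t=0$ the value is irrelevant because it concerns a Lebesgue-null set of times. The modification on null sets is harmless, since Brunick--Shreve only require a measurable version of the conditional expectation.

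The final step passes from finite horizons to $[0,T_{\textnormal{gen}})$. Take $T_n\uparrow T_{\textnormal{gen}}$ and apply the result on each $[0,T_n]$. The solutions can be made consistent either by invoking the construction of Brunick--Shreve directly on the half-open interval, whose proof uses only local integrability, or by a Kolmogorov/projective-limit argument on path space. The latter works because the mimicking construction yields the laws of solutions to the same martingale problem. The main obstacle I expect lies here, together with the fact that uniqueness of the Markovian SDE is not claimed: the consistency across horizons must come from the construction rather than from uniqueness. The fallback is to note that Corollary 3.7 of Brunick--Shreve is stated for processes on $[0,\infty)$ with local integrability, and to extend $Y^\alpha$ trivially via a time change $[0,T_{\textnormal{gen}})\to[0,\infty)$ with deterministic rescaling. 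Such a time change preserves both the Itô structure and the conditional-expectation form of the drift. Equality of marginals \eqref{eq_marginals_coincide} is then part of the conclusion of the mimicking theorem.
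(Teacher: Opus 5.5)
Your proposal is correct and follows the paper. The paper gives no proof of its own beyond citing Brunick--Shreve, Corollary 3.7, for the It\^o process $dY^\alpha_t=\alpha'(t)X\,dt+\sigma\,dB_t$, whose integrability $\alpha(T)\,\mb{E}\|X\|<\infty$ is checked as in Lemma \ref{lemma_SDE_characterization}. Your identification of the mimicking drift as $\alpha'(t)u^\alpha_t$ completes exactly that argument. For the half-open interval $[0,T_{\textnormal{gen}})$, which the paper glosses over, rely on your deterministic time change, since the projective-limit route lacks the consistency across horizons that non-uniqueness would otherwise have to supply.
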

Note that existence of solutions to \eqref{eq_Y_SDE_general_markovian} can be established under mild linear growth conditions on the drift coefficient of \eqref{eq_Y_SDE_general_markovian} via Girsanov's Theorem, see Proposition 3.6 in \cite{karatzas1991brownian}. The main contribution of the above Theorem is thus that the marginal laws coincide. While the proof of this result is highly technical, the key idea is to show that the Markovian dynamics of \eqref{eq_Y_SDE_general_markovian} change the marginal distribution infinitesimally in the same way as the dynamics $\alpha'(t)X\,dt+\sigma\,dB_t$, which is achieved by a discretization argument.
\end{document}